\let\oldaddcontentsline\addcontentsline
\newcommand{\stoptocentries}{\renewcommand{\addcontentsline}[3]{}}
\newcommand{\starttocentries}{\let\addcontentsline\oldaddcontentsline}
\pgfplotsset{width=10cm,compat=1.9}
\newcommand{\R}{\mathbb{R}}
\newcommand{\norm}[1]{\lVert#1\rVert}
\renewcommand{\phi}{\varphi}
\newtheorem{lemma}{Lemma}
\newtheorem{theorem}{Theorem}
\newtheorem{assumption}{Assumption}
\newtheorem{proposition}{Proposition}
\newtheorem{remark}{Remark}
\newtheorem{corollary}{Corollary}
\newenvironment{talign*}
 {\csname align*\endcsname}
 {\endalign}
\newenvironment{talign}
 {\csname align\endcsname}
 {\endalign}
\definecolor{lightpurple}{RGB}{168, 141, 201}
\newcommand{\fX}{\bm{X}}
\newcommand*{\eg}{{\it e.g.}\@\xspace}
\newcommand*{\ie}{{\it i.e.}\@\xspace}
\DeclarePairedDelimiterX{\infdivx}[2]{(}{)}{%
  #1\;\delimsize\|\;#2%
}
\newcommand*{\tran}{^{\mkern-1.5mu\mathsf{T}}}
\title{A Taxonomy of Loss Functions for Stochastic Optimal Control}
\author[1]{Carles Domingo-Enrich}
\affil[1]{Microsoft Research New England\footnote{Work done partially while at Meta FAIR.}}
\begin{document}

\maketitle

%\tableofcontents

\begin{abstract}%
  % Stochastic optimal control, which has the goal of driving the behavior of noisy systems, is broadly applicable in science, engineering and artificial intelligence. Our work introduces Stochastic Optimal Control Matching (SOCM), a novel Iterative Diffusion Optimization (IDO) technique for stochastic optimal control that stems from the same philosophy as the conditional score matching loss for diffusion models. That is, the control is learned via a least squares problem by trying to fit a matching vector field. The training loss, which is closely connected to the cross-entropy loss, is optimized with respect to both the control function and a family of reparameterization matrices which appear in the matching vector field. The optimization with respect to the reparameterization matrices aims at minimizing the variance of the matching vector field. Experimentally, our algorithm achieves lower error than all the existing IDO techniques for stochastic optimal control for three out of four control problems, in some cases by an order of magnitude. The key idea underlying SOCM is the path-wise reparameterization trick, a novel technique that is of independent interest, e.g., for generative modeling. The code can be found at \url{https://github.com/facebookresearch/SOC-matching}.
  Stochastic optimal control (SOC) aims to direct the behavior of noisy systems and has widespread applications in science, engineering, and artificial intelligence. In particular, reward fine-tuning of diffusion and flow matching models and sampling from unnormalized methods can be recast as SOC problems. A recent work has introduced Adjoint Matching \citep{domingoenrich2024adjoint}, a loss function for SOC problems that vastly outperforms existing loss functions in the reward fine-tuning setup. The goal of this work is to clarify the connections between all the existing (and some new) SOC loss functions. Namely, we show that SOC loss functions can be grouped into classes that share the same gradient in expectation, which means that their optimization landscape is the same; they only differ in their gradient variance. We perform simple SOC experiments to understand the strengths and weaknesses of different loss functions.
\end{abstract}

\stoptocentries

\section{Introduction}
\label{sec:intro}

% Stochastic optimal control aims to drive the behavior of a noisy system in order to minimize a given cost. It has myriad applications in science and engineering: examples include the simulation of rare events in molecular dynamics \citep{hartmann2014characterization,hartmann2012efficient,zhang2014applications,holdijk2023stochastic}, finance and economics \citep{pham2009continuous,fleming2004stochastic}, stochastic filtering and data assimilation \citep{mitter1996filtering,reich2019data}, nonconvex optimization \citep{chaudhari2018deep}, power systems and energy markets \citep{belloni2004power,powell2016tutorial}, and robotics \citep{theodorou2011aniterative,gorodetsky2018high}. Stochastic optimal has also been very impactful in neighboring fields such as mean-field games \citep{carmona2018probabilistic}, optimal transport \citep{villani2003topics,villani2008optimal}, backward stochastic differential equations (BSDEs) \citep{carmona2016lectures} and large deviations \citep{feng2006large}. 
Stochastic optimal control (SOC) focuses on guiding the behavior of a system affected by randomness to minimize a specified cost function. This approach has diverse applications across various fields in science and engineering, such as simulating rare events in molecular dynamics \citep{hartmann2014characterization,hartmann2012efficient,zhang2014applications,holdijk2023stochastic}, finance and economics \citep{pham2009continuous,fleming2004stochastic}, stochastic filtering and data assimilation \citep{mitter1996filtering,reich2019data}, nonconvex optimization \citep{chaudhari2018deep}, and power systems and energy markets \citep{belloni2004power,powell2016tutorial}. It is also widely used in robotics \citep{theodorou2011aniterative,gorodetsky2018high}. Additionally, stochastic optimal control has significantly influenced related areas like mean-field games \citep{carmona2018probabilistic}, optimal transport \citep{villani2003topics,villani2008optimal}, backward stochastic differential equations (BSDEs) \citep{carmona2016lectures}, and large deviations \citep{feng2006large}.

Within machine learning, a relevant and recent application of SOC is performing reward fine-tuning of diffusion and flow matching models \citep{domingoenrich2024adjoint,uehara2024finetuning}. Reward fine-tuning is a task in which we are given access to a pretrained generative model that generates $p_{\mathrm{base}}$ and a reward model $r$, and we want to modify the pretrained model so that it generates $p^*(x) \propto p_{\mathrm{base}}(x) \exp(r(x))$. When the model generates discrete data, such as in large language models, the way to achieve this is through KL-regularized reinforcement learning (RL) \citep{ziegler2020finetuning,stiennon2020learning,ouyang2022training,bai2022training}, and the default method is proximal policy optimization (PPO; \cite{schulman2017proximal}). Since SOC is equivalent to KL-regularized RL for systems with continuous-time trajectories described by stochastic differential equations (see e.g. \cite[App.~C]{domingoenrich2024adjoint}), any classical KL-regularized RL algorithms can be translated to an SOC algorithm. However, SOC problems have a distinctive feature with respect to classical KL-regularized RL ones: the reward model $r(x)$ is differentiable because its input is continuous. Exploiting this feature is critical to obtain high-performing deep learning SOC algorithms: \cite[Tab.~1,2,3]{uehara2024finetuning} shows that even a non-state-of-the-art SOC-native algorithm clearly outperforms PPO with KL regularization. 

The state-of-the-art SOC (and overall) method for diffusion and flow reward fine-tuning is Adjoint Matching \citep{domingoenrich2024adjoint}. In fact, Adjoint Matching is the only SOC-based reward fine-tuning algorithm that outperforms alternative fine-tuning algorithms such as DRaFT \citep{clark2024directly} and ReFL \citep{xu2023imagereward}, and does so by a big margin (see \cite[Tab.~2,~Fig.~5]{domingoenrich2024adjoint}). Their work also claims that a particular noise schedule must be used when fine-tuning such models, denoted as the memoryless noise schedule, regardless of the noise schedule that one intends to use at inference time.

These recent developments beg for a systematic overview and comparison of all deep learning methods for SOC problems, both at a theoretical and empirical level. It is critical to understand the strengths and weaknesses of each approach in order to use the appropriate method for each setting. To this end, the contributions of our paper are as follows:
\begin{itemize}
    \item We review existing SOC loss functions (\autoref{subsec:existing_losses_app}) and introduce three novel SOC loss functions (\autoref{subsec:new_losses_app}): the Work-SOCM loss, the Cost-SOCM loss, and the Unweighted SOCM loss.
    \item In \autoref{sec:taxonomy}, we group SOC loss functions into classes that share the same gradient in expectation, which means that their optimization landscape and convergence properties are the same; they only differ in their gradient variance. This results in a taxonomy of loss functions, which we summarize in \autoref{fig:SOC_taxonomy}.
    \item We test the SOC loss functions on a few simple SOC problems that capture relevant features of different practical SOC problems (high dimensions, large costs, multimodality). We observe that despite methods sharing the same gradient in expectation, their convergence speed may differ dramatically due to the gradient variance of each. We also see that different features of the SOC problem favor particular loss functions.
\end{itemize}

\paragraph{Related work} The literature on solving stochastic optimal control problems is rich and storied, starting in the 1950s with the work of Richard Bellman \citep{bellman1957}, that introduced the Hamilton-Jacobi-Bellman (HJB) equation, a partial differential equation (PDE) whose solution determines the optimal control. For low-dimensional control problems ($d \leq 3$), it is possible to grid the domain and use a numerical PDE solver to find a solution to the HJB equation \citep{bonnans2004afast,ma2020finite,jensen2013onthe,debrabant2013semilagrangian,carlini2020asemilagrangian}. To tackle high dimensional problems, works starting in the 2000s designed methods based on forward-backward stochastic differential equations (FBSDEs), initially using least-squares Monte Carlo (see \citet[Chapter 3]{pham2009continuous}, and \citet{gobet2016monte,gobet2005regression,zhang2004numerical}). 
\citet{weinan2017deep,han2018solving} used neural networks to solve SOC problems, leveraging FBSDEs as well. Ever since, a lot of deep learning methods to solve SOC problems have been proposed. We discuss and cite individual methods in \autoref{subsec:existing_losses_app}, and direct the reader to \cite{domingoenrich2023stochastic} and \cite{nüsken2023solving} for more extensive reviews of the related work in this are.

\section{The stochastic optimal control problem}
\label{subsec:soc_problem}

% Let $(\Omega, \mathcal{F}, (\mathcal{F}_t)_{t\geq 0}, \mathcal{P})$ be a fixed filtered probability space on which is defined a Brownian motion $B=(B_t)_{t\geq 0}$.
We consider the control-affine problem
\begin{talign} \label{eq:control_problem_def}
    &\min_{u \in \mathcal{U}} \mathbb{E} \big[ \int_0^T 
    \big(\frac{1}{2} \|u(X^u_t,t)\|^2 + f(X^u_t,t) \big) \, \mathrm{d}t + 
    g(X^u_T) \big], \\
    \text{where } &\mathrm{d}X^u_t = (b(X^u_t,t) + \sigma(t) u(X^u_t,t)) \, \mathrm{d}t + \sqrt{\lambda} \sigma(t) \mathrm{d}B_t, \qquad X^u_0 \sim p_0. \label{eq:controlled_SDE}
\end{talign}
and where $X_t^u \in \R^d \to \R^d$ is the state, $u : \R^d \times [0,T]$ is the feedback control and belongs to the set of admissible controls $\mathcal{U}$, $f : \R^d \times [0,T] \to \R$ is the state cost, $g : \R^d \to \R$ is the terminal cost, $b : \R^d \times [0,T] \to \R^d$ is the base drift, and $\sigma : [0,T] \to \R^{d \times d}$ is the invertible diffusion coefficient, and $B=(B_t)_{t\geq 0}$ is a Brownian motion. The objective in equation \eqref{eq:control_problem_def} is known as the control objective.

The \emph{cost functional} is the expected future cost starting from state $x$ at time $t$, using the control $u$:
\begin{talign} \label{eq:cost_functional}
J(u;x,t) := \mathbb{E}_{X} \left[ \int_t^1 
\left(\frac{1}{2} \|u(X_s,s)\|^2  +  f(X_s,s) \right) \, \mathrm{d}s  +  
g(X_1) \;\big|\; X_t = x \right].
\end{talign}
From here, the \emph{value function} is the optimal value of the cost functional%
\footnote{Note that there is a slight difference in terminology between SOC and reinforcement learning, where our cost functional is referred to as the state value function and our value function is the optimal state value function in RL.}
:
\begin{talign}\label{eq:value_fn_defn}
V(x,t) := \min_{u\in \mathcal{U}} J(u;x,t) = J(u^*;x,t),
\end{talign}
where $u^*$ is the \emph{optimal control}, \ie, minimizer of \eqref{eq:control_problem_def}. Furthermore, a classical result is that the value function can be expressed in terms of the \emph{uncontrolled} base process $p^\text{base}$ (\cite{kappen2005path}, see \citealt[Eq.~8,~App.~B]{domingoenrich2023stochastic} for a self-contained proof):
\begin{talign}\label{eq:value_fn_from_uncontrolled}
    V(x, t) = - \log \mathbb{E}_{X} \left[ \exp( - \int_t^1 f(X_s, s) \mathrm{d}s - g(X_1) ) \;\big|\; X_t = x  \right],
\end{talign}
where $X$ denotes the uncontrolled process (the process with $u=0$).
A useful expression for the optimal control is that it is related to the gradient of the value function (see \eg \citealt[App.~B]{domingoenrich2023stochastic}):
\begin{talign} \label{eq:optimal_control}
u^*(x,t) = - \sigma(t)^{\top} \nabla_x V(x,t) = - \sigma(t)^{\top} \nabla_x J(u^*, x,t).
\end{talign}

% Many of the methods referenced above and some additional ones can be seen from a common perspective using controlled diffusions. As observed in equation \eqref{eq:P_u_P_u_star}, the key idea is that learning the optimal control is equivalent to finding a control $u$ such that the induced probability measure $\mathbb{P}^u$ on paths is equal to the probability measure $\mathbb{P}^{u^*}$ for the optimal control. In the paragraphs below we cover several loss that fall into this framework.
Solving the SOC problem involves finding the optimal control. The control is parameterized using a neural network. Hence, algorithms are associated to training loss functions. 
All the loss functions that we study can be optimized using a common algorithmic framework, which we describe in Algorithm \ref{alg:IDO}. Essentially, we For more details, we refer the reader to \citet{nüsken2023solving}, which introduced this perspective and named such methods \textit{Iterative Diffusion Optimization} (IDO) techniques. 
% For simplicity, we introduce the losses for the setting in which the initial distribution $p_0$ is concentrated at a single point $x_{\mathrm{init}}$; we cover the general setting in \autoref{app:proofs_framework}.

\begin{algorithm}[H]
\SetAlgoNoLine % Disable line numbering
\SetAlgoNlRelativeSize{0} %Set number line font to zero
\small{
\KwIn{
% State cost $f(x,t)$, terminal cost $g(x)$, covariance matrix $\sigma(t)$, base drift $b(x,t)$, noise level $\lambda$, 
Number of iterations $N$, batch size $m$, number of time steps $K$, initial control parameters $\theta_0$, loss function $\mathcal{L}$
}
  \For{$n \in \{0,\dots,N-1\}$}{
    Simulate $m$ trajectories $X^{u_{\theta_n},i}$ of the process $X^{u_{\theta_n}}$ controlled by $u_{\theta_n}$, e.g., using Euler-Maruyama updates

    \lIf{$\mathcal{L}$ is \textit{not} Discrete Adjoint Loss}{detach the $m$ trajectories from the computational graph, so that gradients do not backpropagate}

    Compute an $m$-sample Monte Carlo approximation 
    % $\hat{\mathcal{L}}(u_{\theta_n})$ 
    $\frac{1}{m}\sum_{i=1}^{m} \mathcal{L}(u_{\theta_n};X^{u_{\theta_n},i})$
    of the expected loss $\mathbb{E}_{X^{u_{\theta_n}}} \mathcal{L}(u_{\theta_n}; X^{u_{\theta_n}})$

    Compute the gradient $\nabla_{\theta} 
    \big( \frac{1}{m} \sum_{i=1}^{m} \mathcal{L}(u_{\theta_n};X^{u_{\theta_n},i}) \big)$
    % \hat{\mathcal{L}}(u_{\theta_n})$ 
    % of $\hat{\mathcal{L}}(u_{\theta_n})$ w.r.t. $\theta_n$

    Obtain $\theta_{n+1}$ with via an Adam update on $\theta_n$ (or another stochastic algorithm)
  }
\KwOut{Learned control $u_{\theta_N}$}}
\caption{Iterative Diffusion Optimization (IDO) algorithms for stochastic optimal control}
\label{alg:IDO}
\end{algorithm}

\section{Loss functions for stochastic optimal control}
\label{sec:more_losses}

In this section, we review existing loss functions for stochastic optimal control (\autoref{subsec:existing_losses_app}), and we introduce three new loss functions, providing theoretical guarantees for each of them (\autoref{subsec:new_losses_app}). Unless otherwise indicated, we let $\bar{u} = \texttt{stopgrad}(u)$, which means that the gradients of $\bar{u}$ with respect to the parameters $\theta$ of the control $u$ are artificially set to zero. Some losses admit variants base on the Sticking the Landing (STL) trick (\cite{roeder2017sticking}, \cite[Sec.~2.2.3]{zhou2021actor} see \autoref{subsec:STL}), a variance reduction technique that yields zero variance estimators when trajectories are sampled using the optimal control. \textcolor{blue}{The STL variants correspond to additional terms that we indicate in blue:} removing those terms results in the basic loss function.

\subsection{Existing loss functions}
\label{subsec:existing_losses_app}

\paragraph{The adjoint method} 
% A key approach to optimizing the simulation of a stochastic differential equation (SDE) is to differentiate through the simulated trajectory using gradients from the stochastic optimal control (SOC) objective function. 
The adjoint method utilizes the following objective, which is a Monte Carlo estimate of the control objective \eqref{eq:control_problem_def}:
\begin{talign} \label{eq:L_RE} \mathcal{L}(u ; X^u) := \int_0^T \big(\frac{1}{2} |u(X^u_t,t)|^2 \! + \! f(X^u_t,t) \big) , \mathrm{d}t \! + \! g(X^u_T).
%\qquad X \sim p^u. 
\end{talign} 
The goal is to compute the gradient of $\mathcal{L}(u ; X)$ with respect to the control parameters $\theta$.

For optimize the objective \eqref{eq:L_RE}, two main strategies are used. The \emph{Discrete Adjoint} method follows a “discretize-then-differentiate” approach, retaining the numerical solver in memory for direct differentiation \citep{han2016deep,bierkens2014explicit,chen2016training}. This can be memory-intensive, often requiring gradient checkpointing to reduce usage.

The \emph{Continuous Adjoint} method, however, leverages the continuous nature of SDEs to analytically derive the gradient of the control objective with respect to intermediate states $X_t$. This is represented by an adjoint ODE, following a “differentiate-then-discretize” approach \citep{pontryagin1962mathematical,chen2018neural,li2020scalable}. The adjoint state is defined as:
\begin{talign}
\begin{split} \label{eq:adjoint_state_defn}
&a(t ; X^u, u) := \nabla_{X_t} 
\big(\int_t^T \big(\frac{1}{2} \|u(X^u_{t'},t')\|^2 \! + \! f(X^u_{t'},t') \big) \, \mathrm{d}t' \! + \! g(X^u_1) \big), \\
&\text{where } X^u \text{ solves } \mathrm{d}X^u_t =  \left( b(X^u_t,t) + \sigma(t) u(X^u_t,t) \right) \, \mathrm{d}t + 
    \sigma(t) \mathrm{d}B_t.
\end{split}
\end{talign}
This implies that $\mathbb{E}_{X^u} [ a(t ; X^u, u) | X^u_t = x ] = \nabla_x J(u; x, t)$, where $J$ is the cost functional \eqref{eq:cost_functional}.

The adjoint state evolves according to the dynamics\footnote{Here, the Jacobian matrix $\nabla_x v(x)$ is defined as $\big( \nabla_x v(x) \big)_{ij} = \frac{\partial v_i(x)}{\partial x_j}$.}:
\begin{talign} 
\begin{split} \label{eq:cont_adjoint_1}
    \mathrm{d}a(t;X^u,u) &= - \bigg[ \left(\nabla_{X^u_t} (b (X^u_t,t) + \sigma(t) u(X^u_t,t))\right)\tran{} a(t;X^u,u) 
    \\ &\qquad\quad + \nabla_{X^u_t} \left( f(X^u_t,t) + \frac{1}{2}\|u(X^u_t,t)\|^2 \right) \bigg] \mathrm{d}t \, \textcolor{blue}{- \,
    \nabla_x \bar{u}(X^{\bar{u}}_t,t) \, \mathrm{d}B_t},
\end{split}
    \\ a(T;X^u,u) &= \nabla g(X^u_T). \label{eq:cont_adjoint_2}
\end{talign}
The adjoint state is solved backward in time, and once obtained over the interval $t \in [0, 1]$, the gradient of $\mathcal{L}(u ; \fX)$ with respect to $\theta$ is computed by integrating over the entire time span:
\begin{talign}\label{eq:continuous_adjoint_grads}
    \frac{\mathrm{d} \mathcal{L}}{\mathrm{d} \theta} =  
    \frac{1}{2}\int_0^T \frac{\partial}{\partial \theta} \norm{u(X^{\bar{u}}_t, t)}^2 \mathrm{d} t
    +\int_0^T \frac{\partial u(X^{\bar{u}}_t, t)}{\partial \theta}\tran{} \sigma(t)\tran{} a(t; X^{\bar{u}}, \bar{u}) \mathrm{d} t, \qquad \bar{u} = \texttt{stopgrad}(u),
\end{talign}
The first term in \eqref{eq:continuous_adjoint_grads} corresponds to the direct partial derivative of $\mathcal{L}$ with respect to $\theta$, and the second term captures the derivative through the sampled trajectory $X^u$ \cite[Prop.~6]{domingoenrich2024adjoint}.

Both discrete and continuous adjoint methods converge to the same gradient as the numerical solver step size approaches zero, and scale efficiently to high-dimensional problems. Despite their efficacy in optimizing neural ODE/SDEs, they can be unstable in practice due to the non-convexity of the problem \citep{mohamed2020monte,suh2022differentiable,domingoenrich2023stochastic}. 

An alternative way to derive the loss \eqref{eq:L_RE} is as the KL divergence or relative entropy between $\mathbb{P}^{u}$, the distribution over trajectories induced by the control $u$, and $\mathbb{P}^{u}$, the distribution over trajectories induced by the control $u^*$. That is, $\mathcal{L}(u ; X^u) = \mathbb{E}_{\mathbb{P}^{u^*}} [\log \frac{d\mathbb{P}^{u^*}}{d\mathbb{P}^{u}}]$, which is why some works \citep{nüsken2023solving} refer to this loss as the relative entropy loss. See \citealt[Sec.~5.1]{domingoenrich2024adjoint} for more details on the adjoint method. 

\paragraph{Adjoint Matching} Adjoint Matching was introduced in \citealt[Sec.~5.2]{domingoenrich2024adjoint}, and it is an improvement over the Continuous Adjoint method that is based on two observations. The first one is that the gradient of the Continuous Adjoint loss in \eqref{eq:continuous_adjoint_grads} is equal to the gradient of the basic Adjoint Matching loss, defined as:
\begin{talign}
\begin{split} \label{eq:basic_adjoint_matching}
    \mathcal{L}_{\mathrm{Basic-Adj-Match}}(u; X^{\bar{u}}) &:= \frac{1}{2} \int_0^{T} \big\| u(X^{\bar{u}}_t,t)
    + \sigma(t)\tran{} a(t;X^{\bar{u}},\bar{u}) \big\|^2 \, \mathrm{d}t, \qquad \bar{u} = \texttt{stopgrad}(u),
\end{split}
\end{talign}
% That is, we can learn the control $u$ as the conditional expectation of  
The second observation is that the adjoint ODE \eqref{eq:cont_adjoint_1}-\eqref{eq:cont_adjoint_2} can be simplified, giving rise to the lean adjoint ODE, which is as follows:
\begin{talign}
\label{eq:lean_adjoint_1}
\mathrm{d}\tilde{a}(t;X^{\bar{u}},\bar{u}) 
&= - (\tilde{a}(t;X^{\bar{u}},\bar{u})^{\top} \nabla_x b (X^{\bar{u}}_t,t) + \nabla_x f(X^{\bar{u}}_t,t)) \, \mathrm{d}t \textcolor{blue}{- \ 
\nabla_x \bar{u}(X^{\bar{u}}_t,t) \, \mathrm{d}B_t}, \\ 
\label{eq:lean_adjoint_2}
\tilde{a}(T;X^{\bar{u}},\bar{u}) &= \nabla_x g(X^{\bar{u}}_T).
\end{talign}
And then, the Adjoint Matching loss is analogous to the basic Adjoint Matching loss, replacing the adjoint state $a(t;X^{\bar{u}},\bar{u})$ by the lean adjoint state $a(t;X^{\bar{u}})$:
\begin{talign} \label{eq:adjoint_matching}
\mathcal{L}_{\mathrm{Adj-Match}}(u; X^{\bar{u}}) 
:= \frac{1}{2} \int_0^{T} \big\| & u(X^{\bar{u}}_t %^{\bar{u}}
,t)
+ \sigma(t)\tran{} \tilde{a}(t;X^{\bar{u}} %^{\bar{u}}
) \big\|^2 \, \mathrm{d}t, 
\qquad \bar{u} = \texttt{stopgrad}(u).
\end{talign}
While the gradients of the expected losses $\mathbb{E}[\mathcal{L}_{\mathrm{Basic-Adj-Match}}]$ and $\mathbb{E}[\mathcal{L}_{\mathrm{Adj-Match}}]$ are not equal, both losses have the same theoretical guarantees. Namely, the only critical point of the expected losses is the optimal control. A critical point of a loss $\mathcal{L}$ is a control $u$ such that $\frac{\delta}{\delta u} \mathcal{L}(u) = 0$, where $\frac{\delta}{\delta u} \mathcal{L}$ denotes the first variation of the functional $\mathcal{L}$. This guarantee provides the theoretical grounding for gradient-based optimization algorithms to optimize $\mathbb{E}[\mathcal{L}_{\mathrm{Basic-Adj-Match}}]$ and $\mathbb{E}[\mathcal{L}_{\mathrm{Adj-Match}}]$. See \citealt[App.~E.2,~E.3]{domingoenrich2024adjoint} for the proofs.  
%In other words, \Cref{prop:continuous_adjoint_loss_main} states that the only control that satisfies the first-order optimality condition for the basic Adjoint Matching objective is the optimal control, which provides theoretical grounding for gradient-based optimization algorithms.
% where $\bar{u} = \texttt{stopgrad}(u)$ means that the gradients of $\bar{u}$ with respect to the parameters $\theta$ of the control $u$ are artificially set to zero.

\paragraph{The REINFORCE losses} Policy gradient methods, and in particular the archetypal REINFORCE method, are classical RL algorithms that also have analogs in stochastic optimal control, through the connection between stochastic optimal control and KL-regularized reinforcement learning (see \citealt[App.~C]{domingoenrich2024adjoint}).
% sketched in \autoref{subsec:max_ent_RL}. 
The regular REINFORCE loss, which we derive in \autoref{prop:derivation_REINFORCE}, reads:
\begin{talign}
\begin{split} \label{eq:REINFORCE}
    \mathcal{L}_{\mathrm{RF}}(u;X^{\bar{u}}) &:= 
    % \mathbb{E}\big[ 
    \frac{1}{2} \int_0^T \|u(X^{\bar{u}}_t,t)\|^2 \, \mathrm{d}t \\ &+ 
    \big( \int_0^T 
    \big(\frac{1}{2} \|\bar{u}(X^{\bar{u}}_t,t)\|^2 \! + \! f(X^{\bar{u}}_t,t) \big) \, \mathrm{d}t \! + \! 
    g(X^{\bar{u}}_T) \big) \times 
    \int_0^T \langle u(X^{\bar{u}}_t,t),  \mathrm{d}B_t \rangle, 
    % \quad \bar{u} = \texttt{stopgrad}(u),
\end{split}
\end{talign}
while the expression for the REINFORCE loss that uses only future rewards is as follows:
\begin{talign}
\begin{split} \label{eq:REINFORCE_future_rewards}
    \mathcal{L}_{\mathrm{RFFR}}(u;X^{\bar{u}}) &:= 
    % \mathbb{E}\big[ 
    \frac{1}{2} \int_0^T \|u(X^{\bar{u}}_t,t)\|^2 \, \mathrm{d}t \\ &+ 
    % \lambda^{-1/2}
    \int_0^T \big( \int_t^T 
    \big(\frac{1}{2} \|\bar{u}(X^{\bar{u}}_s,s)\|^2 \! + \! f(X^{\bar{u}}_s,s) \big) \, \mathrm{d}s \! + \! 
    g(X^{\bar{u}}_T) \big) \langle u(X^{\bar{u}}_t,t), \mathrm{d}B_t \rangle, 
    % \big], 
    % \quad \bar{u} = \texttt{stopgrad}(u).
\end{split}
\end{talign}

\paragraph{The cross-entropy loss} The cross-entropy loss is defined as the Kullback-Leibler divergence between $\mathbb{P}^{u^*}$ and $\mathbb{P}^{u}$, i.e., flipping the order of the two measures: $\mathbb{E}_{\mathbb{P}^{u^*}} [\log \frac{d\mathbb{P}^{u^*}}{d\mathbb{P}^{u}}]$.
For an arbitrary $v \in \mathcal{U}$, this loss is equivalent to the following one (see e.g. \cite[Prop.~B.6(i)]{domingoenrich2023stochastic}):
\vspace{-3pt}
\begin{talign} 
\begin{split} \label{eq:L_CE}
    &\mathcal{L}_{\mathrm{CE}}(u;X^{\bar{u}}) \! := \! 
    % \mathbb{E} \big[ 
    \big( \! - \! 
    \int_0^T \langle u(X^{\bar{u}}_t,t), \mathrm{d}B_t \rangle \! - \! 
    \int_0^T \langle u(X^{\bar{u}}_t,t), \bar{u}(X^{\bar{u}}_t,t) \rangle \, \mathrm{d}t \\ &\qquad\qquad\qquad\qquad\qquad \!
    + \! \frac{1}{2} \int_0^T \|u(X^{\bar{u}}_t,t)\|^2 \, \mathrm{d}t
    \big) \times 
    \alpha(\bar{u},X^{\bar{u}},B), 
    % \big] \rvert_{\bar{u} = \mathrm{stopgrad}(u)}.
    % \quad \bar{u} = \texttt{stopgrad}(u).
\end{split} \\
\begin{split} \label{eq:importance_weight_def}
    &\text{where} \;\; \alpha(\bar{u},X^{\bar{u}},B) = \exp \big( - 
    % \lambda^{-1} 
    \big(\int_0^T (\frac{1}{2}  \|\bar{u}(X^{\bar{u} }_t,t)\|^2 + f(X^{\bar{u} }_t,t) \big) \, \mathrm{d}s + g(X^{\bar{u}}_T) - 
    \int_0^T \langle \bar{u}(X^{\bar{u}}_t,t), \mathrm{d}B_t \rangle \big).
\end{split}
\end{talign}
The cross-entropy loss has a rich literature \citep{hartmann2017variational,kappen2016adaptive,rubinstein2013cross,zhang2014applications} and has been recently used in applications such as molecular dynamics \citep{holdijk2023stochastic}. However, the variance of the factor $\alpha$ blows up when 
% $\lambda$ is small, or 
$f$ or $g$ are large or the dimension is high, which hinders the practical applicability of the loss in large-scale settings.

\paragraph{Variance and log-variance losses} 
The \textit{variance} and the \textit{log-variance losses} are defined as $\tilde{\mathcal{L}}_{\mathrm{Var}_{\bar{u}}}(u) = \mathrm{Var}_{\mathbb{P}^{\bar{u}}} ( \frac{\mathrm{d}\mathbb{P}^{u^*}}{\mathrm{d}\mathbb{P}^{\bar{u}}} )$ and $\tilde{\mathcal{L}}_{\mathrm{Var}_{\bar{u}}}^{\mathrm{log}}(u) = \mathrm{Var}_{\mathbb{P}^{\bar{u}}} ( \log \frac{\mathrm{d}\mathbb{P}^{u^*}}{\mathrm{d}\mathbb{P}^u} )$
whenever $\mathbb{E}_{\mathbb{P}^{\bar{u}}} | \frac{\mathrm{d}\mathbb{P}^{u^*}}{\mathrm{d}\mathbb{P}^u} | < + \infty$ and $\mathbb{E}_{\mathbb{P}^{\bar{u}}} | \log \frac{d\mathbb{P}^{u^*}}{d\mathbb{P}^u} | < + \infty$, respectively. Define
\begin{talign}
    \begin{split} \label{eq:tilde_Y_def}
    \tilde{Y}_T^{u,\bar{u}} \! &= \! - 
    % \lambda^{-1}
    \int_0^T \langle u(X^{\bar{u}}_t,t), \bar{u}(X^{\bar{u}}_t,t) \rangle \, \mathrm{d}t \! - \! 
    % \lambda^{-1} 
    \int_0^T f(X^{\bar{u}}_t,t) \, \mathrm{d}t \! - \! 
    % \lambda^{-1/2} 
    \int_0^T \langle u(X^{\bar{u}}_t,t), \mathrm{d}B_t \rangle 
    % \\ &\qquad 
    \! + \! \frac{
    % \lambda^{-1}
    1}{2} \int_0^T \| u(X^{\bar{u}}_t,t)\|^2 \, \mathrm{d}t. 
    \end{split}
\end{talign}
Then, $\tilde{\mathcal{L}}_{\mathrm{Var}_{\bar{u}}}$ and $\tilde{\mathcal{L}}_{\mathrm{Var}_{\bar{u}}}^{\mathrm{log}}$ are equivalent, respectively, to the following losses (see e.g. \cite[Prop.~B.6(i)]{domingoenrich2023stochastic}):
\begin{talign} \label{eq:variance_loss}
\mathcal{L}_{\mathrm{Var}_{\bar{u}}}(u) &:=
\mathrm{Var} \big( \exp \big( \tilde{Y}^{u,\bar{u}}_T - 
% \lambda^{-1} 
g(X^{\bar{u}}_T) \big) \big), \\ \mathcal{L}^{\mathrm{log}}_{\mathrm{Var}_{\bar{u}}}(u) &:= \mathrm{Var}\big( \tilde{Y}^{u,{\bar{u}}}_T - 
% \lambda^{-1} 
g(X^{\bar{u}}_T) \big), \label{eq:log_variance_loss}
\end{talign}
The variance and log-variance losses were introduced by \citet{nüsken2023solving}. 
Unlike for the cross-entropy loss, the choice of the control $\bar{u}$ does lead to different losses. 
When using $\mathcal{L}_{\mathrm{Var}_{\bar{u}}}$ or $\mathcal{L}^{\mathrm{log}}_{\mathrm{Var}_{\bar{u}}}$ in Algorithm \ref{alg:IDO}, the variance is computed across the $m$ trajectories in each batch. Note that when $f,g$ are large, 
% or $\lambda$ is small, 
$\mathcal{L}_{\mathrm{Var}_{\bar{u}}}$ is very large and has itself very large variance, which causes issues.

\paragraph{Moment loss} The moment loss is defined as
\begin{talign} \label{eq:moment_loss}
    \mathcal{L}_{\mathrm{Mom}_{\bar{u}}}(u,y_0) = 
    % \mathbb{E}[
    (\tilde{Y}^{u,\bar{u}}_T + y_0 - 
    g(X^{\bar{u}}_T))^2,
    % ], 
\end{talign}
where $\tilde{Y}^{u,{\bar{u}}}_T$ is defined in \eqref{eq:tilde_Y_def}. Note the similarity with the log-variance loss \eqref{eq:log_variance_loss}; the optimal value of $y_0$ for a fixed $u$ is $y_0^* = \mathbb{E}[
% \lambda^{-1}
g(X^{\bar{u}}_T) - \tilde{Y}^{u,\bar{u}}_T]$, and plugging this into the expectation of \eqref{eq:moment_loss} yields exactly the log-variance loss.  The moment loss was introduced by \citet[Section III.B]{hartmann2019variational}, generalizing the FBSDE method pioneered by \citet{weinan2017deep,han2018solving}, which corresponds to setting $\bar{u} = 0$.

\paragraph{Stochastic optimal control matching (SOCM) loss} This loss, introduced by \cite{domingoenrich2023stochastic}, is formally similar to the Continuous Adjoint loss, in that the control $u(x,t)$ is trained to approximate a certain vector field. The SOCM is as follows:
\begin{talign}
    \begin{split} \label{eq:SOCM_loss}
        \mathcal{L}_{\mathrm{SOCM}}(u,M) &:= 
        % \mathbb{E} \big[
        \int_0^{T} \big\| u(X^{\bar{u}}_t,t)
        + \sigma(t)^{\top} 
        \omega(t,\bar{u},X^{\bar{u}},B,M_t) \big\|^2 \, \mathrm{d}t  
        \times \alpha(\bar{u}, X^{\bar{u}}, B) \big]~,
    \end{split}
    \end{talign}
    where $\alpha(\bar{u}, X^{\bar{u}}, B)$ is defined in \eqref{eq:importance_weight_def}, and
    \begin{talign} \label{eq:matching_vector_field_def}
        \begin{split}
        % w(t,v,X^v,B,M_t) &= \sigma(t)^{\top} \big( - \int_t^T M_t(s) \nabla_x f(X^v_s,s) \, \mathrm{d}s - M_t(T) \nabla g(X^v_T) 
        % \\ &\quad + \int_t^T (M_t(s) \nabla_x b(X^v_s, s) -\partial_s M_t(s)) (\sigma^{-1})^{\top}(s) v(X^v_s,s) \, \mathrm{d}s
        % \\ &\quad + \lambda^{1/2} \int_t^T (M_t(s) \nabla_x b(X^v_s, s) - \partial_s M_t(s)) (\sigma^{-1})^{\top}(s) \mathrm{d}B_s \big),
        \omega(t,\bar{u},X^{\bar{u}},B,M_t) &= \int_t^T M_t(s) \nabla_x f(X^{\bar{u}}_s,s) \, \mathrm{d}s + M_t(T) \nabla g(X^v_T) 
        \\ &\quad - \int_t^T (M_t(s) \nabla_x b(X^v_s, s) -\partial_s M_t(s)) (\sigma^{-1})^{\top}(s) \bar{u}(X^{\bar{u}}_s,s) \, \mathrm{d}s
        \\ &\quad - 
        % \lambda^{1/2} 
        \int_t^T (M_t(s) \nabla_x b(X^v_s, s) - \partial_s M_t(s)) (\sigma^{-1})^{\top}(s) \mathrm{d}B_s,
        \end{split}
    \end{talign}
    and $M$ satisfy the following assumption:
    \begin{assumption} \label{ass:M}
        For each $t \in [0,T]$, $M_t : [t, T ] \to \R^{d \times d}$ is an arbitrary matrix-valued differentiable function such that $M_t(t) = \mathrm{Id}$. More generally, we can take $M_t$ such that for any $s \in [t,T]$, $M_t(s)$ depends on the trajectory $X^{\bar{u}}$ up to time $s$. 
    \end{assumption}
    Observe that $u(x,t)$ is trained to approximate 
    \begin{talign} \label{eq:u_star_rewritten}
    \frac{- \sigma(t)^{\top}\mathbb{E}[\omega(t,\bar{u},X^{\bar{u}},B,M_t) \alpha(\bar{u},X^{\bar{u}},B) | X^{\bar{u}}_t=x]}{\mathbb{E}[ \alpha(\bar{u},X^{\bar{u}},B) | X^{\bar{u}}_t=x]}. 
    \end{talign}
    % As it turns out, 
    Leveraging the path-wise reparameterization trick (\cite[Prop.~1]{domingoenrich2023stochastic}, \autoref{lem:cond_exp_rewritten}, \autoref{prop:cond_exp_rewritten}), one sees that for any $\bar{u} \in \mathcal{U}$ and $M$ satisfying \autoref{ass:M}, the term in \eqref{eq:u_star_rewritten} is equal to the optimal control $u^*(x,t)$. Minimizing the loss $\mathcal{L}_{\mathrm{SOCM}}$ with respect to $M$ serves the purpose of minimizing the variance of the target vector field, making it easier and faster to learn. This is the reason the SOCM loss outperforms all other losses in simple experiments (\autoref{sec:simple_exp}). On the flip side, like for the cross entropy loss, the variance blowup of the factor $\alpha$ makes this loss impractical at large scale.
    % blows up when $\lambda$ is small or $f$ or $g$ are large, which hinders the practical applicability of the loss.
    % The SOCM loss achieves low control errors in simple problems thanks to the 
    %$\frac{\mathbb{E}[\omega(t,v,X^v,B,M_t) \alpha(v,X^v,B) | X^v_t=x]}{\mathbb{E}[ \alpha(v,X^v,B) | X^v_t=x]} = \nabla V(x,t)$

    \paragraph{SOCM-Adjoint loss} The SOCM-Adjoint loss, introduced by \cite{domingoenrich2023stochastic}, is of this form:
\begin{talign}
    \begin{split} \label{eq:SOCM_adjoint_loss}
        \mathcal{L}_{\mathrm{SOCM-Adj}}(u) &:= 
        % \mathbb{E} \big[
        \int_0^{T} \big\| u(X^{\bar{u}}_t,t)
        + \sigma(t)^{\top} \tilde{a}(t,X^{\bar{u}}) \big\|^2 \, \mathrm{d}t 
        \times \alpha(\bar{u}, X^{\bar{u}}, B),
        % \big]~,
    \end{split} 
    % \\ \begin{split}
    % \text{where} \quad \alpha(v,X^v,B) &= \exp \big( - 
    % \big(\int_0^T (\frac{1}{2}  \|v(X^v_t,t)\|^2 + f(X^v_t,t) \big) \, \mathrm{d}s + g(X^v_T) - \int_0^T \langle v(X^v_t,t), \mathrm{d}B_t \rangle \big),
    % \end{split}
    \end{talign}
    where $\alpha(\bar{u},X^{\bar{u}},B)$ is the importance weight defined in \eqref{eq:importance_weight_def}, 
    and $\tilde{a}(t,X^{\bar{u}})$ is the solution of the lean adjoint ODE \eqref{eq:lean_adjoint_1}-\eqref{eq:lean_adjoint_2}.
    Remark that the only difference between the SOCM-Adjoint loss and the Adjoint Matching loss is the importance weight $\alpha(\bar{u},X^{\bar{u}},B)$; the variance of this factor blows up when 
    $f$ or $g$ are large or the dimension is high, which hinders the practical applicability of the loss in large-scale settings.
    % \begin{talign} \label{eq:adjoint_1_SOCM}
    %     d\tilde{a}(t,X^v) &= - (\nabla_x b (X^v_t,t) \tilde{a}(t,X^v) + \nabla_x f(X^v_t,t)) \, \mathrm{d}t, 
    %     %\ \textcolor{blue}{- \ \lambda^{1/2} \nabla_x v(X^v_t,t) \, \mathrm{d}B_t}
    %     \\ \tilde{a}(T,X^v) &= \nabla g(X^v_T). \label{eq:adjoint_2_SOCM}
    % \end{talign}
    %Remark that this ODE resembles the adjoint ODE \eqref{eq:cont_adjoint_1}-\eqref{eq:cont_adjoint_2}.
    % The SOCM-Adjoint loss follows a similar pattern to the Continuous Adjoint loss and the SOCM loss: 
    Observe also that $u(x,t)$ is trained to approximate the vector field $\frac{- \sigma(t)^{\top}\mathbb{E}[\tilde{a}(t,X^v) \alpha(\bar{u},X^{\bar{u}},B) | X^{\bar{u}}_t=x]}{\mathbb{E}[ \alpha(\bar{u},X^{\bar{u}},B) | X^{\bar{u}}_t=x]}$. 
    which is equal to the optimal control $u^*(x,t)$, because $\frac{\mathbb{E}[\tilde{a}(t,X^{\bar{u}}) \alpha(\bar{u},X^{\bar{u}},B) | X^{\bar{u}}_t=x]}{\mathbb{E}[ \alpha(\bar{u},X^{\bar{u}},B) | X^{\bar{u}}_t=x]} = \nabla V(x,t)$ for any $v \in \mathcal{U}$. 
    % Yet, as we show next, removing the factor $\alpha$ results in a valid loss function.

\subsection{New loss functions}
\label{subsec:new_losses_app}

\paragraph{Work-SOCM loss} In this loss, the control $u$ is also trained to approximate the vector field $-\sigma(t)^{\top} \mathbb{E}[\tilde{\xi}(t,X^{\bar{u}},B,M_t)|X^{\bar{u}}_t = x]$: 
\begin{talign}
\begin{split} \label{eq:Q_learning_2}
    \mathcal{L}_{\mathrm{Work-SOCM}}(u,M) &:= 
    % \mathbb{E} \big[
    \int_0^{T} \big\| u(X^{\bar{u}}_t,t) 
    + \sigma(t)^{\top} \tilde{\xi}(t,X^{\bar{u}},B,M_t) \big\|^2 \, \mathrm{d}t,
    % \big], 
    % \rvert_{v = \mathrm{stopgrad}(u)}~,
    \quad \bar{u} = \texttt{stopgrad}(u),
\end{split}
\end{talign}
where
\begin{talign}
\begin{split}
    % &\hat{w}(t,X^v,B,M_t) \\ &= \sigma(t)^{\top} \big( - \int_t^T M_t(s) \nabla_x %(
    % f(X^v_s,s) %+ \frac{1}{2}\|v(X^v_s,s)\|^2) 
    % \, \mathrm{d}s - M_t(T) \nabla g(X^v_T) \ \textcolor{blue}{- \lambda^{1/2} \int_t^T M_t(s) \nabla_x v(X^v_s,s) \, \mathrm{d}B_s}
    % \\ &\qquad\qquad - \! \lambda^{-1/2} \big( \int_t^T %(
    % f(X^v_s,s) %\! + \! \frac{1}{2}\|v(X^v_s,s)\|^2) 
    % \, \mathrm{d}s \! + \! g(X^v_T) \textcolor{blue}{+ \lambda^{1/2} \int_t^T v(X^v_s,s) \, \mathrm{d}B_s} \big) \\ &\qquad\qquad\quad 
    % \times \! \big( \int_t^T (M_t(s) \nabla_x b(X^v_s,s) %+ \sigma(s) v(X^v_s,s)) 
    % - \partial_s M_t(s)) (\sigma^{-1})^{\top}(s) \mathrm{d}B_s \big) \big).
    \tilde{\xi}(t,X^v,B,M_t) &= \int_t^T M_t(s) \nabla_x 
    f(X^v_s,s)
    \, \mathrm{d}s + M_t(T) \nabla g(X^v_T) %\ \textcolor{blue}{+ \lambda^{1/2} \int_t^T M_t(s) \nabla_x v(X^v_s,s) \, \mathrm{d}B_s}
    \\ &\quad + \! 
    % \lambda^{-1/2} 
    \big( \int_t^T 
    f(X^v_s,s)  
    \, \mathrm{d}s \! + \! g(X^v_T) %\textcolor{blue}{+ \lambda^{1/2} \int_t^T v(X^v_s,s) \, \mathrm{d}B_s}
    \big)
    \times \! \big( \int_t^T (M_t(s) \nabla_x b(X^v_s,s) 
    - \partial_s M_t(s)) (\sigma^{-1})^{\top}(s) \mathrm{d}B_s \big).
\end{split}    
\end{talign}
and $M$ is a family of matrix-valued functions that satisfies \autoref{ass:M}.
That is, $u(x,t)$ is trained to approximate $-\sigma(t)^{\top} \mathbb{E}[\tilde{\xi}(t,X^{\bar{u}},B,M_t)|X^{\bar{u}}_t = x]$. The following proposition connects the Work-SOCM loss to the Adjoint Matching loss, which allows us to deduce theoretical guarantees for the former.
% By \autoref{prop:work_adjoint}, we have that $\mathbb{E}[\tilde{\xi}(t,X^{\bar{u}},B,M_t)|X^{\bar{u}}_t = x] = \mathbb{E}[\tilde{\xi}(t,X^{\bar{u}},B,M_t) | X^{\bar{u}}_t = x]$, which implies that the target vector fields for Adjoint Matching and Work-SOCM are equal, meaning that both losses have the same gradients in expectation, and the same critical points. 

\begin{proposition} \label{prop:work_adjoint_app}
    % The only critical point of the losses $\mathcal{L}_{\mathrm{Adj-M}}$ and $\mathcal{L}_{\mathrm{Work-SOCM}}$ is the optimal control $u^*$. Moreover, 
    The gradients of the losses $\mathcal{L}_{\mathrm{Adj-Match}}$ and $\mathcal{L}_{\mathrm{Work-SOCM}}$ are equal in expectation, and in particular, for any $x \in \mathbb{R}^d$, $t \in T$, and $M$ fulfilling \autoref{ass:M}, we have that
    \begin{talign} \label{eq:tilde_a_tilde_xi}
        \mathbb{E}[\tilde{a}(t,X^{\bar{u}}) | X^{\bar{u}}_t = x] = \mathbb{E}[\tilde{\xi}(t,X^{\bar{u}},B,M_t) | X^{\bar{u}}_t = x] = %\mathbb{E}[\nabla_x \big(  \big)]
        \mathbb{E}[\nabla_{X_t} \big(\int_t^T f(X_s,s) \, \mathrm{d}s + g(X_T) \big) \frac{\mathrm{d}\mathbb{P}^v}{\mathrm{d}\mathbb{P}}(X) |X_t = x].
    \end{talign}
    Hence, the only critical point of the loss $\mathcal{L}_{\mathrm{Work-SOCM}}$ is the optimal control $u^*$.
\end{proposition}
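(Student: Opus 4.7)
The plan is to establish the three-way equality \eqref{eq:tilde_a_tilde_xi} of conditional expectations, from which the equality of expected loss gradients and the critical-point characterization both follow immediately. First, the identity $\mathbb{E}[\tilde{a}(t,X^{\bar{u}}) \mid X^{\bar{u}}_t = x] = \nabla V(x,t)$ is the standard interpretation of the lean adjoint state, and is precisely what the Adjoint Matching analysis in \cite[App.~E]{domingoenrich2024adjoint} establishes. The representation of $\nabla V(x,t)$ as the third expression in \eqref{eq:tilde_a_tilde_xi} follows from differentiating \eqref{eq:value_fn_from_uncontrolled} in $x$ under the expectation and then performing a Girsanov change of measure from the uncontrolled process to one with an arbitrary drift $v \in \mathcal{U}$, which converts the expectation of the uncontrolled process into an expectation over $X^v$ weighted by $\frac{\mathrm{d}\mathbb{P}^v}{\mathrm{d}\mathbb{P}}$.

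The main new step is to prove $\mathbb{E}[\tilde{\xi}(t,X^{\bar{u}},B,M_t) \mid X^{\bar{u}}_t = x] = \nabla V(x,t)$. The approach is to apply the path-wise reparameterization trick of \cite[Prop.~1]{domingoenrich2023stochastic} (also available here as \autoref{prop:cond_exp_rewritten}) to the right-hand side of \eqref{eq:value_fn_from_uncontrolled}. Differentiating the log-expectation in $x$ produces, via the chain rule, a ratio whose numerator contains $\nabla_{X_t}\big(\int_t^T f(X_s,s)\,\mathrm{d}s + g(X_T)\big)$ and whose structure rearranges into two pieces: a piece carrying the $\nabla_x f$ and $\nabla g$ terms, and a piece proportional to $\int_t^T f + g$ coming from differentiating the exponential weight itself. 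Applying the reparameterization trick to each piece converts the spatial derivative $\nabla_{X_t}$ at a future time into a combination of $M_t(s)\nabla_x f$, $M_t(T)\nabla g$, and a stochastic integral of the form $\int_t^T (M_t(s) \nabla_x b - \partial_s M_t)(\sigma^{-1})^{\top}\,\mathrm{d}B_s$; adding the two pieces reproduces exactly the definition of $\tilde{\xi}$, with the cross-term in $\tilde{\xi}$ coming from the derivative of the exponential weight multiplied by that stochastic integral. The freedom in choosing $M_t$ (subject to \autoref{ass:M}) does not affect the expectation; only the variance.

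Once the conditional expectations are shown equal, the equality of expected gradients is a short computation: both losses are of the form $\mathbb{E}\int_0^T \|u(X^{\bar{u}}_t,t) + \sigma(t)^{\top} Z_t\|^2 \,\mathrm{d}t$ with $Z_t$ equal to $\tilde{a}(t,X^{\bar{u}})$ or $\tilde{\xi}(t,X^{\bar{u}},B,M_t)$ respectively. Under the stop-gradient convention, the squared $\|Z_t\|^2$ piece is $\theta$-independent and the $\|u\|^2$ piece is common; the only $\theta$-gradient contribution is from the cross-term $\langle u(X^{\bar{u}}_t,t), \sigma(t)^{\top} Z_t\rangle$, whose expectation depends on $Z_t$ only through $\mathbb{E}[Z_t \mid X^{\bar{u}}_t]$, and these coincide by \eqref{eq:tilde_a_tilde_xi}. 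The critical-point statement then inherits directly from the Adjoint Matching analysis: \cite[App.~E.3]{domingoenrich2024adjoint} shows that the unique critical point of $\mathbb{E}[\mathcal{L}_{\mathrm{Adj-Match}}]$ is $u^*$, and since $\mathbb{E}[\mathcal{L}_{\mathrm{Work-SOCM}}]$ has the same gradient, the same conclusion transfers.

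The main obstacle is producing the stochastic-integral product term of $\tilde{\xi}$ cleanly from the reparameterization of the uncontrolled expectation. The combination $M_t(s)\nabla_x b(X_s^v,s) - \partial_s M_t(s)$ arising there must be tracked carefully, since the trick exchanges the spatial derivative $\nabla_{X_t}$ at a future time for Malliavin-type boundary and bulk terms; ensuring that the boundary condition $M_t(t) = \mathrm{Id}$ aligns with the conditioning at $X^{\bar{u}}_t = x$, and that the two separately-reparameterized pieces recombine into the stated form of $\tilde{\xi}$, is the delicate bookkeeping step. Everything else follows mechanically.
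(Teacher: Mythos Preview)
Your proposal contains a genuine error in the central identity. You claim that $\mathbb{E}[\tilde{a}(t,X^{\bar{u}}) \mid X^{\bar{u}}_t = x] = \nabla V(x,t)$ is ``the standard interpretation of the lean adjoint state'' established in \cite[App.~E]{domingoenrich2024adjoint}. This is false for general $\bar{u}$. What the Adjoint Matching analysis shows is that \emph{at a critical point} $\hat{u}$ one has $\mathbb{E}[\tilde{a}(t,X^{\hat{u}})|X^{\hat{u}}_t] = \mathbb{E}[a(t,X^{\hat{u}})|X^{\hat{u}}_t] = \nabla_x J(\hat{u};x,t)$, which then forces $\hat{u}=u^*$. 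For arbitrary $\bar{u}$, the conditional expectation of the lean adjoint state is the third expression in \eqref{eq:tilde_a_tilde_xi}, namely the Girsanov-reweighted pathwise gradient of the work functional; it is \emph{not} $\nabla V(x,t)$.

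Consequently, your plan to derive $\tilde{\xi}$ by differentiating \eqref{eq:value_fn_from_uncontrolled} and applying \autoref{prop:cond_exp_rewritten} cannot work. That computation yields exactly the SOCM matching vector field $\omega$ of \eqref{eq:matching_vector_field_def} together with the importance weight $\alpha$ in the denominator, not $\tilde{\xi}$. The product term in $\tilde{\xi}$ involves $\int_t^T f + g$ multiplying the stochastic integral \emph{without} any exponential weight, whereas differentiating the log-expectation in \eqref{eq:value_fn_from_uncontrolled} would always leave you with an $\exp(-\int f - g)$ factor. The paper's proof avoids this by applying the general path-wise reparameterization trick (\autoref{lem:cond_exp_rewritten}) to a carefully chosen functional $F$ satisfying $\exp(-F(X)) = (\int_t^T f + g)\cdot \mathrm{stopgrad}(\tfrac{\mathrm{d}\mathbb{P}^v}{\mathrm{d}\mathbb{P}})$; the stopgrad makes the Radon--Nikodym factor invisible to $\nabla_z F$ while still allowing the change of measure back to $X^v$, and this is precisely what produces the $(\int f + g)\times(\text{stochastic integral})$ structure of $\tilde{\xi}$. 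Your decomposition of the loss gradients via the cross-term is correct once \eqref{eq:tilde_a_tilde_xi} is in hand, but the route you describe to \eqref{eq:tilde_a_tilde_xi} does not reach it.
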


The reason for the term \textit{work} in the name of this loss is that equation \eqref{eq:tilde_a_tilde_xi} involves the work functional $\mathcal{W}(X,t) = \int_t^T f(X_s,s) \, \mathrm{d}s + g(X_T)$.

\paragraph{Cost-SOCM loss} The control $u$ is trained to approximate the vector field $-\sigma(t)^{\top} \mathbb{E}[\xi(t,X^{\bar{u}},B,M_t)|X^{\bar{u}}_t = x]$:
\begin{talign}
\begin{split} \label{eq:Q_learning}
    \mathcal{L}_{\mathrm{Cost-SOCM}}(u,M) &:= 
    % \mathbb{E} \big[
    \int_0^{T} \big\| u(X^v_t,t) 
    + \sigma(t)^{\top} \xi(t,\bar{u},X^{\bar{u}},B,M_t) \big\|^2 \, \mathrm{d}t, 
    % \big],
    \quad \bar{u} = \texttt{stopgrad}(u),
    % \rvert_{v = \mathrm{stopgrad}(u)}~,
\end{split}
\end{talign}
where 
\begin{talign} 
\begin{split} \label{eq:cost_SOCM_matching_vector_field_def}
    &\xi(t,\bar{u},X^{\bar{u}},B,M_t) \\ &= \int_t^T M_t(s) \nabla_x (f(X^{\bar{u}}_s,s) + \frac{1}{2}\|\bar{u}(X^{\bar{u}}_s,s)\|^2) \, \mathrm{d}s + M_t(T) \nabla g(X^{\bar{u}}_T) \, \textcolor{blue}{+ \, 
    % \lambda^{1/2} 
    \int_t^T M_t(s) \nabla_x \bar{u}(X^{\bar{u}}_s,s) \, \mathrm{d}B_s}
    \\ &\qquad - \! 
    % \lambda^{-1/2} 
    \big( \int_t^T (f(X^{\bar{u}}_s,s) \! + \! \frac{1}{2}\|\bar{u}(X^{\bar{u}}_s,s)\|^2) \, \mathrm{d}s \! + \! g(X^{\bar{u}}_T) \textcolor{blue}{ \, + \, 
    % \lambda^{1/2} 
    \int_t^T \bar{u}(X^{\bar{u}}_s,s) \, \mathrm{d}B_s} \big) \\ &\qquad\qquad\quad \times \! \big( \int_t^T (M_t(s) \nabla_x (b(X^{\bar{u}}_s,s) + \sigma(s) \bar{u}(X^{\bar{u}}_s,s)) - \partial_s M_t(s)) (\sigma^{-1})^{\top}(s) \mathrm{d}B_s \big),
    % &\xi(t,v,X^v,B,M_t) \\ &= \sigma(t)^{\top} \big( - \int_t^T M_t(s) \nabla_x (f(X^v_s,s) + \frac{1}{2}\|v(X^v_s,s)\|^2) \, \mathrm{d}s \\ & \qquad\qquad - M_t(T) \nabla g(X^v_T) \textcolor{blue}{- \lambda^{1/2} \int_t^T M_t(s) \nabla_x v(X^v_s,s) \, \mathrm{d}B_s}
    % \\ &\qquad\qquad - \! \lambda^{-1/2} \big( \int_t^T (f(X^v_s,s) \! + \! \frac{1}{2}\|v(X^v_s,s)\|^2) \, \mathrm{d}s \! + \! g(X^v_T) \textcolor{blue}{+ \lambda^{1/2} \int_t^T v(X^v_s,s) \, \mathrm{d}B_s} \big) \\ &\qquad\qquad\quad \times \! \big( \int_t^T (M_t(s) \nabla_x (b(X^v_s,s) + \sigma(s) v(X^v_s,s)) - \partial_s M_t(s)) (\sigma^{-1})^{\top}(s) \mathrm{d}B_s \big) \big).
\end{split}
\end{talign}
and $M$ is a family of matrix-valued functions that satisfies \autoref{ass:M}.
As shown by \autoref{prop:SOCM_cost}, the expectation $\mathbb{E}[\xi(t,X^{\bar{u}},B,M_t)|X^{\bar{u}}_t = x]$ is equal to the gradient of the expected cost functional $\nabla_x J(\bar{u};x,t)$, which means that in expectation, $\mathcal{L}_{\mathrm{Cost-SOCM}}$ has the same gradients as $\mathcal{L}_{\mathrm{Cont-Adj}}$. 
\begin{proposition} \label{prop:SOCM_cost}
In expectation, the gradients of the loss $\mathcal{L}_{\mathrm{Cost-SOCM}}$ are equal to the gradients of the continuous adjoint loss $\mathcal{L}_{\mathrm{Cont-Adj}}$. In particular, we have that 
\begin{talign}
\mathbb{E}[\xi(t,\bar{u},X^{\bar{u}},B,M_t) | X^{\bar{u}}_t = x] = \nabla \mathbb{E} \big[ \int_0^T \big(\frac{1}{2} \|\bar{u}(X^{\bar{u}}_t,t)\|^2 \! + \! f(X^{\bar{u}}_t,t) \big) \, \mathrm{d}t \! + \! g(X^{\bar{u}}_T) | X^{\bar{u}}_0 = x \big].
\end{talign}
\end{proposition}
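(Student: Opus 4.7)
The plan is to apply the path-wise reparameterization trick (\autoref{lem:cond_exp_rewritten}, \autoref{prop:cond_exp_rewritten}, originally \cite[Prop.~1]{domingoenrich2023stochastic}) directly to the cost functional $J(\bar{u}; x, t)$ in \eqref{eq:cost_functional}. The derivation parallels the one for the SOCM matching vector field $\omega$ in \eqref{eq:matching_vector_field_def}, with two modifications: sampling occurs under the process controlled by $\bar{u}$, so no importance weight $\alpha$ appears; and the effective drift $\tilde{b} := b + \sigma \bar{u}$ and effective running cost $\tilde{f} := f + \tfrac{1}{2}\|\bar{u}\|^2$ both depend on $\bar{u}$, which is the source of the extra $\bar{u}$-dependent terms in \eqref{eq:cost_SOCM_matching_vector_field_def} relative to \eqref{eq:matching_vector_field_def}.

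First I would differentiate $J(\bar{u}; x, t)$ under the expectation by the classical sensitivity method, obtaining a conditional expectation of $\nabla_x \tilde{f}$ and $\nabla g$ contracted against the sensitivity $Y_s := \nabla_x X^{\bar{u}}_s$. Since $\sigma$ is state-independent, $Y_s$ solves the linear ODE $\mathrm{d}Y_s = \nabla_x \tilde{b}(X^{\bar{u}}_s, s)\, Y_s \, \mathrm{d}s$ with $Y_t = \mathrm{Id}$. For any matrix-valued process $M_t(s)$ satisfying \autoref{ass:M}, the product rule applied to $M_t(s) Y_s$ converts $Y_s$ into an integral of $(M_t(s) \nabla_x \tilde{b} - \partial_s M_t(s)) Y_s$ plus boundary terms; the boundary $M_t(t) Y_t = \mathrm{Id}$ produces the pathwise-gradient contributions $\int_t^T M_t(s) \nabla_x \tilde{f}\, \mathrm{d}s$, and the $s = T$ boundary produces $M_t(T) \nabla g(X^{\bar{u}}_T)$, matching the first line of \eqref{eq:cost_SOCM_matching_vector_field_def}.

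The second ingredient is a Bismut/Malliavin-style identity that turns the residual deterministic integral of the sensitivity into a REINFORCE-type product: for an adapted process $h_s$ and functional $F$, the conditional expectation of $F(X^{\bar{u}}) \int_t^T h_s Y_s \, \mathrm{d}s$ equals, up to a pathwise-derivative correction, the conditional expectation of $F(X^{\bar{u}}) \int_t^T h_s (\sigma^{-1})^{\top}\, \mathrm{d}B_s$. Applying this with $F$ the accumulated cost $\int_t^T \tilde{f}\, \mathrm{d}s + g(X^{\bar{u}}_T)$ and $h_s = M_t(s) \nabla_x \tilde{b} - \partial_s M_t(s)$ yields the product term on the last two lines of \eqref{eq:cost_SOCM_matching_vector_field_def}. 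The STL correction terms highlighted in blue arise because $\bar{u}$ enters both $\tilde{b}$ and $\tilde{f}$: differentiating through $\bar{u}$ yields $\nabla_x \bar{u}$ contracted with $\mathrm{d}B_s$, producing the additional Itô integrals, which are conditionally mean-zero and therefore preserve the identity while reducing variance.

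The main obstacle will be bookkeeping the $\bar{u}$-dependent contributions: because $\bar{u}$ appears in both the drift and the running cost, its gradient must be distributed carefully between the pathwise and score-function pieces so that the STL Itô integrals end up exactly as displayed in \eqref{eq:cost_SOCM_matching_vector_field_def}. Once the identity $\mathbb{E}[\xi(t,\bar{u},X^{\bar{u}},B,M_t) \mid X^{\bar{u}}_t = x] = \nabla_x J(\bar{u}; x, t)$ is established, the equivalence of expected gradients with $\mathcal{L}_{\mathrm{Cont-Adj}}$ follows directly from comparison with \eqref{eq:continuous_adjoint_grads}.
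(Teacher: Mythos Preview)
Your high-level plan is correct: the result follows by applying the path-wise reparameterization trick to the cost functional under the $\bar{u}$-controlled process, and the equality of expected gradients with $\mathcal{L}_{\mathrm{Cont-Adj}}$ then follows from \eqref{eq:continuous_adjoint_grads} together with $\mathbb{E}[a(t;X^{\bar{u}},\bar{u})\mid X^{\bar{u}}_t=x]=\nabla_x J(\bar{u};x,t)$.

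Where you diverge from the paper is in execution. The paper does not go through the sensitivity process $Y_s=\nabla_x X^{\bar{u}}_s$, the product rule on $M_t(s)Y_s$, or a separately stated Bismut/Malliavin identity. Instead it applies \autoref{lem:cond_exp_rewritten} as a black box: take the process to be $X^{\bar{u}}$ on $[t,T]$, set
\[
\exp(-F(X^{\bar{u}}))=\int_t^T\big(\tfrac12\|\bar{u}\|^2+f\big)\,\mathrm{d}s+g(X^{\bar{u}}_T)\;\textcolor{blue}{+\int_t^T\langle\bar{u},\mathrm{d}B_s\rangle},
\]
compute $\nabla_z F(X^{\bar{u}}+\psi(z,\cdot))|_{z=0}$ by the quotient rule, and choose $\psi(z,s)=M_t(s)^\top z$. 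The formula \eqref{eq:pathwise_rt} then spits out exactly the expression \eqref{eq:cost_SOCM_matching_vector_field_def} for $\xi$, with the combination $M_t(s)\nabla_x\tilde{b}-\partial_s M_t(s)$ arising directly from the $\nabla_z\psi\,\nabla_x b-\nabla_z\partial_s\psi$ term in \eqref{eq:pathwise_rt}, and the STL stochastic integrals coming from differentiating the Itô term inside $F$.

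Your route would work, but the ``Bismut/Malliavin-style identity'' you invoke \emph{is} the content of \autoref{lem:cond_exp_rewritten}, so you are effectively re-deriving that theorem inside this proof rather than citing it. Also note that the product rule on $M_t(s)Y_s$ gives $\partial_s M_t+M_t\nabla_x\tilde{b}$, not the combination $M_t\nabla_x\tilde{b}-\partial_s M_t$ you wrote; the minus sign only emerges once you pass to the Girsanov/score-function side, which is exactly what \autoref{lem:cond_exp_rewritten} packages for you. The paper's approach is shorter; yours makes the provenance of each term in $\xi$ more explicit.
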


\paragraph{Unweighted SOCM loss} As implied by its name, the difference between the Unweighted SOCM loss and the SOCM loss is that the former does not include the factor $\alpha$:
\begin{talign}
\begin{split} \label{eq:UW_SOCM}
    \mathcal{L}_{\mathrm{UW-SOCM}}(u,M) &:= 
    % \mathbb{E} \big[
    \int_0^{T} \big\| u(X^{\bar{u}}_t,t) 
    + \sigma(t)^{\top} \omega(t,\bar{u},X^{\bar{u}},B,M_t) \big\|^2 \, \mathrm{d}t, 
    % \big], 
    \qquad \bar{u} = \texttt{stopgrad}(u).
\end{split}
\end{talign}
The upside of removing $\alpha$ is that the gradient variance does not blow up even with large scale problems. The downside is that the theoretical guarantees of this loss are weak: 
\begin{proposition} \label{prop:UW_SOCM_optimality}
    % For any minimizer $(u^*,M^*)$ of $\mathcal{L}^{v}_{\mathrm{UW-SOCM}}$, $u^*$ is the optimal control.
    For any $M$, the optimal control $u^*$ is a critical point of $\mathcal{L}_{\mathrm{UW-SOCM}}(\cdot,M)$. %\eqref{eq:UW_SOCM}.
\end{proposition}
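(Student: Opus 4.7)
The plan is to derive the first-order optimality condition for the expected loss $\mathbb{E}[\mathcal{L}_{\mathrm{UW-SOCM}}(u, M)]$ and then verify that $u^*$ satisfies it, by combining the SOCM identity~\eqref{eq:u_star_rewritten} with a change-of-measure argument. First I would compute the functional derivative of the expected loss with respect to $u$. Because $\bar u = \texttt{stopgrad}(u)$ freezes both the sampling distribution of $X^{\bar u}$ and the target vector field $\omega(t, \bar u, X^{\bar u}, B, M_t)$, a routine computation gives that $u$ is a critical point if and only if
\begin{talign*}
u(x, t) \;=\; -\sigma(t)^{\top}\, \mathbb{E}\bigl[\omega(t, \bar u, X^{\bar u}, B, M_t) \mid X^{\bar u}_t = x\bigr]
\end{talign*}
for almost every $(x, t)$ in the support of $(X^{\bar u}_t, t)$. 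Specializing to $u = u^*$ (so that $\bar u = u^*$ and $X^{\bar u} = X^{u^*}$), this condition reads $u^*(x, t) = -\sigma(t)^{\top}\, \mathbb{E}_{\mathbb{P}^{u^*}}[\omega(t, u^*, X^{u^*}, B, M_t) \mid X^{u^*}_t = x]$, so the task reduces to establishing this one identity.

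To prove it, I would invoke the SOCM identity~\eqref{eq:u_star_rewritten}: for any $\bar u \in \mathcal{U}$ and any $M$ satisfying Assumption~\ref{ass:M},
\begin{talign*}
u^*(x, t) \;=\; -\sigma(t)^{\top}\, \frac{\mathbb{E}[\omega(t, \bar u, X^{\bar u}, B, M_t)\, \alpha(\bar u, X^{\bar u}, B) \mid X^{\bar u}_t = x]}{\mathbb{E}[\alpha(\bar u, X^{\bar u}, B) \mid X^{\bar u}_t = x]}.
\end{talign*}
A standard Girsanov plus Feynman--Kac computation (the same one underlying the path-wise reparameterization trick of \cite{domingoenrich2023stochastic}) shows that, along the $\bar u$-trajectory, $\alpha(\bar u, X^{\bar u}, B)$ is proportional to $\frac{d\mathbb{P}^{u^*}}{d\mathbb{P}^{\bar u}}$. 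By Bayes' rule for conditional expectation under change of measure, the ratio on the right-hand side then collapses to $\mathbb{E}_{\mathbb{P}^{u^*}}[\omega(t, \bar u, X^{u^*}, B, M_t) \mid X^{u^*}_t = x]$. Sending $\bar u \to u^*$ yields exactly the critical-point condition derived in the first step, so $u^*$ is a critical point of $\mathcal{L}_{\mathrm{UW-SOCM}}(\cdot, M)$.

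The main subtle point will be the bookkeeping of the stochastic integral $\int_t^T (M_t(s)\nabla_x b(X^{\bar u}_s, s) - \partial_s M_t(s))(\sigma^{-1})^{\top}\, dB_s$ that appears inside $\omega$ when we change measure from $\mathbb{P}^{\bar u}$ to $\mathbb{P}^{u^*}$. By Girsanov, $dB$ acquires a drift of $u^* - \bar u$ under $\mathbb{P}^{u^*}$, and combining this drift with the $-(M_t \nabla_x b - \partial_s M_t)\sigma^{-\top} \bar u$ term already present in the definition of $\omega$ is precisely what is needed to rewrite $\omega(t, \bar u, \cdot, B, M_t)$ as $\omega(t, u^*, \cdot, B, M_t)$. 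Once $\bar u$ is set equal to $u^*$ no correction remains and the identity is obtained directly, which is why I expect the proof to ride almost entirely on the SOCM machinery rather than requiring a new computation.
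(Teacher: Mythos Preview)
Your overall strategy is sound: compute the first variation of the expected UW-SOCM loss, obtain the critical-point condition $u(x,t) = -\sigma(t)^{\top}\mathbb{E}[\omega \mid X^{\bar u}_t = x]$, and then verify it at $u = u^*$ by invoking the SOCM identity~\eqref{eq:u_star_rewritten}. This is also how the paper proceeds. The divergence is in \emph{how} you collapse the weighted ratio
\[
\frac{\mathbb{E}[\omega\,\alpha \mid X^{\bar u}_t = x]}{\mathbb{E}[\alpha \mid X^{\bar u}_t = x]}
\]
into the plain conditional expectation $\mathbb{E}[\omega \mid X^{u^*}_t = x]$.

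You propose to do this via Bayes' rule, asserting that $\alpha(\bar u, X^{\bar u}, B)$ is proportional to $\tfrac{d\mathbb{P}^{u^*}}{d\mathbb{P}^{\bar u}}$. The issue is that this ``proportionality'' factor is not a deterministic constant: it equals $\exp(-V(X^{\bar u}_0,0))$, which depends on the (random) initial point $X_0$. Consequently, the abstract change-of-measure Bayes formula does not by itself reduce the ratio to $\mathbb{E}_{\mathbb{P}^{u^*}}[\omega \mid X^{u^*}_t = x]$; you still need to argue that $\exp(-V(X_0,0))$ is conditionally independent of $\omega$ given $X_t$. That is precisely the Markov-property step the paper makes explicit: it sets $\bar u = u^*$ directly, observes that $\alpha(u^*, X^{u^*}, B) = \exp(-V(X^{u^*}_0,0))$ is $X_0$-measurable, and uses that $X^{u^*}_{[t,T]}$ (hence $\omega$) and $X^{u^*}_0$ are conditionally independent given $X^{u^*}_t$, so the conditional expectation of the product factorizes. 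No change of measure, no Girsanov correction to the stochastic integral inside $\omega$, and no need to ``send $\bar u \to u^*$'' through an intermediate $\bar u$.

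In short, your detour through Bayes' rule and the last-paragraph Girsanov bookkeeping is unnecessary and, as written, incomplete without the Markov argument. If you insert the conditional-independence step you recover exactly the paper's proof, only with extra machinery layered on top. If you instead assume a deterministic initial condition $p_0 = \delta_{x_0}$, your argument works verbatim since the proportionality factor becomes the constant $\exp(-V(x_0,0))$; but the statement is claimed for general $p_0$.
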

\autoref{prop:UW_SOCM_optimality} does not preclude the existence of other critical points beyond $u^*$. In fact, from the expression of $\omega$ \eqref{eq:matching_vector_field_def}, we see that if we choose $M_t$ such that $M_t(T) = 0$, the loss becomes underspecified because it has no information about the terminal cost $g$ (the optimal control for any cost $g$ is a critical point!). In practice, UW-SOCM is the best-performing algorithm in several experimental settings of \autoref{sec:simple_exp}, but fails to convergence in other ones.

\section{A taxonomy of loss functions} \label{sec:taxonomy}

The SOC losses presented in \autoref{subsec:existing_losses_app} and \autoref{subsec:new_losses_app} are connected to each other in multiple ways. These connections have been partially addressed in those sections, but it is useful for researchers to have a clear, systematic, complete picture of all the losses that are available. \autoref{thm:main} specifies the sets of losses that have equal gradients in expectation, up to numerical errors (see proofs in \autoref{sec:proofs_taxonomy}).  
% \brian{Is there a known ordering of the variances?  If so, is it relevant to say that here as well?} \carles{Not in general}

\begin{theorem}[A taxonomy of SOC losses] \label{thm:main}
    In expectation and in the limit where SDE simulation stepsize goes to zero, the gradients of the following sets of loss functions are equal up to constant factors: 
    \begin{itemize}[leftmargin=0.5cm]
    \item Class I: Discrete Adjoint \eqref{eq:L_RE}, Continuous Adjoint \eqref{eq:basic_adjoint_matching}, REINFORCE \eqref{eq:REINFORCE}, REINFORCE (future rewards) \eqref{eq:REINFORCE_future_rewards}, 
    % Q-learning 
    Cost-SOCM
    \eqref{eq:Q_learning}.
    \item Class II: Adjoint Matching \eqref{eq:adjoint_matching} and Work-SOCM \eqref{eq:Q_learning_2}.
    \item Class III: SOCM \eqref{eq:SOCM_loss}, SOCM-Adjoint \eqref{eq:SOCM_adjoint_loss}, Cross Entropy \eqref{eq:L_CE}.
    \item Class IV: Log-variance \eqref{eq:log_variance_loss}, Moment \eqref{eq:moment_loss} (if batch size $\to \infty$ in the former and $y_0$ is optimized instantaneously in the latter).
    \item Class V: Variance \eqref{eq:variance_loss}.
    \item Class VI: Unweighted SOCM \eqref{eq:UW_SOCM}.
    \end{itemize}
    % In expectation, the gradients of the following losses with respect to the parameters of $u$ are equal: SOCM \eqref{eq:SOCM_loss}, SOCM-Adjoint \eqref{eq:SOCM_adjoint_loss}, Cross Entropy \eqref{eq:L_CE}. Finally, in expectation, the gradients of the log-variance \eqref{eq:log_variance_loss} and the moment \eqref{eq:moment_loss} losses are equal (if $y_0$ is optimized instantaneously).
\end{theorem}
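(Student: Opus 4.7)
My overall strategy is to observe that most losses in the list have the shape $\mathbb{E}[\int_0^T \|u(X^{\bar u}_t,t) - Y(t,X^{\bar u},B)\|^2\,\mathrm{d}t]$ (possibly multiplied by an $\alpha$-weight), for some auxiliary stochastic target $Y$. For such a loss, the $\theta$-gradient is $2\,\mathbb{E}[\int_0^T \nabla_\theta u(X^{\bar u}_t,t)^\top(u(X^{\bar u}_t,t) - \mathbb{E}[Y\mid X^{\bar u}_t])\,\mathrm{d}t]$, so two losses are gradient-equivalent in expectation whenever their conditional targets $\mathbb{E}[Y\mid X^{\bar u}_t{=}x]$ coincide (and, when an $\alpha$-weight appears, whenever their $\alpha$-reweighted conditional targets coincide). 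The remaining losses (REINFORCE, Variance/Log-Variance/Moment, Discrete Adjoint) are not of this squared form and require dedicated arguments.

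With this template in hand, Class II collapses immediately: \autoref{prop:work_adjoint_app} gives $\mathbb{E}[\tilde a(t,X^{\bar u})\mid X^{\bar u}_t{=}x]=\mathbb{E}[\tilde\xi(t,X^{\bar u},B,M_t)\mid X^{\bar u}_t{=}x]$, so Adjoint-Matching and Work-SOCM share the same conditional target. For Class I, Continuous-Adjoint equals Basic-Adjoint-Matching by Prop.~6 of \citet{domingoenrich2024adjoint}, Discrete-Adjoint converges to Continuous-Adjoint by standard consistency of Euler--Maruyama for the adjoint SDE, and Cost-SOCM joins the class by \autoref{prop:SOCM_cost}. Class III is handled by the path-wise reparameterization trick: the conditional targets of SOCM and SOCM-Adjoint, after dividing by the conditional $\alpha$-weight, both equal $u^*(x,t)$; the cross-entropy loss has the same $\alpha$-weighted target $u^*$ after expanding the square and absorbing $u$-independent terms. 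Classes IV and V are one-line computations: setting $y_0$ to its conditional minimizer turns $\mathbb{E}[\mathcal L_{\mathrm{Mom}}]$ into $\mathcal L^{\mathrm{log}}_{\mathrm{Var}_{\bar u}}$, and as the batch size tends to infinity the empirical variance in Algorithm \ref{alg:IDO} converges to the population variance. The singleton Classes V (Variance) and VI (Unweighted-SOCM) are justified by the exponential scaling of \eqref{eq:variance_loss} (which produces a genuinely different gradient) and by \autoref{prop:UW_SOCM_optimality} together with the remark that $u^*$ need not be the unique critical point.

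The main obstacle is the equivalence \textit{REINFORCE $\leftrightarrow$ Continuous Adjoint}, which is the one place the squared-target template does not apply. My plan is to express $\nabla_\theta \mathbb{E}[\mathcal L(u;X^u)]$ in two ways. On one hand, applying Girsanov to shift the parametric dependence from the drift of $X^u$ onto the Radon--Nikodym derivative produces the score-function identity $\nabla_\theta \mathbb{E}[\mathcal L(u;X^u)] = \mathbb{E}[\mathcal L(u;X^{\bar u})\,\int_0^T \langle \nabla_\theta u(X^{\bar u}_t,t),\mathrm{d}B_t\rangle]$ plus the explicit $\theta$-dependence through $\tfrac12\|u\|^2$, recovering \eqref{eq:REINFORCE}; causality (the martingale increments $\mathrm{d}B_t$ are orthogonal to the past contributions to the cost) then reduces this to \eqref{eq:REINFORCE_future_rewards} with the same expectation. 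On the other hand, applying integration by parts along the adjoint ODE \eqref{eq:cont_adjoint_1}--\eqref{eq:cont_adjoint_2} yields exactly \eqref{eq:continuous_adjoint_grads}, which by Prop.~6 of \citet{domingoenrich2024adjoint} agrees with the Basic-Adjoint-Matching gradient. The delicate point is showing that the STL correction terms highlighted in blue contribute zero to the expected gradient (they are stochastic integrals of $\bar u$-measurable integrands against $\mathrm{d}B_t$), so that the STL and non-STL variants live in the same class while differing only in variance; this is a direct application of Itô isometry together with the stop-gradient decoupling of $\bar u$ from the parameters $\theta$.
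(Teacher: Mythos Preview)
Your proposal is correct and tracks the paper's proof closely: the paper also proceeds by pairwise equivalences (REINFORCE $\leftrightarrow$ Discrete Adjoint via Girsanov as in \autoref{prop:derivation_REINFORCE}, REINFORCE $\leftrightarrow$ REINFORCE-FR via the martingale property and Fubini, Discrete $\leftrightarrow$ Continuous Adjoint, Continuous Adjoint $\leftrightarrow$ Cost-SOCM via \autoref{prop:SOCM_cost}, Adjoint Matching $\leftrightarrow$ Work-SOCM via \autoref{prop:work_adjoint_app}, SOCM $\leftrightarrow$ Cross-Entropy and SOCM $\leftrightarrow$ SOCM-Adjoint by reducing both to the same $e^{-V}$-weighted $\|u-u^*\|^2$ loss, and Log-variance $\leftrightarrow$ Moment by optimizing out $y_0$), and your squared-target template is exactly the bias--variance decomposition the paper invokes at each step (cf.\ \eqref{eq:work_SOCM_rewritten}). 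One terminological slip: the STL terms vanish in expectation by the \emph{martingale property} of It\^o integrals, not by It\^o isometry.
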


\begin{figure}[h]
% \begin{adjustbox}{margin={0.5cm, 1cm, 0.5cm, 1cm}}
\hspace{-1cm}
\begin{tikzpicture}[%xshift=-3cm,
    blocka/.style={draw, thick,
    text width=1.09\textwidth, 
    rectangle, rounded corners, minimum height=1cm},
    blockb/.style={draw, thick, 
    text width=1.09\textwidth, 
    rectangle, rounded corners, minimum height=1cm},
    blockc/.style={draw, thick, 
    % text width=0.135\textwidth, 
    text width=0.155\textwidth, 
    align=left, 
    rectangle, rounded corners, minimum height=1cm},
    blockd/.style={draw, thick, 
    % text width=0.3\textwidth,
    text width=0.33\textwidth,
    align=center, 
    rectangle, rounded corners, minimum height=1cm},
    blocke/.style={draw, thick, 
    % text width=0.66\textwidth, 
    text width=0.71\textwidth,
    rectangle, rounded corners, minimum height=1cm},
    blockf/.style={draw, thick, 
    % text width=0.3\textwidth, 
    text width=0.33\textwidth,
    align=left, 
    rectangle, rounded corners, minimum height=1cm},
    line/.style={draw, thick, dashed},
    scalable/.style={text=blue},
    nonscalable/.style={text=red},
    node distance=0.5cm,
    label/.style={above=of groupA, align=center}
]

\node[blocka] (groupA) {
    \qquad\qquad\textcolor{blue}{REINFORCE} \qquad\qquad\qquad\qquad \ \ \textcolor{blue}{\uline{Cost-SOCM (+STL)}} \qquad\qquad\qquad\qquad \textcolor{blue}{Discrete Adjoint} \\ \quad \textcolor{blue}{REINFORCE (future rewards)} \qquad\qquad\qquad\qquad\qquad\qquad\qquad\qquad\qquad\quad 
    % \textcolor{blue}{SOCM+}
    \textcolor{blue}{Continuous Adjoint (+STL)}
    % \qquad\qquad\qquad\qquad\qquad\qquad\qquad\quad \textcolor{blue}{Unweighted SOCM-adjoint}
};
\node[blocke, below=0.2cm of groupA.south east, anchor=north east] (groupA2) {
    \qquad\qquad\quad  \textcolor{blue}{\uline{Work-SOCM}} \qquad\qquad\qquad\quad \ \textcolor{blue}{Adjoint Matching (+STL)} 
    % \\ \qquad\qquad\qquad\qquad\qquad\qquad\qquad\qquad\quad \textcolor{blue}{Work-SOCM+}
};
\node[blockb, below=0.2cm of groupA2.south east, anchor=north east] (groupB) {
    \qquad\qquad\textcolor{red}{Cross Entropy} \quad\qquad\qquad\qquad\qquad\qquad \textcolor{red}{SOCM} \qquad\qquad\qquad\qquad\qquad\qquad \textcolor{red}{SOCM-Adjoint}
};
\node[blockc, below=0.2cm of groupB.south west, anchor=north west] (groupC) {
    \ \textcolor{blue}{Log-variance} \\ \quad \  \textcolor{blue}{Moment}
};
\node[blockc, right=0.2cm of groupC] (groupE) {
    \quad \ \textcolor{red}{Variance}
};
% \node[blockd, right=of groupC] (groupD) {
% \node[blockc, below=0.2cm of groupB.south east, anchor=north east] (groupD) {
%     \qquad\textcolor{blue}{Unweighted SOCM}
% };
% \node[blocke, below=0.2cm of groupB.south east, anchor=north east] (groupD) {
%     \qquad\quad \ \textcolor{blue}{SOCM-Work} \qquad\qquad\quad \ \textcolor{blue}{Unweighted SOCM-Adjoint}
% };
\node[blockf, below=0.2cm of groupB.south east, anchor=north east] (groupF) {
    \qquad\quad \textcolor{blue}{\uline{Unweighted SOCM}}
};

% Labels for regions
\node[label] (labelA) at ([yshift=-0.2cm,xshift=-0.37\textwidth]groupA.north) {\footnotesize \textbf{No differentiability requirements}};
\node[label] (labelB) at ([yshift=-0.3cm]groupA.north) {\footnotesize \textbf{Base drift, state cost differentiable,} \\ \footnotesize \textbf{terminal cost optionally diff.}};
\node[label] (labelC) at ([yshift=-0.3cm,xshift=0.37\textwidth]groupA.north) {\footnotesize \textbf{Base drift, state \& terminal cost} \\  \small \textbf{differentiable}};

% Dashed lines to delimit regions
\draw[line] ([yshift=0.7cm,xshift=2.8cm]labelA.south) -- ([yshift=-5.1cm,xshift=2.8cm]labelA.south);
\draw[line] ([yshift=0.8cm,xshift=-2.7cm]labelC.south) -- ([yshift=-5.0cm,xshift=-2.7cm]labelC.south);

\end{tikzpicture}
% \end{adjustbox}

\caption{Training losses for stochastic optimal control problems. Losses in blue scale to high-dimensions, while losses in red do not, as the gradient variance blows up exponentially with the dimension. By \autoref{thm:main}, losses in the same block (there are five different blocks) are equal in expectation, i.e. taking infinite batch size would yield the same gradient update. Novel losses are underlined, and losses that admit a Sticking The Landing version are identified with the suffix (+STL).}
\label{fig:SOC_taxonomy}
\end{figure}

Note that theoretical guarantees proven for a single loss function apply to all the loss functions in the class, because they all share the same gradient in expectation. That allows us to make unified statements about the convergence properties for each class, relying on the theoretical guarantees that we review and claim in \autoref{sec:more_losses}.

Loss functions in Class I are explicitly or implicitly optimizing the control objective, or equivalently the KL divergence between $\mathbb{P}^{u}$ and $\mathbb{P}^{u^*}$, through gradient descent. Even though the control objective is not convex in function space, the only critical point of all these losses is the optimal control. Similarly, loss functions in Class III are optimizing the KL divergence between $\mathbb{P}^{u^*}$ and $\mathbb{P}^{u}$. This is a strongly convex functional with respect to the control $u$, and the only critical point of all these losses is the optimal control. Loss functions in Class II have a single critical point which is the optimal control, but they cannot be regarded as performing gradient-based optimization on any objective functional.  

The information in \autoref{thm:main} is summarized in \autoref{fig:SOC_taxonomy}. Additionally, the diagram contains information on the losses that require differentiating through the state and terminal costs, those that do not, and those that differentiate the state cost and may or may not differentiate the terminal costs. The latter are SOCM based losses in which the reparameterization matrices $M$ can be chosen such that $M_t(T) = 0$ for all $t \in [0,T]$, which means that we do not need to evaluate the gradient of $g$.

\section{Simple stochastic optimal control experiments} \label{sec:simple_exp}

We benchmark all the loss functions in our taxonomy in \autoref{fig:SOC_taxonomy} on four experimental settings where we have access to the ground truth optimal control, which means that we can compute the control $L^2$ error incurred by each algorithm throughout training. The settings are \textsc{Quadratic Ornstein Uhlenbeck, easy} and \textsc{hard}, and \textsc{Double Well, easy} and \textsc{hard}, and \textsc{Linear}, and were used in \cite{domingoenrich2023stochastic,nüsken2023solving}. The code can be found at \url{https://github.com/facebookresearch/SOC-matching/tree/deep-Q-learning}.
% (more details in \autoref{subsec:detail_exp}).  

\autoref{fig:control_L2_errors} contains all the plots; errors are averaged using an exponential moving average. The top left subfigure corresponds to an easy setting where all algorithms perform well. The best-performing losses are SOCM and UW-SOCM, which we introduce in \autoref{subsec:new_losses_app}. The top right subfigure is on a more challenging setting in which the running cost $f$ and the terminal cost $g$ are larger, where losses that do not scale well (those in red in \autoref{fig:SOC_taxonomy}) struggle. Larger costs make these losses fail completely, as gradient variance is too high. UW-SOCM achieves the lowest error. 

The middle subfigures are for settings where the optimal process has 1024 modes. Algorithms need to explore all the modes, which is why the control $L^2$ errors decrease slower than in the top subfigures, and also the reason why some plots have occasional bounces. In the middle left subfigure, Adjoint Matching achieves the lowest error, and UW-SOCM is the best loss among those for which the error decreases monotonically. In the middle right subfigure, UW-SOCM shows unstable behavior which we attribute to its weak theoretical guarantees (\autoref{prop:UW_SOCM_optimality}), and the best loss is SOCM. In the bottom figure, SOCM, UW-SOCM, and SOCM-Adjoint perform similarly.

In summary, while SOCM and UW-SOCM are the best loss functions in some settings, they struggle in others. The failure modes of SOCM are high dimensions and high-magnitude cost functions, both of which cause the importance weight $\alpha$ to have high variance, and one failure mode of UW-SOCM is multimodal problems, as the control fails to converge. In comparison, the performance of Continuous Adjoint and Adjoint Matching is not as good in simple settings such as \textsc{Quadratic OU, easy} and \textsc{Linear}, but they behave well across the board. Overall, SOCM-Cost and SOCM-Work have a similar performance, but at a higher computational cost, and the Log-Variance, Variance, Moment and Discrete Adjoint losses perform noticeably worse. Among the losses that converge, the worst ones across the board are the two REINFORCE losses. Thus, our experiments show that the Continuous Adjoint loss and the REINFORCE loss, which have the same gradient according to our taxonomy, have completely different behaviors, and this is due to different gradient variances.
In \autoref{subsec:additional_exp} we show additional plots and include more detailed comparisons among different loss functions, and in \autoref{subsec:detail_exp} we include more information about the experiments.

\begin{figure} %[h]
    \centering
    \makebox[\textwidth][c]{%
        \includegraphics[width=0.56\textwidth]{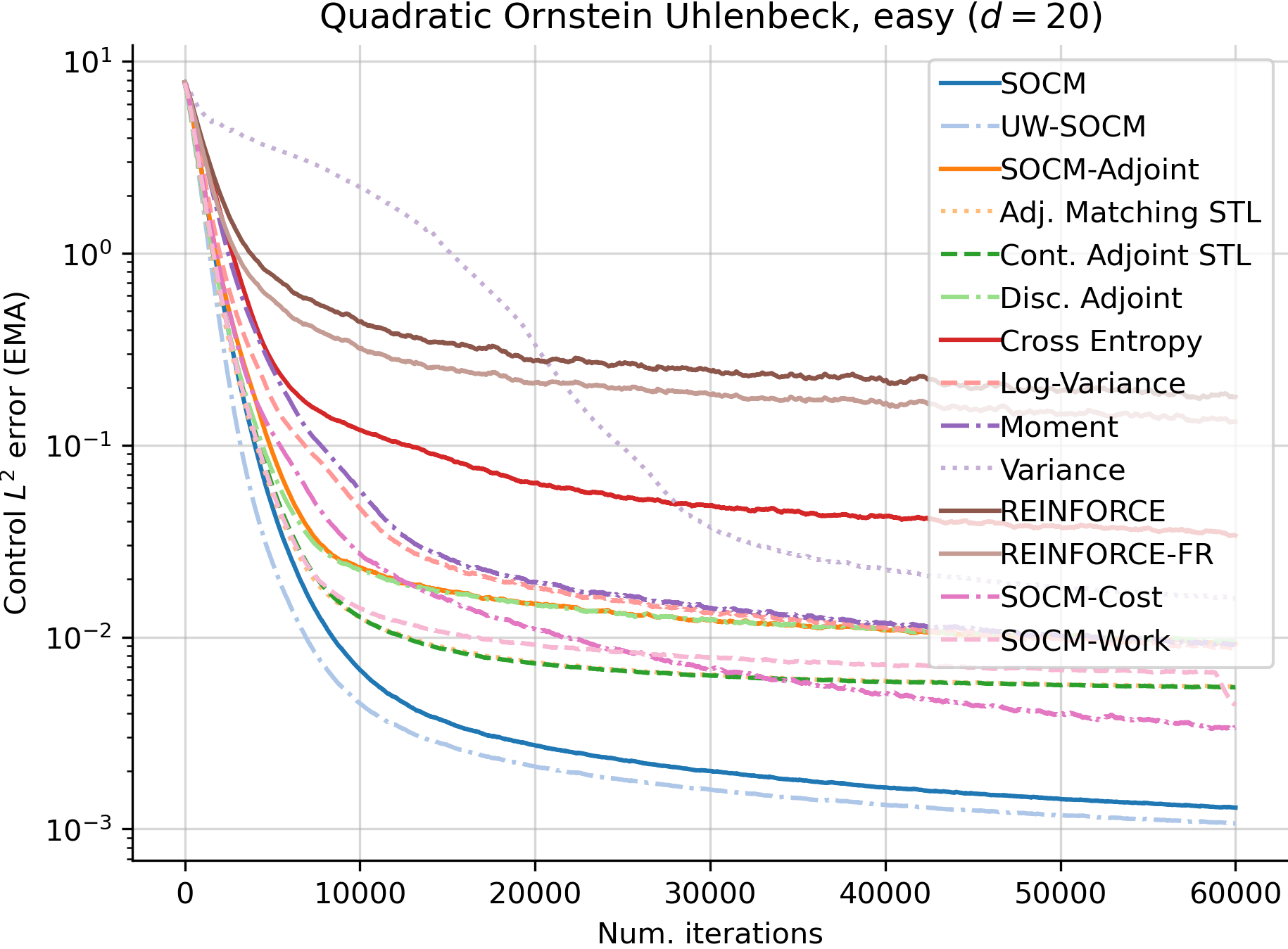}%
        \includegraphics[width=0.56\textwidth]{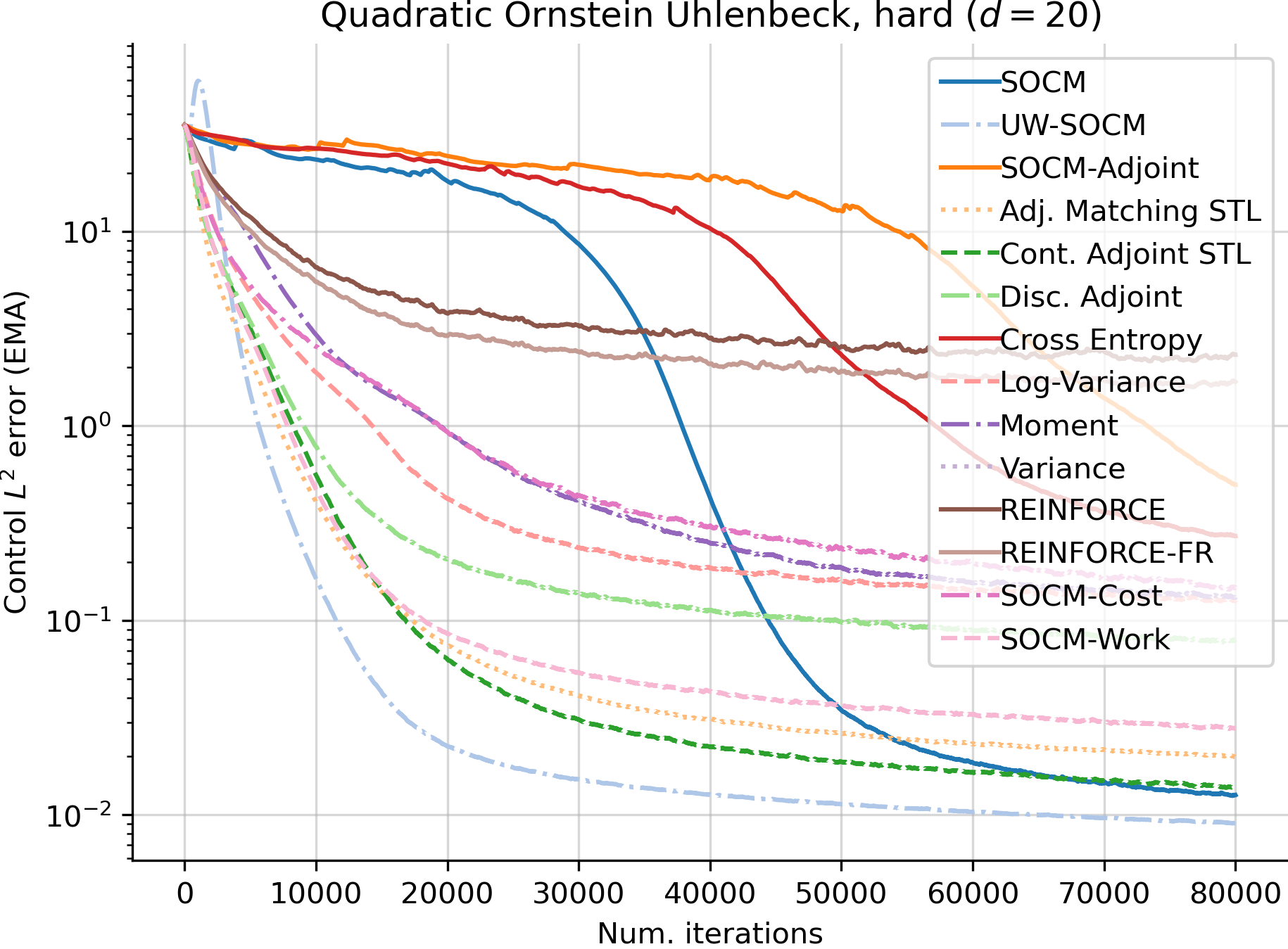}%
    }\\
    \makebox[\textwidth][c]{%
        \includegraphics[width=0.56\textwidth]{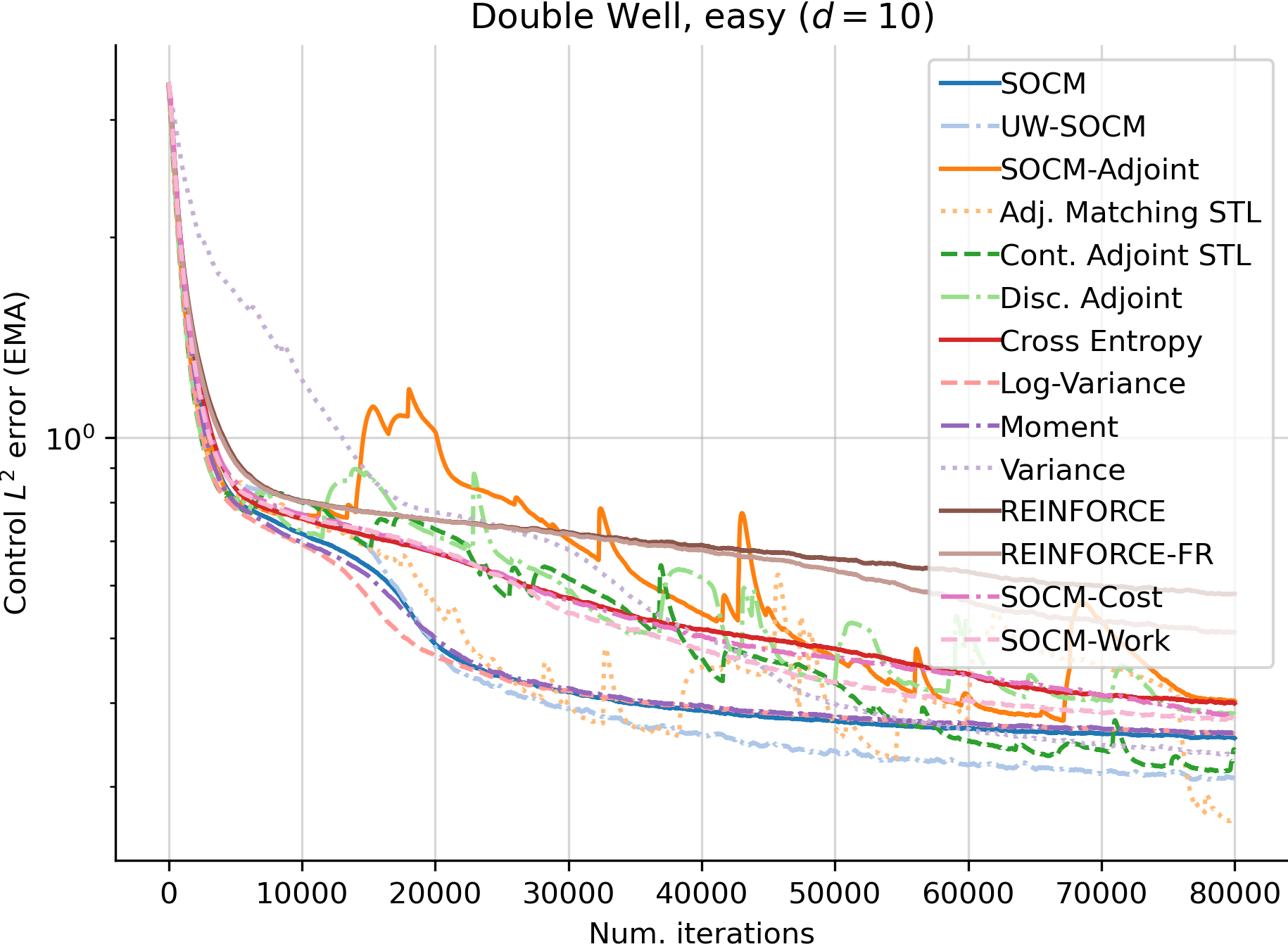}%
        \includegraphics[width=0.56\textwidth]{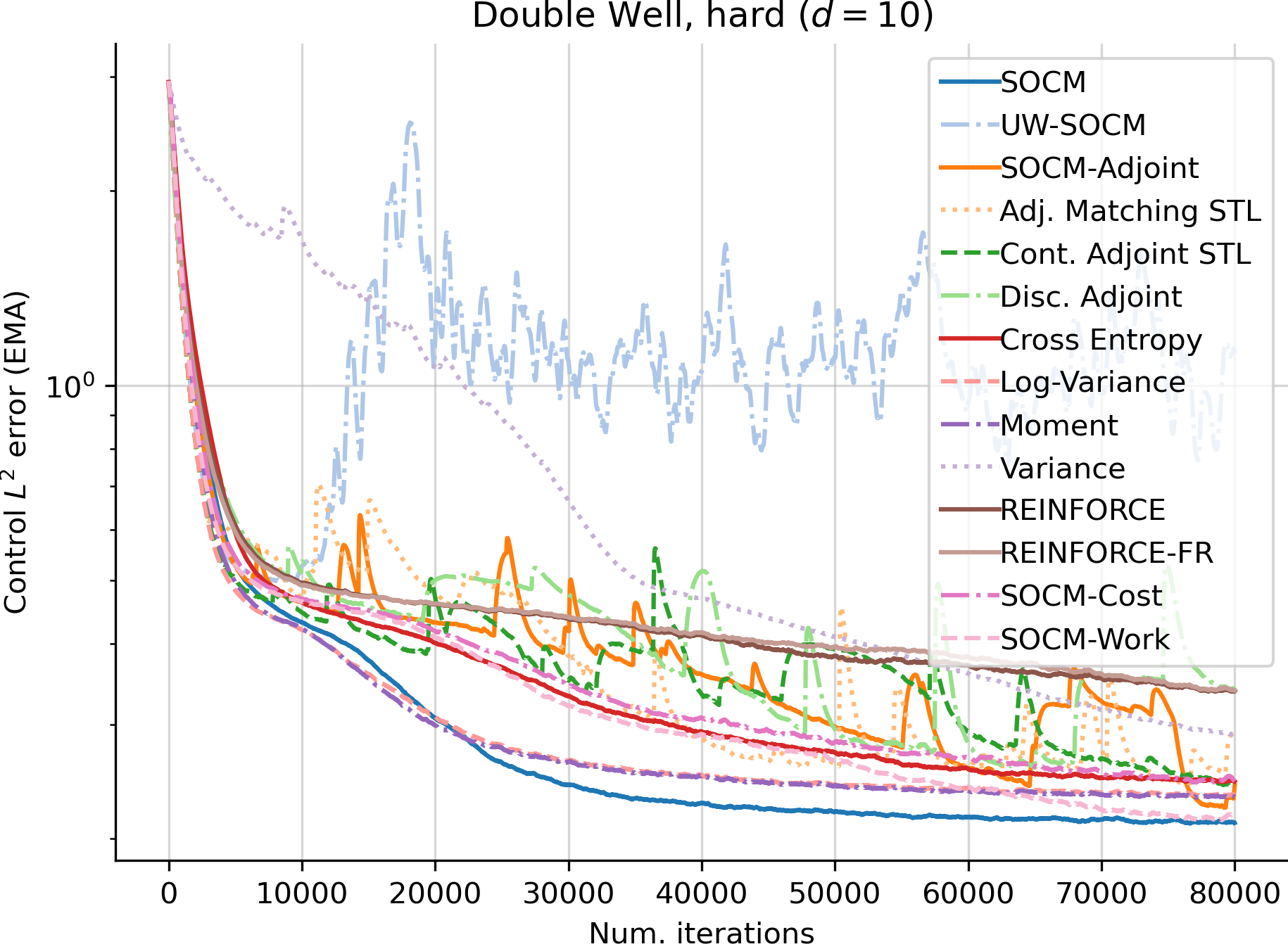}%
    }\\
    \makebox[\textwidth][c]{%
        \includegraphics[width=0.56\textwidth]{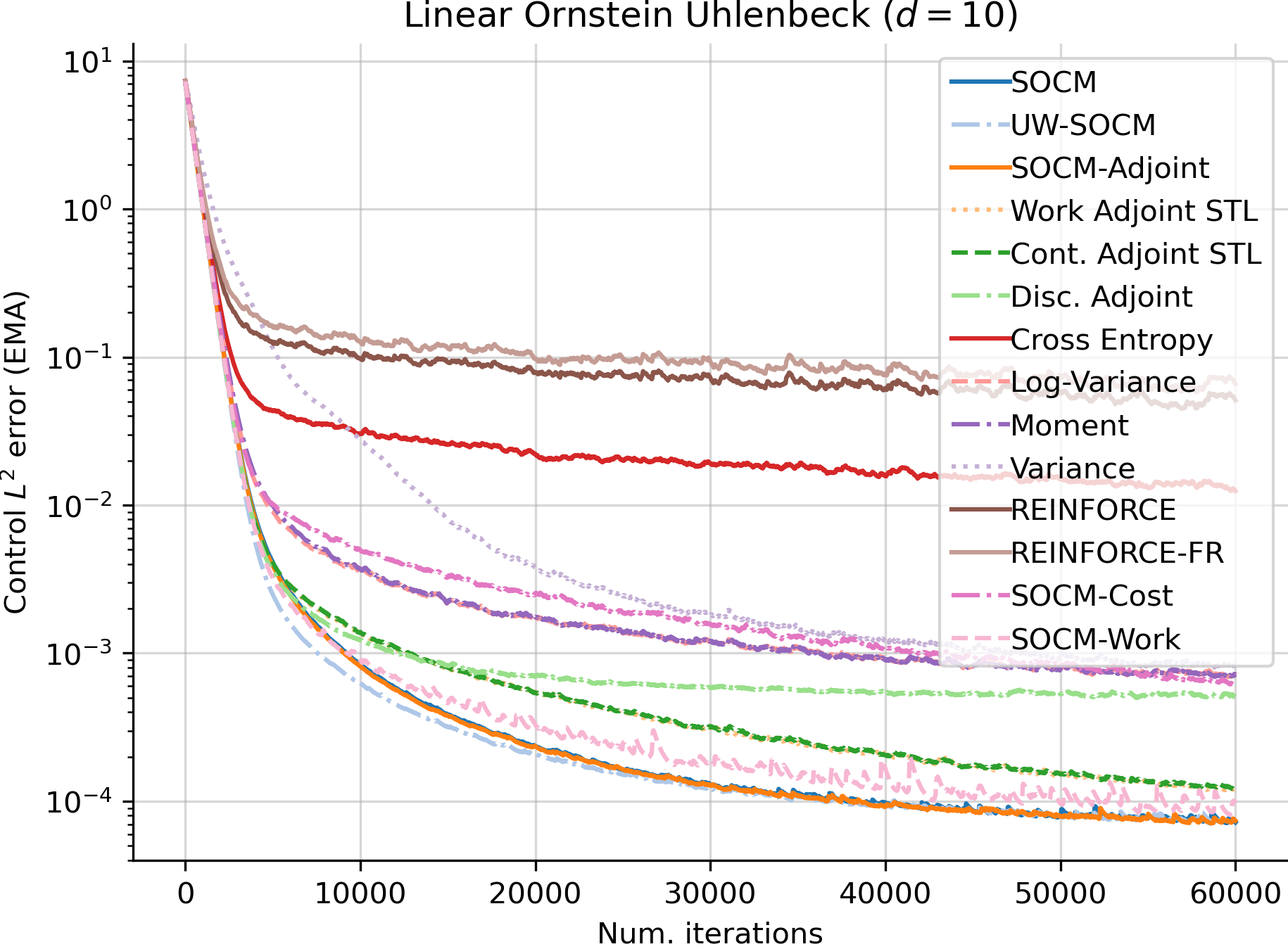}%
    }
    \caption{Control $L^2$ error incurred by each loss function throughout training, on five different settings.}
    \label{fig:control_L2_errors}
\end{figure}

\section{Conclusion}\label{sec:conc}
In this paper, we clarify the connections between existing and new deep learning loss functions to solve SOC problems, which have been recently applied to fine-tune diffusion and flow matching models. In particular, we observe that loss functions can be clustered into classes that share the same gradient in expectation. Qualitatively, all the losses in each class have the same convergence behavior, which we study. Quantitatively, losses within a class have gradient variances of different magnitudes, which translates to different convergence speeds. We compare all the losses on five different synthetic SOC problems.  

\bibliography{biblio}

\clearpage
\newpage
\appendix

\tableofcontents

\starttocentries

\section{Preliminaries}
\begin{theorem}[Girsanov theorem] \label{thm:Girsanov}
    Let $\bm{W} = {(W_t)}_{t \in [0,T]}$ be a standard Wiener process, and let $\mathbb{P}$ be its induced probability measure over $C([0,T];\mathbb{R}^d)$, known as the Wiener measure. %Let $Z(t)$ be as defined in \eqref{eq:Z_t_def} and suppose that the assumptions of Theorem \ref{thm:Novikov} hold. 
    Let $(\Omega,\mathcal{F})$ be the $\sigma$-algebra associated to $B_T$. Let ${\theta_s}$ be a locally-$\mathcal{H}_2$ process which is adapted to the natural filtration of the Brownian motion $(B_t)_{t \geq 0}$.
    For any $F \in \mathcal{F}$, define the measure
    \begin{talign}
        \mathbb{Q}(F) = \mathbb{E}_{\mathbb{P}}[ %Z(T)
        \exp \big( \int_0^T \theta_t \, \mathrm{d}B_t - \frac{1}{2} \int_0^T \|\theta_t\|^2 \, \mathrm{d}t \big)
        \mathbf{1}_F]~.
    \end{talign}
    %\JB{How does the RHS depend on $F$?}
    $\mathbb{Q}$ is a probability measure. %because of \eqref{eq:exp_Z_t}. 
    Under %the probability measure 
    $\mathbb{Q}$, the stochastic process $\{\tilde{W}(t)\}_{0 \leq t \leq T}$ defined as
    \begin{talign}
        \tilde{W}(t) = W(t) - \int_0^t \theta_s \, \mathrm{d}s
    \end{talign}
    is a standard Wiener process.
\end{theorem}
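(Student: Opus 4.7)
The plan is to prove the Girsanov theorem by the standard three-step route: exhibit a non-negative exponential local martingale, upgrade it to a true martingale so that it becomes a Radon–Nikodym density, and then verify Lévy's characterization of Brownian motion for $\tilde{W}$ under the new measure $\mathbb{Q}$.

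First I would introduce the exponential process
\begin{talign}
Z_t = \exp\Big( \int_0^t \theta_s \, \mathrm{d}B_s - \frac{1}{2}\int_0^t \|\theta_s\|^2 \, \mathrm{d}s \Big),
\end{talign}
and apply It\^o's formula to check that $Z_t$ solves the SDE $\mathrm{d}Z_t = Z_t \, \theta_t \cdot \mathrm{d}B_t$, with $Z_0 = 1$. This immediately makes $Z_t$ a non-negative local martingale, hence a supermartingale by Fatou, so $\mathbb{E}_{\mathbb{P}}[Z_T] \le 1$. To get equality (and thereby $\mathbb{Q}(\Omega)=1$), I would invoke a standard integrability condition on $\theta$; Novikov's condition $\mathbb{E}_{\mathbb{P}}[\exp(\tfrac12 \int_0^T \|\theta_s\|^2 \, \mathrm{d}s)] < \infty$ is the cleanest sufficient hypothesis and is implicit in the ``locally-$\mathcal H_2$'' assumption when combined with the usual technical setup. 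With $\mathbb{E}_{\mathbb{P}}[Z_T]=1$, the definition $\mathbb{Q}(F) = \mathbb{E}_{\mathbb{P}}[Z_T \mathbf{1}_F]$ is a bona fide probability measure equivalent to $\mathbb{P}$ on $\mathcal F_T$.

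Next I would verify that $\tilde{W}_t := W_t - \int_0^t \theta_s \, \mathrm{d}s$ is a Brownian motion under $\mathbb{Q}$ via L\'evy's characterization, which requires showing (i) $\tilde{W}$ is a continuous $\mathbb{Q}$-martingale starting at $0$, and (ii) its quadratic variation is $\langle \tilde{W}\rangle_t = t\,\mathrm{Id}$. Continuity and the starting value are automatic. For the martingale property, the key tool is the ``Bayes formula'' for conditional expectations under equivalent measures: for $s \le t$ and any bounded $\mathcal F_t$-measurable $Y$,
\begin{talign}
\mathbb{E}_{\mathbb{Q}}[Y \mid \mathcal F_s] = \frac{1}{Z_s}\, \mathbb{E}_{\mathbb{P}}[Y Z_t \mid \mathcal F_s].
\end{talign}
Applying this with $Y = \tilde{W}_t^i$ (each component), one computes $\mathrm{d}(Z_t \tilde{W}_t^i)$ by It\^o's product rule, using $\mathrm{d}\tilde{W}_t = \mathrm{d}W_t - \theta_t \, \mathrm{d}t$ and $\mathrm{d}Z_t = Z_t \theta_t \cdot \mathrm{d}W_t$; the drift terms cancel exactly, leaving $Z_t \tilde{W}_t$ a local $\mathbb{P}$-martingale. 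Under the integrability assumption above, it is a true martingale, and hence $\tilde{W}$ is a $\mathbb{Q}$-martingale. The quadratic variation is invariant under equivalent changes of measure because it is defined pathwise as a limit in probability and $\mathbb{P}$ and $\mathbb{Q}$ have the same null sets, so $\langle \tilde{W}^i, \tilde{W}^j\rangle_t = \langle W^i, W^j \rangle_t = \delta_{ij}\, t$.

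The hard part is really the martingale upgrade: showing $\mathbb{E}_{\mathbb{P}}[Z_T]=1$ from local-martingality alone is false in general, so the proof hinges on the integrability assumption, and the cleanest way to avoid a Novikov-style hypothesis is to prove the result first for bounded $\theta$ (where uniform integrability is immediate) and then localize via a sequence of stopping times $\tau_n = \inf\{t : \int_0^t \|\theta_s\|^2 \, \mathrm{d}s \ge n\}$, passing to the limit using dominated convergence on $\{\tau_n \ge T\}$. Once the martingale property is secured, the rest of the argument is a short computation with It\^o's product rule and L\'evy's characterization, and the theorem follows.
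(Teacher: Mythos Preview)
The paper does not supply a proof of this statement at all: Theorem~\ref{thm:Girsanov} is listed in the Preliminaries appendix as a classical result that is simply quoted and then used (via Corollary~\ref{cor:girsanov_sdes}) as a tool in the derivations that follow. So there is no ``paper's own proof'' to compare against; the authors treat Girsanov as a black box.

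Your outline is the standard textbook route (exponential local martingale $\Rightarrow$ true martingale $\Rightarrow$ L\'evy's characterization under $\mathbb{Q}$), and the computations you sketch are correct. One point deserves care: you write that Novikov's condition ``is implicit in the locally-$\mathcal{H}_2$ assumption when combined with the usual technical setup,'' but this is not true---locally-$\mathcal{H}_2$ only gives $\int_0^T \|\theta_s\|^2 \, \mathrm{d}s < \infty$ almost surely, which is strictly weaker than Novikov and does \emph{not} by itself force $\mathbb{E}_{\mathbb{P}}[Z_T]=1$. You seem to recognize this in your final paragraph when you pivot to a localization argument, but that argument as written is also incomplete: localizing via $\tau_n$ gives $\mathbb{E}_{\mathbb{P}}[Z_{T\wedge\tau_n}]=1$ for each $n$, yet passing to the limit requires uniform integrability of $(Z_{T\wedge\tau_n})_n$, which again needs an extra hypothesis (Novikov, Kazamaki, or a direct assumption that $Z$ is a martingale). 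The theorem as stated in the paper is slightly informal on exactly this point, so for a complete proof you should either add Novikov's condition as an explicit hypothesis or assume outright that $Z$ is a true martingale, and then the remainder of your argument goes through cleanly.
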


\begin{corollary}[Girsanov theorem for SDEs] 
\label{cor:girsanov_sdes}
    If the two SDEs
    \begin{talign}
    \mathrm{d}X_{t} &= b_1 (X_{t},t) \, \mathrm{d}t + \sigma (X_{t},t) \, \mathrm{d}B_{t}, \qquad X_0 = x_{\mathrm{init}}  \\
    dY_{t} &= (b_1 (Y_{t},t) + b_2 (Y_{t},t)) \, \mathrm{d}t + \sigma (Y_{t},t) \, \mathrm{d}B_{t}, \qquad Y_0 = x_{\mathrm{init}}
    \end{talign}
    admit unique strong solutions on $[0,T]$, then for any bounded continuous functional $\Phi$ on $C([0,T])$, we have that
    \begin{talign} 
    \begin{split} \label{eq:X_to_Y}
        \mathbb{E}[\Phi(\bm{X})] &= \mathbb{E}\big[ \Phi(\bm{Y}) \exp \big( - \int_0^T \sigma(Y_{t},t)^{-1} b_2 (Y_{t},t) \, \mathrm{d}B_t - \frac{1}{2} \int_0^T \|\sigma(Y_{t},t)^{-1} b_2 (Y_{t},t)\|^2 \, \mathrm{d}t \big) \big] \\ &= \mathbb{E}\big[ \Phi(\bm{Y}) \exp \big( - \int_0^T \sigma(Y_{t},t)^{-1} b_2 (Y_{t},t) \, d\tilde{B}_t + \frac{1}{2} \int_0^T \|\sigma(Y_{t},t)^{-1} b_2 (Y_{t},t)\|^2 \, \mathrm{d}t \big) \big], %\label{eq:X_to_Y_2}
    \end{split}
    \end{talign}
    where $\tilde{B}_t = B_t + \int_0^t \sigma(Y_{s},s)^{-1} b_2 (Y_{s},s) \, \mathrm{d}s$. More generally, $b_1$ and $b_2$ can be random processes that are adapted to filtration of $\bm{B}$.
\end{corollary}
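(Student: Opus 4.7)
The plan is to apply Theorem \ref{thm:Girsanov} directly, with the choice $\theta_t = -\sigma(Y_t, t)^{-1} b_2(Y_t, t)$. First I would introduce the exponential martingale
\begin{talign}
Z_T = \exp\Big( \int_0^T \theta_t \, \mathrm{d}B_t - \tfrac{1}{2} \int_0^T \|\theta_t\|^2 \, \mathrm{d}t \Big),
\end{talign}
and define the new probability measure $\mathbb{Q}$ on $(\Omega, \mathcal{F})$ by $\mathrm{d}\mathbb{Q}/\mathrm{d}\mathbb{P} = Z_T$. By Theorem \ref{thm:Girsanov}, the process $\tilde{B}_t := B_t - \int_0^t \theta_s \, \mathrm{d}s = B_t + \int_0^t \sigma(Y_s,s)^{-1} b_2(Y_s,s) \, \mathrm{d}s$ is a standard Brownian motion under $\mathbb{Q}$.

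Next, I would rewrite the SDE for $\bm{Y}$ in terms of $\tilde{B}$: substituting $\mathrm{d}B_t = \mathrm{d}\tilde{B}_t - \sigma(Y_t,t)^{-1} b_2(Y_t,t) \, \mathrm{d}t$ into $\mathrm{d}Y_t = (b_1(Y_t,t) + b_2(Y_t,t)) \, \mathrm{d}t + \sigma(Y_t,t) \, \mathrm{d}B_t$ cancels the $b_2$ term exactly, leaving $\mathrm{d}Y_t = b_1(Y_t,t) \, \mathrm{d}t + \sigma(Y_t,t) \, \mathrm{d}\tilde{B}_t$ with $Y_0 = x_{\mathrm{init}}$. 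This is the same SDE that defines $\bm{X}$, only with driving Brownian motion $\tilde{B}$ under $\mathbb{Q}$. By the hypothesized uniqueness of strong solutions, the law of $\bm{Y}$ under $\mathbb{Q}$ coincides with the law of $\bm{X}$ under $\mathbb{P}$. Consequently, for any bounded continuous functional $\Phi$,
\begin{talign}
\mathbb{E}_{\mathbb{P}}[\Phi(\bm{X})] = \mathbb{E}_{\mathbb{Q}}[\Phi(\bm{Y})] = \mathbb{E}_{\mathbb{P}}[\Phi(\bm{Y}) Z_T],
\end{talign}
which, after plugging in the definition of $Z_T$ with $\theta_t = -\sigma(Y_t,t)^{-1} b_2(Y_t,t)$, is the first equality in \eqref{eq:X_to_Y}. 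The second equality then follows by substituting $\mathrm{d}B_t = \mathrm{d}\tilde{B}_t - \sigma(Y_t,t)^{-1} b_2(Y_t,t) \, \mathrm{d}t$ inside the stochastic integral in the exponent, which flips the sign of that integral and combines with the It\^o correction to produce the $+\tfrac{1}{2}\int_0^T \|\sigma^{-1} b_2\|^2 \, \mathrm{d}t$ term.

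The main obstacle is the integrability condition needed to invoke Theorem \ref{thm:Girsanov}: one must ensure that $Z_T$ is a true martingale (and not merely a local martingale), so that $\mathbb{Q}$ is a bona fide probability measure. The standard sufficient condition is Novikov's $\mathbb{E}[\exp(\tfrac{1}{2}\int_0^T \|\sigma(Y_t,t)^{-1} b_2(Y_t,t)\|^2 \, \mathrm{d}t)] < \infty$, which is implicit in the regularity of $b_1, b_2, \sigma$ required for strong well-posedness, and in the extension to random adapted drifts one would verify it via a localization argument. Beyond that, the proof is essentially bookkeeping: matching the SDE after the change of measure and rewriting $Z_T$ in terms of $\tilde{B}$.
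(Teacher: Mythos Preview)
Your proposal is correct and is exactly the standard derivation of the SDE version of Girsanov from the abstract Girsanov theorem. The paper itself does not supply a proof of this corollary; it is stated without proof in the preliminaries section (alongside Theorem~\ref{thm:Girsanov}), so there is nothing to compare against beyond noting that your argument is the intended one: pick $\theta_t=-\sigma(Y_t,t)^{-1}b_2(Y_t,t)$, change measure, observe that $\bm{Y}$ satisfies the $\bm{X}$-SDE driven by $\tilde B$ under $\mathbb{Q}$, invoke pathwise uniqueness to identify laws, and then rewrite the density in terms of $\tilde B$ to get the second line of \eqref{eq:X_to_Y}.
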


\begin{theorem}[Hamilton-Jacobi-Bellman equation] \label{thm:HJB}
    If we define the infinitesimal generator
    \begin{talign}
    \mathcal{L} := \frac{1}{2} \sum_{i,j=1}^{d} (\sigma \sigma^{\top})_{ij} (t) \partial_{x_i} \partial_{x_j} + \sum_{i=1}^{d} b_i(x,t) \partial_{x_i},
    \end{talign}
    the value function $V$ for the SOC
    % stochastic optimal control 
    problem \eqref{eq:control_problem_def}-\eqref{eq:controlled_SDE} solves the following Hamilton-Jacobi-Bellman (HJB) partial differential equation:
    \begin{talign}
    \begin{split} \label{eq:HJB_setup}
        % &(\partial_t + L) V(x,t) - \frac{1}{2} \| (\sigma^{\top} \nabla V) (x,t) \|^2 + f(x,t) = 0, \\
        &\partial_t V(x,t) = - \mathcal{L} V(x,t) + \frac{1}{2} \| (\sigma^{\top} \nabla V) (x,t) \|^2 - f(x,t), \\
        &V(x,T) = g(x).
    \end{split}
    \end{talign}
\end{theorem}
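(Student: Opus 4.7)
The plan is to derive the HJB equation from the dynamic programming principle (DPP) combined with Itô's formula, the standard route for proving the HJB characterization of the value function. First, I would establish (or invoke, assuming $V$ is sufficiently regular, say $C^{1,2}$) the DPP: for any $x \in \R^d$, $t \in [0,T]$, and small $h > 0$,
\begin{talign}
V(x,t) = \inf_{u \in \mathcal{U}} \mathbb{E}\Big[\int_t^{t+h}\big(\tfrac{1}{2}\|u(X^u_s,s)\|^2 + f(X^u_s,s)\big)\,\mathrm{d}s + V(X^u_{t+h}, t+h) \,\big|\, X^u_t = x\Big].
\end{talign}
This follows from the tower property of conditional expectation applied to the cost functional \eqref{eq:cost_functional} in the definition \eqref{eq:value_fn_defn}.

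Next I would apply Itô's formula to $s \mapsto V(X^u_s, s)$ on $[t, t+h]$, using that the drift of $X^u$ is $b + \sigma u$ and its diffusion coefficient is $\sigma$ (setting $\lambda=1$, consistent with the stated generator $\mathcal{L}$). This yields
\begin{talign}
V(X^u_{t+h}, t+h) = V(x,t) + \int_t^{t+h}\!\!\big(\partial_s V + \mathcal{L}V + (\sigma u)^{\top}\nabla V\big)(X^u_s, s)\,\mathrm{d}s + M_{t+h},
\end{talign}
where $M$ is a stochastic integral against $B$, hence a local martingale whose expectation (up to a localization argument) vanishes. Substituting into the DPP, the $V(x,t)$ terms cancel on both sides; dividing by $h$ and sending $h \to 0^+$ produces the infinitesimal relation
\begin{talign}
0 = \inf_{u \in \R^d}\Big[\partial_t V(x,t) + \mathcal{L}V(x,t) + u^{\top}\sigma(t)^{\top}\nabla V(x,t) + \tfrac{1}{2}\|u\|^2 + f(x,t)\Big].
\end{talign}

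Finally, the bracketed expression is quadratic in $u$, with unique minimizer $u^\ast = -\sigma(t)^{\top}\nabla V(x,t)$, recovering \eqref{eq:optimal_control}. Plugging this back in gives
\begin{talign}
\partial_t V + \mathcal{L}V - \tfrac{1}{2}\|\sigma^{\top}\nabla V\|^2 + f = 0,
\end{talign}
which is precisely \eqref{eq:HJB_setup}, while the terminal condition $V(x,T) = g(x)$ is immediate from \eqref{eq:value_fn_defn} (the integral in \eqref{eq:cost_functional} vanishes at $t = T$). The main obstacle is the passage to the limit: rigorously justifying the interchange between $\inf_{u \in \mathcal{U}}$ and $\lim_{h \to 0}$, as well as establishing the DPP itself, requires either a priori $C^{1,2}$ regularity of $V$ (classical-solution framework, which \eqref{eq:HJB_setup} tacitly assumes) or a detour through the theory of viscosity solutions. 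Since the statement presents the PDE in classical form, assuming the regularity needed for Itô's formula is standard, and the remaining technicalities reduce to routine measurability and localization arguments.
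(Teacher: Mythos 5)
The paper states this theorem as a classical background result in its preliminaries and provides no proof of its own, so there is no in-paper argument to compare against; your derivation via the dynamic programming principle plus It\^o's formula is the standard textbook route and is correct, including the observation that the stated generator implicitly takes $\lambda = 1$ and the explicit minimization of the quadratic in $u$ yielding $u^{*} = -\sigma(t)^{\top}\nabla V(x,t)$ and hence the $+\frac{1}{2}\|\sigma^{\top}\nabla V\|^2$ term. The only point you gloss over (and appropriately flag) is that establishing the DPP as an equality and interchanging $\inf_{u}$ with $\lim_{h\to 0^+}$ requires the usual two-sided argument with arbitrary versus near-optimal controls, plus either a priori $C^{1,2}$ regularity of $V$ or a viscosity-solution framework --- standard technicalities for a classical-form statement like \eqref{eq:HJB_setup}.
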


\begin{theorem}[Path-wise reparameterization trick, \cite{domingoenrich2023stochastic}, Prop.~C.3] \label{lem:cond_exp_rewritten}
Let $(\Omega, \mathcal{F}, \mathbb{P})$ be a probability space, and $\bm{B} : \Omega \times [0,T] \to \R^d$ be a Brownian motion. Let $X : \Omega \times [0,T] \to \R^d$ be a process that satisfies the SDE $\mathrm{d}X_{t} = b (X_{t},t) \, \mathrm{d}t + \sigma (t) \, \mathrm{d}B_{t}$, and let $\psi : \Omega \times \R^d \times [0,T] \to \R^d$ be an arbitrary random process such that:
\begin{itemize}
    \item For all $z \in \R^d$, the process $\psi(\cdot,z,\cdot) : \Omega \times [0,T] \to \R^d$ is adapted to the filtration $({\mathcal{F}}_{s})_{s \in [0,T]}$ of the Brownian motion $\bm{B}$.
    \item For all $\omega \in \Omega$, $\psi(\omega,\cdot,\cdot) : \R^d \times [0,T] \to \R^{d}$ is a twice-continuously differentiable function such that $\psi(\omega,z,0) = z$ for all $z \in \R^d$, and $\psi(\omega,0,s) = 0$ for all $s \in [0,T]$. 
\end{itemize}
Let $F : C([0,T];\R^d) \to \R$ be a Fréchet-differentiable functional. We use the notation $\bm{X} + \psi(z,\cdot) = (X_s(\omega) + \psi(\omega,z,s))_{s \in [0,T]}$ to denote the shifted process, but omit the dependency of $\psi$ on $\omega$. Then,
    \begin{talign} 
    \begin{split} \label{eq:pathwise_rt}
        &\nabla_{x} \mathbb{E}\big[ \exp \big( 
        - F(\bm{X})
        \big) \big| X_0 = x \big] \\ &= \! \mathbb{E}\big[ \big( 
        \! - \! \nabla_z F(\bm{X} \! + \! \psi(z,\cdot)) \rvert_{z=0}
        \! + \! \int_0^T (\nabla_z \psi(0,s) \nabla_x b(X_s,s) \! - \! \nabla_z \partial_s \psi(0,s)) (\sigma^{-1})^{\top} (s) \mathrm{d}B_s \big) 
        \\ &\qquad\qquad \times 
        \exp \big( - F(\bm{X}) \big) 
    \big| X_0 = x \big].
    \end{split}
    \end{talign}
\end{theorem}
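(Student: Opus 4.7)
The strategy is a Girsanov change of measure: relate the process $\bm{X}^{x+z}$ started at the shifted initial condition to the shifted path $\bm{X}^x + \psi(z,\cdot)$, which also starts at $x + z$ since $\psi(z, 0) = z$, and then differentiate in $z$ at $z = 0$.

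First, I would compute the SDE satisfied by $Y_t^z := X_t^x + \psi(z, t)$. Because $\psi(\omega, z, \cdot)$ is pathwise smooth in $s$ with no It\^o component, one obtains
$\mathrm{d}Y_t^z = (b(Y_t^z, t) + \tilde{b}(z, t))\, \mathrm{d}t + \sigma(t)\, \mathrm{d}B_t$ with $Y_0^z = x + z$, where
\[
\tilde{b}(z, t) := b(X_t, t) - b(X_t + \psi(z, t), t) + \partial_s \psi(z, t).
\]
The boundary conditions guarantee $\tilde{b}(0, \cdot) \equiv 0$: indeed $\psi(\omega, 0, s) \equiv 0$ annihilates the first two terms and also forces $\partial_s \psi(0, s) \equiv 0$.

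Next, I would invoke Corollary~\ref{cor:girsanov_sdes} with $b_1 = b$ and $b_2 = \tilde{b}(z, \cdot)$, since $\bm{X}^{x+z}$ and $\bm{Y}^z$ share the same initial condition and diffusion coefficient and differ only in drift. This yields the identity
\[
\mathbb{E}[\exp(-F(\bm{X}^{x+z}))] = \mathbb{E}[\exp(-F(\bm{X}^x + \psi(z, \cdot)))\, M(z)],
\]
where $M(z) = \exp\!\big({-}\int_0^T \sigma^{-1}(t)\tilde{b}(z, t)\, \mathrm{d}B_t - \tfrac{1}{2}\int_0^T \|\sigma^{-1}(t)\tilde{b}(z, t)\|^2 \mathrm{d}t\big)$. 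At $z = 0$ both sides collapse to $\mathbb{E}[\exp(-F(\bm{X}^x))]$ since $\tilde{b}(0, \cdot) = 0$ forces $M(0) = 1$ and $\psi(0, \cdot) = 0$.

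I would then differentiate both sides in $z$ at $z = 0$. The LHS yields $\nabla_x \mathbb{E}[\exp(-F(\bm{X}^x))\mid X_0 = x]$ by translation of the initial condition. The RHS, by the product rule together with a dominated-convergence interchange of $\nabla_z$ with $\mathbb{E}$, yields
\[
\mathbb{E}\big[\big({-}\nabla_z F(\bm{X} + \psi(z, \cdot))\big|_{z=0} + \nabla_z M(0)\big)\exp({-}F(\bm{X}))\big].
\]
Since $\tilde{b}(0, \cdot) = 0$, the $\|\cdot\|^2$ term in $\log M$ has vanishing $z$-gradient at the origin, so only the stochastic integral contributes: $\nabla_z M(0) = -\int_0^T \sigma^{-1}(t)\nabla_z \tilde{b}(0, t)\, \mathrm{d}B_t$. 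The chain rule applied to $\tilde{b}$ gives $\nabla_z \tilde{b}(0, t) = -\nabla_x b(X_t, t)\nabla_z\psi(0, t) + \nabla_z \partial_s \psi(0, t)$, and rearranging the matrix indices produces exactly the stated integrand $(\nabla_z\psi(0, s)\nabla_x b(X_s, s) - \nabla_z \partial_s \psi(0, s))(\sigma^{-1})^\top(s)\, \mathrm{d}B_s$.

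The main obstacle I expect is bookkeeping rather than conceptual: carefully tracking matrix-product orderings and transposes so that the resulting stochastic integral matches the claimed form under the paper's Jacobian convention. Secondary technical hurdles are verifying the Novikov (or local martingale) condition justifying the Girsanov change of measure—which follows from the twice-continuous differentiability of $\psi$ and reasonable integrability of $b$—and justifying the interchange of $\nabla_z$ with $\mathbb{E}$, which relies on the Fr\'echet differentiability of $F$ together with standard SDE moment bounds.
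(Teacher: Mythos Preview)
The paper does not actually prove this theorem; it is stated in the preliminaries with a citation to \cite{domingoenrich2023stochastic}, Prop.~C.3, so there is no in-paper proof to compare against. That said, your Girsanov-based approach is the standard one and is almost certainly what the cited reference does: perturb the path by $\psi(z,\cdot)$, identify the drift discrepancy $\tilde b(z,\cdot)$ between $\bm{X}^{x+z}$ and $\bm{X}^x+\psi(z,\cdot)$, apply \autoref{cor:girsanov_sdes} (which explicitly allows adapted random $b_2$), and differentiate at $z=0$ where $\tilde b$ vanishes so that only the linear-in-$\tilde b$ stochastic integral survives. Your identification of the key technical points---Novikov for the change of measure, dominated convergence for the $\nabla_z$--$\mathbb{E}$ interchange, and the matrix-index bookkeeping---is accurate. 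On the last point, note that the paper uses $\nabla_z\psi$ in a way that is the transpose of the Jacobian convention in its own footnote (as one can check from the substitution $\psi(z,s)=M_t(s)^\top z$, $\nabla_z\psi=M_t(s)$ in the proof of \autoref{prop:work_adjoint_app}); this is purely notational and your computation lands on the stated formula once the convention is matched.
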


\begin{corollary}[Path-wise reparameterization trick for stochastic optimal control, \cite{domingoenrich2023stochastic}, Prop.~1]\label{prop:cond_exp_rewritten}
    For each $t \in [0,T]$, let $M_t : [t,T] \to \R^{d\times d}$ be an arbitrary continuously differentiable function matrix-valued function such that $M_t(t) = \mathrm{Id}$. We have that
    \begin{talign} 
    \begin{split} \label{eq:cond_exp_rewritten_SOC}
        &\nabla_{x} \mathbb{E}\big[ \exp \big( - \int_t^T f(X_s,s) \, \mathrm{d}s - g(X_T) \big) \big| X_t = x \big] \\ &= \mathbb{E}\big[ \big( - \int_t^T M_t(s) \nabla_x f(X_s,s) \, \mathrm{d}s - M_t(T) \nabla g(X_T) \\ &\qquad\qquad + \int_t^T (M_t(s) \nabla_x b(X_s,s) - \partial_s M_t(s)) (\sigma^{-1})^{\top}(s) \mathrm{d}B_s \big) \\ &\qquad\qquad \times \exp \big( - \int_t^T f(X_s,s) \, \mathrm{d}s - g(X_T) \big) 
    \big| X_t = x \big].
    \end{split}
    \end{talign}
% \end{proposition}
\end{corollary}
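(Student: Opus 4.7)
The plan is to derive this corollary as a direct specialization of Theorem~\ref{lem:cond_exp_rewritten} (the path-wise reparameterization trick). The strategy is: take the perturbation process to be the deterministic linear map $\psi(\omega, z, s) = M_t(s)\, z$ on $s \in [t, T]$, and take the functional to be $F(\bm{X}) = \int_t^T f(X_s, s)\, \mathrm{d}s + g(X_T)$. One preliminary adjustment: the theorem is stated for initial time $0$, but its proof extends verbatim to any starting time $t$ (or one can apply the Markov property to reduce to a fresh SDE on $[t, T]$ started from $x$).

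First I would verify the hypotheses on $\psi$. Since $\psi$ does not depend on $\omega$, it is trivially adapted to the Brownian filtration; being linear in $z$, it is $C^2$ in $z$; we have $\psi(\omega, z, t) = M_t(t)\, z = z$ using $M_t(t) = \mathrm{Id}$, and $\psi(\omega, 0, s) = 0$ is immediate. The relevant derivatives are $\nabla_z \psi(0, s) = M_t(s)$ and $\nabla_z \partial_s \psi(0, s) = \partial_s M_t(s)$. Substituting these into the stochastic integrand
\begin{talign*}
(\nabla_z \psi(0, s)\, \nabla_x b(X_s, s) - \nabla_z \partial_s \psi(0, s))(\sigma^{-1})^{\top}(s)\, \mathrm{d}B_s
\end{talign*}
of Theorem~\ref{lem:cond_exp_rewritten} reproduces exactly the stochastic integral appearing on the right-hand side of \eqref{eq:cond_exp_rewritten_SOC}.

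Second I would compute the Fréchet derivative of $F$ along $z \mapsto \bm{X} + \psi(z, \cdot)$ by the chain rule, which (under standard smoothness of $f$ and $g$ making $F$ Fréchet-differentiable on $C([t, T]; \R^d)$) gives
\begin{talign*}
\nabla_z F(\bm{X} + \psi(z, \cdot))\big|_{z=0}
= \int_t^T M_t(s)\, \nabla_x f(X_s, s)\, \mathrm{d}s + M_t(T)\, \nabla g(X_T),
\end{talign*}
up to the transpose convention on $M_t$. Plugging both expressions into \eqref{eq:pathwise_rt} and carrying the overall minus sign from $-F$ yields precisely the right-hand side of \eqref{eq:cond_exp_rewritten_SOC}.

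The main obstacle is essentially bookkeeping: matching transpose conventions on $M_t$ (different passes through the chain rule produce $M_t$ versus $M_t^{\top}$ depending on whether gradients of scalar functions are treated as row or column vectors) and verifying the mild regularity hypotheses (Fréchet differentiability of $F$, integrability of the stochastic integrand) needed to apply Theorem~\ref{lem:cond_exp_rewritten}. Once $\psi$ and $F$ are chosen as above, the derivation reduces to a one-line substitution.
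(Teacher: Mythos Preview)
Your proposal is correct and matches the paper's approach: the corollary is obtained by specializing Theorem~\ref{lem:cond_exp_rewritten} with the linear perturbation $\psi(z,s) = M_t(s)^{\top} z$ (so that $\nabla_z \psi(0,s) = M_t(s)$ under the paper's Jacobian convention) and the functional $F(\bm{X}) = \int_t^T f(X_s,s)\,\mathrm{d}s + g(X_T)$, exactly as you outline. The only difference is cosmetic---the paper places the transpose on $M_t$ in the definition of $\psi$ rather than absorbing it into the gradient convention, which is precisely the bookkeeping issue you already flagged.
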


\begin{theorem} [Adjoint method for SDEs, Lemma~8 of \cite{domingoenrich2023stochastic}, \cite{li2020scalable,kidger2021neural}]
\label{lem:adjoint_method_sdes}
    % Let $X$ be the solution of $dX_t = b(X_t,t) \, dt + \sigma(t) \, dB_t$. Then, if we let $a : \Omega \times [0,T] \to \R^d$ be the solution of 
    Let $\bm{X} : \Omega \times [0,T] \to \R^d$ be a stochastic process that satisfies the SDE $\mathrm{d}X_t = b(X_t,t) \, \mathrm{d}t + \sigma(t) \, \mathrm{d}B_t$, with initial condition $X_0 = x$.
    We define the random process $a : \Omega \times [0,T] \to \R^{d}$ such that for all $\omega \in \Omega$, using the short-hand $a(t) := a(\omega,t)$,
    \begin{talign}
        da_t(\omega) &= \big( - \nabla_{x} b(X_t(\omega),t) a_t(\omega) - \nabla_x f(X_t(\omega),t) \big) \, \mathrm{d}t  - \nabla_x h(X_t(\omega),t) \, \mathrm{d}B_t, \\ a_T(\omega) &= \nabla_x g(X_T(\omega)),
    \end{talign}
    we have that 
    \begin{talign}
        &\nabla_{x} \mathbb{E} \big[ \int_0^T f(X_t(\omega),t) \, \mathrm{d}t + \int_0^T \langle h(X_t(\omega),t), \, \mathrm{d}B_t \rangle + g(X_T(\omega)) | X_0(\omega) = x \big] = \mathbb{E} \big[ a_0(\omega) \big], \\
    \begin{split}
        &\nabla_{x} \mathbb{E} \big[ \exp \big( - \int_0^T f(X_t(\omega),t) \, \mathrm{d}t - \int_0^T \langle h(X_t(\omega),t), \, \mathrm{d}B_t \rangle - g(X_T(\omega)) \big) | X_0(\omega) = x \big] \\ &= - \mathbb{E} \big[ a_0(\omega) \exp \big( - \int_0^T f(X_t(\omega),t) \, \mathrm{d}t - \int_0^T \langle h(X_t(\omega),t), \, \mathrm{d}B_t \rangle - g(X_T(\omega)) \big) | X_0(\omega) = x \big].
    \end{split}
    \end{talign}
\end{theorem}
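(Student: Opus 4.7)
The proof follows the classical variational approach for sensitivity analysis of SDEs. The plan is to introduce the Jacobian flow of $X_t$ with respect to the initial condition $x$, apply It\^o's formula to the product of this flow with the adjoint variable $a_t$, and observe that the drift terms involving $\nabla_x b$ cancel exactly, turning the backward adjoint SDE into a pathwise formula for $a_0$.

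First I would define the tangent flow $Y_t := \partial X_t^x / \partial x \in \R^{d \times d}$. Since the diffusion coefficient $\sigma(t)$ does not depend on $x$, standard SDE regularity results give that $Y_t$ satisfies the random linear ODE $dY_t = \nabla_x b(X_t,t) Y_t\, dt$ with $Y_0 = \mathrm{Id}$ and no martingale component. Applying It\^o's formula to $a_t^\top Y_t$ (which has no quadratic-covariation term because $Y$ is of bounded variation) and substituting the adjoint dynamics from the statement, the two terms proportional to $\nabla_x b$ cancel (this is the cancellation that makes the adjoint equation ``adjoint''), leaving
\begin{talign}
d(a_t^\top Y_t) = -\nabla_x f(X_t,t)^\top Y_t\, dt - (dB_t)^\top \nabla_x h(X_t,t)\, Y_t.
\end{talign}
Integrating from $0$ to $T$ and using $Y_0 = \mathrm{Id}$ together with $a_T = \nabla g(X_T)$ then yields the pathwise identity
\begin{talign}
a_0 = Y_T^\top \nabla g(X_T) + \int_0^T Y_t^\top \nabla_x f(X_t,t)\, dt + \int_0^T Y_t^\top \nabla_x h(X_t,t)^\top dB_t.
\end{talign}

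For the first claim, I would interchange $\nabla_x$ with $\mathbb{E}$ (justified by dominated convergence under smoothness and moment hypotheses on $f, g, h$ and the derivatives of $b$) and apply the chain rule through the flow: the gradient of each of the three cost terms with respect to $x$ pulls down a factor of $Y_t^\top$, reproducing exactly the right-hand side of the pathwise identity above. Taking expectations makes the It\^o integral vanish since it is a true martingale under the integrability hypotheses, giving $\nabla_x \mathbb{E}[\mathrm{cost}] = \mathbb{E}[a_0]$. For the second claim, I would observe that the same pathwise identity expresses $a_0$ as the pathwise directional derivative in $x$ of the path functional $C(X^x) := \int_0^T f(X_t,t)\, dt + \int_0^T \langle h(X_t,t), dB_t\rangle + g(X_T)$; applying the chain rule through the exponential then gives $\nabla_x \mathbb{E}[e^{-C}] = \mathbb{E}[-e^{-C} \nabla_x C] = -\mathbb{E}[e^{-C} a_0]$, which is the stated formula.

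The main technical obstacle is justifying the differentiation-under-the-expectation for the stochastic-integral term $\int_0^T \langle h(X_t^x,t), dB_t\rangle$, i.e., showing that this random variable is $L^2$-differentiable in $x$ with derivative $\int_0^T (\nabla_x h(X_t^x,t) Y_t)^\top dB_t$. This is standard but delicate, and typically requires smoothness of $h$ together with uniform-in-$x$ moment bounds on the tangent flow $Y_t$, both of which are ensured by global Lipschitz and linear-growth hypotheses on $b, h$ and their spatial derivatives. Once these regularity conditions are in place, the rest of the argument is a clean application of It\^o's formula and the martingale property of the It\^o integral.
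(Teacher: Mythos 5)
Your proof is correct, but note that the paper does not actually prove this statement: it is imported as a preliminary, with the proof deferred to Lemma~8 of \cite{domingoenrich2023stochastic} (and the neural-SDE adjoint literature). The cited proof is organized as a Lagrange-multiplier argument: one adds the identically-zero term $-\int_0^T \langle a_t, \mathrm{d}X_t - b\,\mathrm{d}t - \sigma\,\mathrm{d}B_t\rangle$ to the cost, integrates $\int_0^T \langle a_t, \mathrm{d}X_t\rangle$ by parts, and chooses the dynamics of $a$ so that all terms multiplying the sensitivity $\nabla_x X_t$ cancel. Your route — introducing the tangent flow $Y_t = \partial X_t^x/\partial x$ explicitly and computing $d(a_t^\top Y_t)$ — is the dual presentation of the same cancellation, and it buys you something the Lagrange form obscures: the clean pathwise identity $a_0 = \nabla_x C(X^x)$ with $C(X^x) = \int_0^T f\,\mathrm{d}t + \int_0^T \langle h, \mathrm{d}B_t\rangle + g(X_T)$, from which \emph{both} displayed conclusions (the linear one and the exponential one) follow in one line by exchanging $\nabla_x$ with $\mathbb{E}$ and applying the chain rule through $e^{-C}$. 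Two small points to tighten. First, the transposes: with the paper's convention $(\nabla_x v)_{ij} = \partial v_i/\partial x_j$, the cancellation in $d(a_t^\top Y_t)$ requires the adjoint drift to be $-(\nabla_x b)^\top a_t$ (as in the paper's equation for the continuous adjoint state); the statement as printed omits the transpose, so you should either flag this as a convention slip or carry the transpose consistently, since otherwise $Y_t^\top \nabla_x b^\top a_t$ and the adjoint drift term do not cancel. Second, your sentence ``taking expectations makes the It\^o integral vanish, giving $\nabla_x \mathbb{E}[\mathrm{cost}] = \mathbb{E}[a_0]$'' is slightly misleading as written — the martingale term is part of $a_0$ itself and need not vanish for the first identity; the identity follows directly from $a_0 = \nabla_x C$ plus differentiation under the expectation, and the vanishing of the stochastic integral in expectation is only needed if you want to drop the $\int \langle h, \mathrm{d}B_t\rangle$ term from the cost before differentiating. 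Also be aware that $a_t$ is not adapted to the forward filtration (it depends on the future of the path), so the product rule for $a_t^\top Y_t$ should be justified pathwise using that $Y_t$ has bounded variation rather than by a naive It\^o formula for two semimartingales; this matches the level of rigor of the source the paper cites.
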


% \section{Proofs of \autoref{subsec:existing_losses}} 
% \section{Loss function derivations}
% \label{sec:proof_existing_losses}

\section{Sticking the Landing trick} 
\label{subsec:STL}
\begin{remark}[Sticking the Landing trick] \label{rem:STL}
    For any $v \in \mathcal{U}$, we have the following:
    \begin{talign}
        &J(v;x,t) := \mathbb{E}\big[ \int_t^T \big(\frac{1}{2} \|v(X^{v}_s,s)\|^2 \! + \! f(X^{v}_s,s) \big) \, \mathrm{d}s \! + \! g(X^{v}_T) | X^v_t = x \big] \\ &= \mathbb{E}\big[ \int_t^T \big(\frac{1}{2} \|v(X^{v}_s,s)\|^2 \! + \! f(X^{v}_s,s) \big) \, \mathrm{d}s \! + \! 
        % \lambda^{1/2} 
        \int_t^T \langle v(X^{v}_s,s), \mathrm{d}B_s \rangle \! + \! g(X^{v}_T) | X^v_t = x \big]
    \end{talign}
    And for the optimal control $u^*$, we have that 
    \begin{talign} \label{eq:STL_main}
        \int_t^T \big(\frac{1}{2} \|u^*(X^{u^*}_s,s)\|^2 \! + \! f(X^{u^*}_s,s) \big) \, \mathrm{d}s \! + \!
        % \lambda^{1/2} 
        \int_t^T \langle u^*(X^{u^*}_s,s), \mathrm{d}B_s \rangle \! + \! g(X^{u^*}_T) = J(u^*;x,t) = V(X^{u^*}_t,t).
    \end{talign}
    Hence, the left-hand side of \eqref{eq:STL_main} is a zero-variance estimator of $J(u^*;x,t)$. For $v$ close to $u^*$, one can argue that  
    \begin{talign}
        \int_t^T \big(\frac{1}{2} \|v(X^{v}_s,s)\|^2 \! + \! f(X^{v}_s,s) \big) \, \mathrm{d}s \! + \! 
        % \lambda^{1/2} 
        \int_t^T \langle v(X^{v}_s,s), \mathrm{d}B_s \rangle \! + \! g(X^{v}_T)
    \end{talign}
    is a low-variance estimator of $J(v;x,t)$, and hence, it is convenient to use this expression to replace $\int_t^T \big(\frac{1}{2} \|v(X^{v}_s,s)\|^2 \! + \! f(X^{v}_s,s) \big) \, \mathrm{d}s \! + \! g(X^{v}_T)$, whenever needed. This amounts to including an additional term $
    % \lambda^{1/2} 
    \int_t^T \langle v(X^{v}_s,s), \mathrm{d}B_s \rangle$.
\end{remark}
\begin{proof}
    To prove \eqref{eq:STL_main}, we can assume without loss of generality that $t = 0$. We use that (see e.g. \cite[Lemma~4]{domingoenrich2023stochastic})
    \begin{talign}
        \forall v \in \mathcal{U}, \qquad \frac{d\mathbb{P}^v}{d\mathbb{P}}(X) &= \exp \big( 
        % \lambda^{-1/2} 
        \int_0^T \langle v(X_t,t), \mathrm{d}B_t \rangle - \frac{
        % \lambda^{-1}
        1}{2} \int_0^T \|v(X_t,t)\|^2 \, \mathrm{d}t \big), \\
        \frac{d\mathbb{P}}{d\mathbb{P}^{u^*}}(X^{u^*}) &= \exp \big( 
        % \lambda^{-1} 
        \big( - V(X_0^{u^*},0) + \int_t^T f(X^{u^*}_s,s) \, \mathrm{d}s + g(X^{u^*}_T) \big) \big),
    \end{talign}
    which means that
    \begin{talign}
    \begin{split} \label{eq:1_derivative}
        1 &= \frac{d\mathbb{P}^{u^*}}{d\mathbb{P}}(X^{u^*}) \frac{d\mathbb{P}}{d\mathbb{P}^{u^*}}(X^{u^*}) \\ &= \exp \big( 
        % \lambda^{-1/2} 
        \int_0^T \langle v(X^{u^*}_t,t), \mathrm{d}B^{u^*}_t \rangle - \frac{
        % \lambda^{-1}
        1}{2} \int_0^T \|v(X^{u^*}_t,t)\|^2 \, \mathrm{d}t \\ &\qquad\quad + 
        % \lambda^{-1} 
        \big( - V(X_0^{u^*},0) + \int_t^T f(X^{u^*}_s,s) \, \mathrm{d}s + g(X^{u^*}_T) \big) \big)
        \\ &= \exp \big( 
        % \lambda^{-1/2} 
        \int_0^T \langle v(X^{u^*}_t,t), \mathrm{d}B_t \rangle + \frac{
        % \lambda^{-1}
        1}{2} \int_0^T \|v(X^{u^*}_t,t)\|^2 \, \mathrm{d}t \\ &\qquad\quad + 
        % \lambda^{-1} 
        \big( - V(X_0^{u^*},0) + \int_t^T f(X^{u^*}_s,s) \, \mathrm{d}s + g(X^{u^*}_T) \big) \big)
    \end{split}
    \end{talign}
    where we used that $B^{u^*}_t = B_t + 
    % \lambda^{-1/2} 
    \int_0^t v(X^{u^*}_t,t) \, dt$ or equivalently, $dB^{u^*}_t = dB_t + 
    % \lambda^{1/2} 
    v(X^{u^*}_t,t) \, dt$. Equation \eqref{eq:STL_main} can be deduced directly from \eqref{eq:1_derivative}.
\end{proof}

\subsection{Sticking the Landing trick for the Continuous Adjoint loss}
To derive the STL version of the Continuous Adjoint loss, we need to modify slightly the argument from \citealt[Lem.~5]{domingoenrich2024adjoint}. We need to replace $\int_t^T \big(\frac{1}{2} \|v(X^{u}_s,s)\|^2 \! + \! f(X^{u}_s,s) \big) \, \mathrm{d}s \! + \! g(X^{u}_T)$ by $\int_t^T \big(\frac{1}{2} \|v(X^{u}_s,s)\|^2 \! + \! f(X^{u}_s,s) \big) \, \mathrm{d}s \! + \! \int_t^T \langle v(X^{u}_s,s), \mathrm{d}B_s \rangle \! + \! g(X^{u}_T)$ everywhere in the proof. We end up with the following expression for the STL adjoint ODE:
\begin{talign} 
\begin{split} \label{eq:cont_adjoint_1_STL}
    da(t;X^{\bar{u}},\bar{u}) \! &= \! - \bigg[ \big(\nabla_x (b (X^{\bar{u}}_t,t) \! + \! \sigma(t) u(X^{\bar{u}}_t,t)) \big)^{\top} a(t;X^{\bar{u}},\bar{u}) \! + \! \nabla_x (f(X^{\bar{u}}_t,t) \! + \! \frac{1}{2}\|\bar{u}(X^{\bar{u}}_t,t)\|^2) \bigg] \, \mathrm{d}t 
    \\  &\qquad 
    \textcolor{blue}{- \, 
    %\lambda^{1/2} 
    \nabla_x \bar{u}(X^{\bar{u}}_t,t) \, \mathrm{d}B_t}, 
\end{split}
    \\ a(T,X^{\bar{u}}) &= \nabla g(X^v_T). \label{eq:cont_adjoint_2_STL}
\end{talign}
Since the STL estimator has zero variance at the optimal control, the STL adjoint state also has zero variance when $\bar{u}$ is the optimal control.

\subsection{Sticking the Landing trick for Adjoint Matching loss}
To derive the STL version of the Adjoint Matching loss, we apply the same argument as in \cite[Sec.~E.3]{domingoenrich2024adjoint}, but we start from the STL adjoint ODE instead. We obtain the following expression for the lean STL adjoint ODE:
\begin{talign}
\label{eq:work_adjoint_1_STL}
d\tilde{a}(t;X^{\bar{u}},\bar{u}) &= - (\nabla_x b (X^{\bar{u}}_t,t) \tilde{a}(t,X^{\bar{u}}) + \nabla_x f(X^{\bar{u}}_t,t)) \, \mathrm{d}t \ \textcolor{blue}{- \ 
\nabla_x \bar{u}(X^{\bar{u}}_t,t) \, \mathrm{d}B_t}, \\ \tilde{a}(T;X^{\bar{u}},\bar{u}) &= \nabla g(X^{\bar{u}}_T). \label{eq:work_adjoint_2_STL}    
\end{talign}

\section{Derivation of the REINFORCE loss}
\begin{proposition}[Derivation of the REINFORCE loss \eqref{eq:REINFORCE}] \label{prop:derivation_REINFORCE}
    The loss $\mathcal{L}_{\mathrm{RF}}$ in \eqref{eq:REINFORCE} is the continuous-time continuous-space analog of the REINFORCE loss for maximum entropy reinforcement learning.
\end{proposition}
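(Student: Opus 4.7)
The plan is to derive $\mathcal{L}_{\mathrm{RF}}$ by transporting the policy-gradient (score-function) argument of maximum-entropy RL to the continuous-time SDE setting, and to verify by direct differentiation that $\mathbb{E}[\nabla_\theta \mathcal{L}_{\mathrm{RF}}]$ coincides with $\nabla_\theta \mathcal{C}(u_\theta)$, where $\mathcal{C}(u_\theta) := \mathbb{E}_{\mathbb{P}^{u_\theta}}[R(X;u_\theta)]$ with $R(X;u_\theta) := \int_0^T (\tfrac{1}{2}\|u_\theta(X_t,t)\|^2 + f(X_t,t))\,\mathrm{d}t + g(X_T)$ is the control objective in \eqref{eq:control_problem_def}. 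The correspondence with max-ent RL is that the path measure $\mathbb{P}^{u_\theta}$ plays the role of the policy $\pi_\theta$, and the running term $\tfrac{1}{2}\|u\|^2$ is the KL-to-base contribution coming from Girsanov (playing the role of the entropy regularizer).

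\textbf{Step 1 (score via Girsanov).} Fix a base parameter $\theta_0$ and set $\bar{u} := u_{\theta_0}$. By \autoref{cor:girsanov_sdes} applied with $b_1 = b + \sigma\bar{u}$ and $b_2 = \sigma(u_\theta - \bar{u})$, the log Radon--Nikodym density of $\mathbb{P}^{u_\theta}$ with respect to $\mathbb{P}^{\bar{u}}$ evaluated on a trajectory of $X^{\bar{u}}$ is
\[\log \frac{\mathrm{d}\mathbb{P}^{u_\theta}}{\mathrm{d}\mathbb{P}^{\bar{u}}}(X^{\bar{u}}) = \int_0^T \langle u_\theta(X^{\bar{u}}_t,t) - \bar{u}(X^{\bar{u}}_t,t), \mathrm{d}B_t\rangle - \tfrac{1}{2}\int_0^T \|u_\theta - \bar{u}\|^2\,\mathrm{d}t,\]
with $B$ the Brownian motion driving $X^{\bar{u}}$. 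Differentiating at $\theta=\theta_0$ makes the quadratic term vanish, leaving the continuous-time score $\int_0^T \langle \nabla_\theta u_\theta(X^{\bar{u}}_t,t)|_{\theta_0}, \mathrm{d}B_t\rangle$, which is the continuous-time analog of $\nabla_\theta \sum_t \log \pi_\theta(a_t|s_t)$ in discrete-time RL.

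\textbf{Step 2 (REINFORCE identity and matching).} Writing $\mathcal{C}(u_\theta) = \mathbb{E}_{\mathbb{P}^{\bar{u}}}[R(X^{\bar{u}};u_\theta)\,\tfrac{\mathrm{d}\mathbb{P}^{u_\theta}}{\mathrm{d}\mathbb{P}^{\bar{u}}}(X^{\bar{u}})]$ and differentiating by the product rule at $\theta_0$ yields
\[\nabla_\theta \mathcal{C}(u_\theta)\big|_{\theta_0} = \mathbb{E}_{X^{\bar{u}}}\Big[R(X^{\bar{u}};\bar{u})\int_0^T \langle \nabla_\theta u_\theta|_{\theta_0}, \mathrm{d}B_t\rangle\Big] + \mathbb{E}_{X^{\bar{u}}}\Big[\int_0^T \langle \bar{u}(X^{\bar{u}}_t,t), \nabla_\theta u_\theta|_{\theta_0}\rangle\,\mathrm{d}t\Big].\]
The first term is the standard REINFORCE contribution with the detached total cost, and the second is the direct parametric derivative of the KL term. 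Direct differentiation of $\mathcal{L}_{\mathrm{RF}}$ in \eqref{eq:REINFORCE} with $\bar{u}$ treated as a stop-gradient produces exactly these two pieces in expectation: the gradient of $\tfrac{1}{2}\int_0^T\|u\|^2\mathrm{d}t$ gives the second term, and the gradient of the product of the detached total cost with $\int_0^T \langle u,\mathrm{d}B_t\rangle$ gives the first. This identifies $\mathcal{L}_{\mathrm{RF}}$ as the continuous-time, continuous-space analog of the max-ent REINFORCE loss.

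\textbf{Main obstacle.} The delicate step is the Girsanov bookkeeping and the interchange of $\nabla_\theta$ with the expectation. One must verify a Novikov-type integrability condition so that the exponential change of measure is a true martingale, confirm that $\int_0^T \langle \nabla_\theta u_\theta, \mathrm{d}B_t\rangle$ is a well-defined mean-zero Itô integral under $\mathbb{P}^{\bar{u}}$, and justify differentiation under the expectation. Under the admissibility hypothesis $u_\theta \in \mathcal{U}$ these are routine but should be noted. Once in place, the argument is a direct transliteration of the discrete-time max-ent policy-gradient derivation, with $\int_0^T \langle u, \mathrm{d}B_t\rangle$ playing the role of the log-policy gradient and $\tfrac{1}{2}\|u\|^2$ the entropy/KL bonus.
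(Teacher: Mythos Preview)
Your argument is correct and reaches the same conclusion as the paper, but by a shorter route. The paper's proof changes measure from $\mathbb{P}^{u_\theta}$ to the \emph{uncontrolled} law $\mathbb{P}$ (i.e.\ $u\equiv 0$), differentiates the resulting Girsanov-weighted expression in $\theta$, and then applies Girsanov a second time in the reverse direction to return to $\mathbb{P}^{u_\theta}$; an algebraic cancellation in that last step is what produces the clean form $R(X^{\bar u};\bar u)\int_0^T\langle \nabla_\theta u_\theta,\mathrm{d}B_t\rangle$. You instead take $\mathbb{P}^{\bar u}$ with $\bar u=u_{\theta_0}$ as the reference measure from the outset, so the Radon--Nikodym derivative equals $1$ at $\theta=\theta_0$ and the score reduces immediately to $\int_0^T\langle \nabla_\theta u_\theta|_{\theta_0},\mathrm{d}B_t\rangle$. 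This is the natural continuous-time transcription of the discrete-time log-likelihood trick $\nabla_\theta \log\pi_\theta|_{\theta_0}$ evaluated at the current policy, and it sidesteps the double change of measure entirely. The paper's detour through the uncontrolled process is not needed, though it does make explicit how the two $u$-dependent terms that eventually cancel (the $\int \nabla_\theta u\, u\,\mathrm{d}t$ contributions from the weight and from the tilted Brownian shift) arise and disappear. Your caveat about Novikov-type integrability and differentiation under the expectation is appropriate; the paper leaves these implicit.
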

\begin{proof}
A similar derivation can be found in \cite{borrell2022improving}.
By the Girsanov theorem for SDEs (\autoref{cor:girsanov_sdes}), we can rewrite the control objective \eqref{eq:control_problem_def} in terms of the uncontrolled process (that is the process controlled by $u \equiv 0$, which we denote simply by $X$): 
\begin{talign}
\begin{split} \label{eq:change_of_process}
    &\mathbb{E} \big[ \int_0^T 
    \big(\frac{1}{2} \|u(X^u_t,t)\|^2 \! + \! f(X^u_t,t) \big) \, \mathrm{d}t \! + \! 
    g(X^u_T) \big] \\ &= \mathbb{E} \big[ \big( \int_0^T 
    \big(\frac{1}{2} \|u(X_t,t)\|^2 \! + \! f(X_t,t) \big) \, \mathrm{d}t \! + \! 
    g(X_T) \big) \\ &\qquad \times \exp \big( 
    % \lambda^{-1/2} 
    \int_0^T \langle u(X_t,t), \mathrm{d}B_t \rangle - \frac{
    % \lambda^{-1}
    1}{2} \int_0^T \|u(X_t,t)\|^2 \, \mathrm{d}t \big) \big].
\end{split}
\end{talign}
That is, we performed a change of process between the following pair:
\begin{talign}
\begin{split}
    \mathrm{d}X^u_t \! &= \! (b(X^u_t,t) \! + \! \sigma(t) u(X^u_t,t)) \, \mathrm{d}t \! + \! 
    % \sqrt{\lambda} 
    \sigma(t) \mathrm{d}B_t, \qquad X^u_0 \sim p_0, \\
    \mathrm{d}X_t \! &= \! (b(X_t,t) \! + \! \sigma(t) u(X_t,t) \! - \! \sigma(t) u(X_t,t)) \, \mathrm{d}t \! + \! 
    % \sqrt{\lambda} 
    \sigma(t) \mathrm{d}B_t, \qquad X^u_0 \sim p_0,
\end{split}
\end{talign}
which means that the tilted Brownian motion for the change of process is defined as $\tilde{B}_t = B_t - 
% \lambda^{-1/2} 
\int_0^t u(X_s,s) \, ds$.

The REINFORCE gradient for classical MaxEnt RL reads
\begin{talign}
    \nabla_{\theta} \mathbb{E}_{\tau \sim \pi_{\theta},p} [\sum_{k=0}^{K} r_k(s_k,a_k) - \mathrm{KL}(\pi_{\theta}(\cdot;s_k,k)||\pi_{\mathrm{base}}(\cdot;s_k,k))],
\end{talign}
where $\pi_{\theta}$ is a policy parameterized by $\theta$. Following the connection between SOC and maximum entropy RL \citep[App.~C]{domingoenrich2024adjoint}, 
the continuous-time continuous-space analog of the entropy-regularized expected reward is the SOC
%stochastic optimal control 
objective. Thus, if we have a control $u_{\theta}$ parameterized by $\theta$, the REINFORCE control is given by 
\begin{talign}
\nabla_{\theta} \mathbb{E} \big[ \int_0^T 
\big(\frac{1}{2} \|u_{\theta}(X^{u_{\theta}}_t,t)\|^2 \! + \! f(X^{u_{\theta}}_t,t) \big) \, \mathrm{d}t \! + \! g(X^{u_{\theta}}_T) \big].
\end{talign}
We develop this making use of the change of process in \eqref{eq:change_of_process}, which makes the dependence on the control $u_{\theta}$ explicit:
\begin{talign}
\begin{split} \label{eq:girsanov_long}
    &\nabla_{\theta} \mathbb{E} \big[ \int_0^T 
    \big(\frac{1}{2} \|u_{\theta}(X^{u_{\theta}}_t,t)\|^2 \! + \! f(X^{u_{\theta}}_t,t) \big) \, \mathrm{d}t \! + \! 
    g(X^{u_{\theta}}_T) \big] 
    \\ &\stackrel{(i)}{=} \mathbb{E} \big[ \int_0^T \nabla_{\theta} u_{\theta}(X_t,t) u_{\theta}(X_t,t) \, \mathrm{d}t \times \exp \big( 
    % \lambda^{-1/2} 
    \int_0^T \langle u_{\theta}(X_t,t), \mathrm{d}B_t \rangle - \frac{
    % \lambda^{-1}
    1}{2} \int_0^T \|u_{\theta}(X_t,t)\|^2 \, \mathrm{d}t \big) \\ &\qquad + \big( \int_0^T 
    \big(\frac{1}{2} \|u_{\theta}(X_t,t)\|^2 \! + \! f(X_t,t) \big) \, \mathrm{d}t \! + \! 
    g(X_T) \big) \\ &\qquad\quad \times \big( 
    % \lambda^{-1/2} 
    \int_0^T \nabla_{\theta} u_{\theta}(X_t,t) \, \mathrm{d}B_t - \int_0^T \nabla_{\theta} u_{\theta}(X_t,t) u_{\theta}(X_t,t) \, \mathrm{d}t \big) \\ &\qquad\quad \times \exp \big( 
    % \lambda^{-1/2} 
    \int_0^T \langle u_{\theta}(X_t,t), \mathrm{d}B_t \rangle - \frac{
    % \lambda^{-1}
    1}{2} \int_0^T \|u_{\theta}(X_t,t)\|^2 \, \mathrm{d}t \big) \big]
    \\ &\stackrel{(ii)}{=} \mathbb{E} \big[ \int_0^T \nabla_{\theta} u_{\theta}(X_t,t) u_{\theta}(X_t,t) \, \mathrm{d}t \times \exp \big( 
    % \lambda^{-1/2} 
    \int_0^T \langle u_{\theta}(X_t,t), \mathrm{d}B_t \rangle - \frac{
    % \lambda^{-1}
    1}{2} \int_0^T \|u_{\theta}(X_t,t)\|^2 \, \mathrm{d}t \big) \\ &\qquad + \big( \int_0^T 
    \big(\frac{1}{2} \|u_{\theta}(X_t,t)\|^2 \! + \! f(X_t,t) \big) \, \mathrm{d}t \! + \! 
    g(X_T) \big) \\ &\qquad\quad \times \big( 
    % \lambda^{-1/2} 
    \int_0^T \nabla_{\theta} u_{\theta}(X_t,t) \, (\underbrace{\mathrm{d}B_t  \ - \ 
    % \lambda^{-1/2} 
    u_{\theta}(X_t,t) \, dt}_{\mathrm{d}\tilde{B}_t} 
   + \ 
    % \lambda^{-1/2} 
    u_{\theta}(X_t,t) \, dt) \\ &\qquad\qquad\quad -
    % \lambda^{-1} 
    \int_0^T \nabla_{\theta} u_{\theta}(X_t,t) u_{\theta}(X_t,t) \, \mathrm{d}t \big) \\ &\qquad\quad \times \exp \big( 
    % \lambda^{-1/2} 
    \int_0^T \langle u_{\theta}(X_t,t), \mathrm{d}B_t \rangle - \frac{
    % \lambda^{-1}
    1}{2} \int_0^T \|u_{\theta}(X_t,t)\|^2 \, \mathrm{d}t \big) \big]
    \\ &\stackrel{(iii)}{=} \mathbb{E} \big[ \int_0^T \nabla_{\theta} u_{\theta}(X^{u_{\theta}}_t,t) u_{\theta}(X^{u_{\theta}}_t,t) \, \mathrm{d}t \\ &\qquad + \big( \int_0^T 
    \big(\frac{1}{2} \|u_{\theta}(X^{u_{\theta}}_t,t)\|^2 \! + \! f(X^{u_{\theta}}_t,t) \big) \, \mathrm{d}t \! + \! 
    g(X^{u_{\theta}}_T) \big) \\ &\qquad\quad \times \big( 
    % \lambda^{-1/2} 
    \int_0^T \nabla_{\theta} u_{\theta}(X^{u_{\theta}}_t,t) \, (\mathrm{d}B_t + 
    % \lambda^{-1/2} 
    u_{\theta}(X^{u_{\theta}}_t,t) \, dt) \\ &\qquad\qquad\quad - 
    % \lambda^{-1} 
    \int_0^T \nabla_{\theta} u_{\theta}(X^{u_{\theta}}_t,t) u_{\theta}(X^{u_{\theta}}_t,t) \, \mathrm{d}t \big)
    \big]
    \\ &\stackrel{(iv)}{=} \mathbb{E} \big[ \int_0^T \nabla_{\theta} u_{\theta}(X^{u_{\theta}}_t,t) u_{\theta}(X^{u_{\theta}}_t,t) \, \mathrm{d}t \\ &\qquad + 
    % \lambda^{-1/2} 
    \big( \int_0^T 
    \big(\frac{1}{2} \|u_{\theta}(X^{u_{\theta}}_t,t)\|^2 \! + \! f(X^{u_{\theta}}_t,t) \big) \, \mathrm{d}t \! + \! 
    g(X^{u_{\theta}}_T) \big) 
    \times 
    \int_0^T \nabla_{\theta} u_{\theta}(X^{u_{\theta}}_t,t) \, \mathrm{d}B_t 
    \big]
\end{split}
\end{talign}
In this derivation,
\begin{itemize}[leftmargin=0.5cm]
    \item Equality $(i)$ holds by taking the derivative of the right-hand side of \eqref{eq:change_of_process} with respect to $\theta$.
    \item Equality $(ii)$ holds by adding and subtracting $
    % \lambda^{-1/2} 
    u_{\theta}(X_t,t) \, dt$ to $dB_t$, in order to express the stochastic integral in terms of the tilted Brownian motion $\tilde{B}_t$. 
    \item Equality $(iii)$ holds by applying the Girsanov theorem in the opposite direction, with a change of process from $X$ to $X^{u_{\theta}}$. In these case, we need to use the full version of the theorem (\autoref{thm:Girsanov}), which states that the tilted Brownian motion $\tilde{B}_t$ is a standard Brownian motion when the expectation contains the importance weight $\exp \big( 
    % \lambda^{-1/2} 
    \int_0^T \langle u_{\theta}(X_t,t), \mathrm{d}B_t \rangle - \frac{
    % \lambda^{-1}
    1}{2} \int_0^T \|u_{\theta}(X_t,t)\|^2 \, \mathrm{d}t \big)$.
    \item Equality $(iv)$ is a straight-forward simplification.
\end{itemize}
To conclude the proof, observe that
\begin{talign}
\begin{split}
    &\mathbb{E} \big[ \int_0^T \nabla_{\theta} u_{\theta}(X^{u_{\theta}}_t,t) u_{\theta}(X^{u_{\theta}}_t,t) \, \mathrm{d}t \\ &\qquad + 
    % \lambda^{-1/2} 
    \big( \int_0^T 
    \big(\frac{1}{2} \|u_{\theta}(X^{u_{\theta}}_t,t)\|^2 \! + \! f(X^{u_{\theta}}_t,t) \big) \, \mathrm{d}t \! + \! 
    g(X^{u_{\theta}}_T) \big) 
    \times 
    \int_0^T \nabla_{\theta} u_{\theta}(X^{u_{\theta}}_t,t) \, \mathrm{d}B_t 
    \big] \\ &= \nabla_{\theta} \mathbb{E}\big[ \frac{1}{2} \int_0^T \|u_{\theta}(X^{v}_t,t)\|^2 \, \mathrm{d}t \\ &\qquad + 
    % \lambda^{-1/2} 
    \big( \int_0^T 
    \big(\frac{1}{2} \|v(X^{v}_t,t)\|^2 \! + \! f(X^{v}_t,t) \big) \, \mathrm{d}t \! + \! 
    g(X^{v}_T) \big) \times 
    \int_0^T u_{\theta}(X^{v}_t,t) \, \mathrm{d}B_t \big] \rvert_{v = \mathrm{stopgrad}(u_{\theta})} \\ &= \nabla_{\theta} \mathbb{E}[\mathcal{L}_{\mathrm{RF}}(u_{\theta})].
\end{split}
\end{talign}
\end{proof}

\section{Theoretical guarantees of new SOC loss functions}
% \section{THEORETICAL GUARANTEES OF NEW SOC LOSS FUNCTIONS}

    \subsection{Proof of \autoref{prop:work_adjoint_app}: theoretical guarantees of the Work-SOCM loss}
    To prove that the gradients of the losses $\mathcal{L}_{\mathrm{Adj-Match}}$ and $\mathcal{L}_{\mathrm{Work-SOCM}}$ are equal, we first state a characterization of the solution $\tilde{a}(t,X^v)$ of the Adjoint Matching ODE \eqref{eq:lean_adjoint_1}-\eqref{eq:lean_adjoint_2}. By \autoref{lem:adjoint_method_sdes}, we have that
    \begin{talign} \label{eq:a_alternative}
        \tilde{a}(t,X^v) = \nabla_{X_t} \big(\int_t^T f(X_s,s) \, \mathrm{d}s + g(X_T) \big) \rvert_{X = X^v}.
    \end{talign}
    Here, $X$ is the uncontrolled process, satisfying $\mathrm{d}X_t = b(X_t,t) \, \mathrm{d}t + \sigma(t) \, \mathrm{d}B_t$, $X_0=x$. Recall that $\mathcal{W}(t,X) = \int_t^T f(X_s,s) \, \mathrm{d}s + g(X_T)$ is known as the work function, which is why Adjoint Matching is an appropriate name for the resulting loss.

    From \eqref{eq:a_alternative}, we deduce that
    \begin{talign}
    \begin{split} \label{eq:exp_a_stopgrad}
        &\mathbb{E}[\tilde{a}(t,X^v)|X^v_t = x] = \mathbb{E}[\nabla_{X_t} \big(\int_t^T f(X_s,s) \, \mathrm{d}s + g(X_T) %\textcolor{blue}{+ \, \lambda^{1/2} \int_t^T \langle v(X^v_s,s), \mathrm{d}B_s \rangle} 
        \big) \rvert_{X = X^v}|X^v_t = x] \\ &= \mathbb{E}[\nabla_{X_t} \big(\int_t^T f(X_s,s) \, \mathrm{d}s + g(X_T) %\textcolor{blue}{+ \, \lambda^{1/2} \int_t^T  \langle v(X^v_s,s), \mathrm{d}B_s \rangle} 
        \big) \frac{\mathrm{d}\mathbb{P}^v}{\mathrm{d}\mathbb{P}}(X) |X_t = x] \\ &= \nabla_{x} \mathbb{E}[ \big(\int_t^T f(X_s,s) \, \mathrm{d}s + g(X_T) %\textcolor{blue}{+ \, \lambda^{1/2} \int_t^T \langle v(X^v_s,s), \mathrm{d}B_s \rangle} 
        \big) \mathrm{stopgrad}(\frac{\mathrm{d}\mathbb{P}^v}{\mathrm{d}\mathbb{P}}(X)) |X_t = x]
    \end{split}
    \end{talign}
    where $\frac{\mathrm{d}\mathbb{P}^v}{\mathrm{d}\mathbb{P}}(X) = \exp\big( 
    % \lambda^{-1/2}
    \int_t^T \langle v(X_s,s), \mathrm{d}B_s \rangle - \frac{
    % \lambda^{-1}
    1}{2} \int_t^T \|v(X_s,s)\|^2 \, \mathrm{d}s \big)$ by Girsanov's theorem (\autoref{cor:girsanov_sdes}).

    Next, we apply the path-wise reparameterization trick (\autoref{lem:cond_exp_rewritten}), taking the uncontrolled process $X$ between times $t$ and $T$ and defining $F$ such that
    \begin{talign}
    \begin{split}
        \exp(- F(X)) &= \big(\int_t^T f(X_s,s) \, \mathrm{d}s + g(X_T) %\textcolor{blue}{+ \, \lambda^{1/2} \int_t^T \langle v(X_s,s), \mathrm{d}B_s \rangle} 
        \big) \mathrm{stopgrad}(\frac{\mathrm{d}\mathbb{P}^v}{\mathrm{d}\mathbb{P}}(X)) \\ \iff F(X) &= - \log \big( \big(\int_t^T f(X_s,s) \, \mathrm{d}s + g(X_T) %\textcolor{blue}{+ \, \lambda^{1/2} \int_t^T \langle v(X_s,s), \mathrm{d}B_s \rangle} 
        \big) \mathrm{stopgrad}(\frac{\mathrm{d}\mathbb{P}^v}{\mathrm{d}\mathbb{P}}(X)) \big)
    \end{split}
    \end{talign}
    This means that
    \begin{talign}
    \begin{split}
        &\nabla_z F(X + \psi(z,\cdot)) \rvert_{z=0} = - \frac{\partial_z \big(\int_t^T f(X_s,s) \, \mathrm{d}s + g(X_T) %\textcolor{blue}{+ \, \lambda^{1/2} \int_t^T \langle v(X_s,s), \mathrm{d}B_s \rangle} 
        \big) \rvert_{z=0} \mathrm{stopgrad}(\frac{\mathrm{d}\mathbb{P}^v}{\mathrm{d}\mathbb{P}}(X))}{\big(\int_t^T f(X_s,s) \, \mathrm{d}s + g(X_T) %\textcolor{blue}{+ \, \lambda^{1/2} \int_t^T \langle v(X_s,s), \mathrm{d}B_s \rangle} 
        \big) \mathrm{stopgrad}(\frac{\mathrm{d}\mathbb{P}^v}{\mathrm{d}\mathbb{P}}(X))} \\ &= - \frac{\int_t^T \nabla_z \psi(0,s) \nabla_x f(X_s,s) \, \mathrm{d}s + \nabla_z \psi(0,T) \nabla_x g(X_T) %\textcolor{blue}{+ \, \lambda^{1/2} \int_t^T \nabla_z \psi(0,s) \nabla_x v(X_s,s) \, \mathrm{d}B_s} 
        }{\int_t^T f(X_s,s) \, \mathrm{d}s + g(X_T) %\textcolor{blue}{+ \, \lambda^{1/2} \int_t^T \langle v(X_s,s), \mathrm{d}B_s \rangle}
        }
    \end{split}
    \end{talign}
    % Let $J(v; x,t) = \mathbb{E}[\int_t^T f(X^v_s,s) \, \mathrm{d}s + g(X^v_T) \, \textcolor{blue}{+ \, \lambda^{1/2} \int_t^T \langle v(X^v_t,t), \mathrm{d}B_t \rangle} | X^v_t = x]$ be the cost functional. 
    We conclude that
    \begin{talign} 
    \begin{split} \label{eq:pathwise_rt_work_adjoint}
        %&\nabla_x J(v;x,t) = 
        \nabla_{x} \mathbb{E}\big[ \exp \big( 
        - F(X)
        \big) \big| X_0 = x \big] &= \! \mathbb{E}\big[ \big( 
        \! - \! \nabla_z F(X \! + \! \psi(z,\cdot)) \rvert_{z=0}
        \\ &\qquad\quad + \! 
        % \lambda^{-1/2} 
        \int_t^T (\nabla_z \psi(0,s) \nabla_x b(X_s,s) \! - \! \nabla_z \partial_s \psi(0,s)) (\sigma^{-1})^{\top} (s) \mathrm{d}B_s \big) 
        \\ &\qquad\qquad \times 
        \exp \big( - F(X) \big) 
    \big| X_t = x \big] \\ &= \mathbb{E}\big[ 
        \big( \int_t^T 
        \nabla_z \psi(0,s) \nabla_x f(X_s,s) \, \mathrm{d}s + 
        \nabla_z \psi(0,T) \nabla_x g(X_T) %\, \textcolor{blue}{+ \, \lambda^{1/2} \int_t^T \nabla_z \psi(0,s) \nabla_x v(X_s,s) \, \mathrm{d}B_s}
        \\ &\qquad\quad + \! 
        % \lambda^{-1/2} 
        \int_t^T (\nabla_z \psi(0,s) \nabla_x b(X_s,s) \! - \! \nabla_z \partial_s \psi(0,s)) (\sigma^{-1})^{\top} (s) \mathrm{d}B_s 
        \\ &\qquad\qquad \times \big( \int_t^T f(X_s,s) \, \mathrm{d}s \! + \! 
        g(X_T) %\textcolor{blue}{+ \, \lambda^{1/2} \int_t^T \langle v(X_s,s), \mathrm{d}B_s \rangle} 
        \big) \big) \times \mathrm{stopgrad}(\frac{\mathrm{d}\mathbb{P}^v}{\mathrm{d}\mathbb{P}}(X))  
    \big| X_t = x \big]
    \\ &= \mathbb{E}\big[ 
        \int_t^T 
        \nabla_z \psi(0,s) \nabla_x f(X^v_s,s) \, \mathrm{d}s + 
        \nabla_z \psi(0,T) \nabla_x g(X^v_T) %\, \textcolor{blue}{+ \, \lambda^{1/2} \int_t^T \nabla_z \psi(0,s) \nabla_x v(X^v_s,s) \, \mathrm{d}B_s}
        \\ &\qquad\quad + \! 
        % \lambda^{-1/2} 
        \int_t^T (\nabla_z \psi(0,s) \nabla_x b(X^v_s,s) \! - \! \nabla_z \partial_s \psi(0,s)) (\sigma^{-1})^{\top} (s) \mathrm{d}B_s 
        \\ &\qquad\qquad \times \big( \int_t^T f(X^v_s,s) \, \mathrm{d}s \! + \! 
        g(X^v_T) %\textcolor{blue}{+ \, \lambda^{1/2} \int_t^T \langle v(X^v_s,s), \mathrm{d}B_s \rangle} 
        \big) \big| X^v_t = x \big]
    \end{split}
    \end{talign}
    If we choose the perturbation $\psi(z,s) = M_t(s)^{\top} z$ with $M_t(t) = \mathrm{Id}$, we have that the conditions in \autoref{lem:cond_exp_rewritten} are satisfied, and we have that $\nabla \psi(z,s) = M_t(s)$. Using this, and putting together \eqref{eq:exp_a_stopgrad} and \eqref{eq:pathwise_rt_work_adjoint}, we obtain that
    \begin{talign}
    \begin{split} \label{eq:tilde_a_tilde_xi_proof}
        % - \sigma(t)^{\top} 
        \mathbb{E}[\tilde{a}(t,X^v)|X^v_t = x] %\nabla_x J(v;x,t) 
        \! &= \! 
        % - \sigma(t)^{\top} 
        \mathbb{E}\big[ 
        \int_t^T 
        M_t(s) \nabla_x f(X^v_s,s) \, \mathrm{d}s + 
        M_t(T) \nabla_x g(X^v_T)
        \\ &\qquad\qquad\quad + \! 
        % \lambda^{-1/2} 
        \int_t^T (M_t(s) \nabla_x b(X^v_s,s) \! - \! \nabla_z M_t(s)) (\sigma^{-1})^{\top} (s) \mathrm{d}B_s 
        \\ &\qquad\qquad\quad \times \big( \int_t^T f(X^v_s,s) \, \mathrm{d}s \! + \! 
        g(X^v_T) %\, \textcolor{blue}{+ \, \lambda^{1/2} \int_t^T \langle v(X^v_s,s), \mathrm{d}B_s \rangle}
        \big) 
        \big| X^v_t = x \big] \\ &= \mathbb{E}\big[ \tilde{\xi}(t,v,X^v,B,M_t) \big| X^v_t = x \big].
    \end{split}
    \end{talign}
    Now, we can write a decomposition of the expected loss $\mathbb{E}[\mathcal{L}_{\mathrm{Work-SOCM}}]$ 
    % analogous to the one in \eqref{eq:work_adjoint_rewritten}:
    by adding and subtracting the conditional expectation of $\sigma(t)^{\top} \tilde{\xi}(t,v,X^v,B,M_t)$, which allows us to use \eqref{eq:tilde_a_tilde_xi_proof}:
    \begin{talign}
    \begin{split} \label{eq:work_SOCM_rewritten}
        \mathbb{E}[\mathcal{L}_{\mathrm{Work-SOCM}}(u)] &:= \mathbb{E} \big[\int_0^{T} \big\| u(X^v_t,t)
        + \sigma(t)^{\top} \mathbb{E}\big[ \tilde{\xi}(t,v,X^v,B,M_t) \big| X^v_t \big] \big\|^2 \, \mathrm{d}t \big] \rvert_{v = \mathrm{stopgrad}(u)} \\ &\; + \mathbb{E} \big[\int_0^{T} \big\| \sigma(t)^{\top} \big( \mathbb{E}\big[ \tilde{\xi}(t,v,X^v,B,M_t) \big| X^v_t \big] - \tilde{\xi}(t,v,X^v,B,M_t) \big) \big\|^2 \, \mathrm{d}t \big] \rvert_{v = \mathrm{stopgrad}(u)} \\ &= \mathbb{E} \big[\int_0^{T} \big\| u(X^v_t,t)
        + \sigma(t)^{\top} \mathbb{E}[\tilde{a}(t,X^v)|X^v_t = x] \big\|^2 \, \mathrm{d}t \big] \rvert_{v = \mathrm{stopgrad}(u)} \\ &\; + \mathbb{E} \big[\int_0^{T} \big\| \sigma(t)^{\top} \big( \mathbb{E}\big[ \tilde{\xi}(t,v,X^v,B,M_t) \big| X^v_t \big] - \tilde{\xi}(t,v,X^v,B,M_t) \big) \big\|^2 \, \mathrm{d}t \big] \rvert_{v = \mathrm{stopgrad}(u)}
        \\ &= \mathbb{E}[\mathcal{L}_{\mathrm{Adj-M}}(u)] - \mathbb{E} \big[\int_0^{T} \big\| \sigma(t)^{\top} \big(\mathbb{E}\big[ \tilde{a}(t,X^v) | X^v_{t} \big] - \tilde{a}(t,X^v)  \big)\big\|^2 \, \mathrm{d}t \big] \rvert_{v = \mathrm{stopgrad}(u)} \\ &\; + \mathbb{E} \big[\int_0^{T} \big\| \sigma(t)^{\top} \big( \mathbb{E}\big[ \tilde{\xi}(t,v,X^v,B,M_t) \big| X^v_t \big] - \tilde{\xi}(t,v,X^v,B,M_t) \big) \big\|^2 \, \mathrm{d}t \big] \rvert_{v = \mathrm{stopgrad}(u)}.
    \end{split}
    \end{talign}

\subsection{Proof of \autoref{prop:SOCM_cost}: theoretical guarantees of the Cost-SOCM loss}
We rely on the path-wise reparameterization trick (\autoref{lem:cond_exp_rewritten}), taking the process to be the process $X^v$ controlled by $v$ between times $t$ and $T$, and 
\begin{talign}
\begin{split}
    \exp(- F(X^v)) &= \int_t^T 
    \big(\frac{1}{2} \|v(X^v_s,s)\|^2 \! + \! f(X^v_s,s) \big) \, \mathrm{d}s \! + \! 
    g(X^v_T) \, \textcolor{blue}{+ \, 
    % \lambda^{1/2} 
    \int_t^T \langle v(X^v_s,s), \mathrm{d}B_s \rangle} \\ \iff F(X^v) &= - \log \big( \int_t^T 
    \big(\frac{1}{2} \|v(X^v_s,s)\|^2 \! + \! f(X^v_s,s) \big) \, \mathrm{d}s \! + \! 
    g(X^v_T) \, \textcolor{blue}{+ \, 
    % \lambda^{1/2} 
    \int_t^T \langle v(X^v_s,s), \mathrm{d}B_s \rangle} \big)
\end{split}
\end{talign}
This means that
\begin{talign}
\begin{split}
    &\nabla_z F(X^v + \psi(z,\cdot)) \rvert_{z=0} = - \frac{\partial_z \big( \int_t^T 
    (\frac{1}{2} \|v(X^v_s + \psi(z,s),s)\|^2 + f(X^v_s + \psi(z,s),s) ) \, \mathrm{d}s + 
    g(X^v_T + \psi(z,T)) \, \textcolor{blue}{+ \,
    % \lambda^{1/2} 
    \int_t^T \langle v(X^v_s,s), \mathrm{d}B_s \rangle} \big) \rvert_{z=0}}{\int_0^T 
    (\frac{1}{2} \|v(X^v_s,s)\|^2 + f(X^v_s,s) ) \, \mathrm{d}t + 
    g(X^v_T) \, \textcolor{blue}{+ \, 
    % \lambda^{1/2} 
    \int_t^T \langle v(X^v_s,s), \mathrm{d}B_s \rangle}} \\ &= - \frac{ \int_t^T 
    \nabla_z \psi(0,s) \nabla_x (\frac{1}{2} \|v(X^v_s,s)\|^2 + f(X^v_s,s) ) \, \mathrm{d}s + 
    \nabla_z \psi(0,T) \nabla_x g(X^v_T) \, \textcolor{blue}{+ \, 
    % \lambda^{1/2} 
    \int_t^T \nabla_z \psi(0,s) \nabla_x v(X^v_s,s) \mathrm{d}B_s}}{\int_t^T 
    (\frac{1}{2} \|v(X^v_s,s)\|^2 + f(X^v_s,s) ) \, \mathrm{d}s + 
    g(X^v_T) \, \textcolor{blue}{+ \, 
    % \lambda^{1/2} 
    \int_t^T \langle v(X^v_s,s), \mathrm{d}B_s \rangle}}
    % \\ &= - \frac{ \int_t^T 
    % M_t(s) \nabla_x (\frac{1}{2} \|v(X^v_s,s)\|^2 + f(X^v_s,s) ) \, \mathrm{d}s + 
    % M_t(T) \nabla_x g(X^v_T) }{\int_t^T 
    % (\frac{1}{2} \|v(X^v_s,s)\|^2 + f(X^v_s,s) ) \, \mathrm{d}s + 
    % g(X^v_T)}
\end{split}
\end{talign}
Let $J(v; x,t) = \mathbb{E}[\int_t^T 
    \big(\frac{1}{2} \|v(X^v_s,s)\|^2 + f(X^v_s,s) \big) \, \mathrm{d}s + 
    g(X^v_T) | X^v_t = x]$ be the cost functional. We conclude that
\begin{talign} 
\begin{split} \label{eq:pathwise_rt_SOCM_cost}
    &\nabla_x J(v;x,t) = \nabla_{x} \mathbb{E}\big[ \exp \big( 
    - F(X^v)
    \big) \big| X_0 = x \big] \\ &= \! \mathbb{E}\big[ \big( 
    \! - \! \nabla_z F(X^v \! + \! \psi(z,\cdot)) \rvert_{z=0}
    \\ &\qquad\quad + \! 
    % \lambda^{-1/2} 
    \int_t^T (\nabla_z \psi(0,s) \nabla_x (b(X^v_s,s) + \sigma(s) v(X^v_s,s)) \! - \! \nabla_z \partial_s \psi(0,s)) (\sigma^{-1})^{\top} (s) \mathrm{d}B_s \big) 
    \\ &\qquad\qquad \times 
    \exp \big( - F(X^v) \big) 
\big| X^v_t = x \big] \\ &= \mathbb{E}\big[ 
    \int_t^T 
    \nabla_z \psi(0,s) \nabla_x (\frac{1}{2} \|v(X^v_s,s)\|^2 + f(X^v_s,s) ) \, \mathrm{d}s + 
    \nabla_z \psi(0,T) \nabla_x g(X^v_T) \, \textcolor{blue}{+ \, 
    % \lambda^{1/2} 
    \int_t^T \nabla_z \psi(0,s) \nabla_x v(X^v_s,s) \mathrm{d}B_s}
    \\ &\qquad\quad + \! 
    % \lambda^{-1/2} 
    \int_t^T (\nabla_z \psi(0,s) \nabla_x (b(X^v_s,s) + \sigma(s) v(X^v_s,s)) \! - \! \nabla_z \partial_s \psi(0,s)) (\sigma^{-1})^{\top} (s) \mathrm{d}B_s 
    \\ &\qquad\qquad \times \big( \int_t^T 
    \big(\frac{1}{2} \|v(X^v_s,s)\|^2 \! + \! f(X^v_s,s) \big) \, \mathrm{d}s \! + \! 
    g(X^v_T) \, \textcolor{blue}{+ \, 
    % \lambda^{1/2} 
    \int_t^T \langle v(X^v_s,s), \mathrm{d}B_s \rangle} \big) 
\big| X^v_t = x \big]
\end{split}
\end{talign}
If we choose the perturbation $\psi(z,s) = M_t(s)^{\top} z$ with $M_t(t) = \mathrm{Id}$, we have that the conditions in \autoref{lem:cond_exp_rewritten} are satisfied, and we have that $\nabla \psi(z,s) = M_t(s)$. Thus,
\begin{talign}
\begin{split}
    % - \sigma(t)^{\top} 
    \nabla_x J(v;x,t) \! &= \! 
    % - \sigma(t)^{\top} 
    \mathbb{E}\big[ 
    \int_t^T 
    M_t(s) \nabla_x (\frac{1}{2} \|v(X^v_s,s)\|^2 + f(X^v_s,s) ) \, \mathrm{d}s + 
    M_t(T) \nabla_x g(X^v_T) \, \textcolor{blue}{+ \,
    % \lambda^{1/2} 
    \int_t^T \nabla_z \psi(0,s) \nabla_x v(X^v_s,s) \mathrm{d}B_s}
    \\ &\quad + \! 
    % \lambda^{-1/2} 
    \int_t^T (M_t(s) \nabla_x (b(X^v_s,s) \! + \! \sigma(s) v(X^v_s,s)) \! - \! \nabla_z M_t(s)) (\sigma^{-1})^{\top} (s) \mathrm{d}B_s 
    \\ &\qquad\quad \times \big( \int_t^T 
    \big(\frac{1}{2} \|v(X^v_s,s)\|^2 \! + \! f(X^v_s,s) \big) \, \mathrm{d}s \! + \! 
    g(X^v_T) \, \textcolor{blue}{+ \, 
    % \lambda^{1/2} 
    \int_t^T \langle v(X^v_s,s), \mathrm{d}B_s \rangle} \big) 
    \big| X^v_t = x \big] \\ &= \mathbb{E}\big[ \xi(t,v,X^v,B,M_t) \big| X^v_t = x \big].
\end{split}
\end{talign}
Let $a(t;X^v,v)$ be the solution of the adjoint ODE \eqref{eq:cont_adjoint_1}-\eqref{eq:cont_adjoint_2}. 
% Remark that \autoref{prop:derivation_cont_adjoint}
\cite[Lem.~5]{domingoenrich2024adjoint} shows that $\mathbb{E}[a(t,X^v) | X^v_t = x] = \nabla_x J(v;x,t)$, which means that $\mathbb{E}\big[ \xi(t,v,X^v,B,M_t) \big| X^v_t = x \big] = \mathbb{E}[a(t,X^v) | X^v_t = x]$. Writing out the decomposition of the expected loss $\mathbb{E}[\mathcal{L}_{\mathrm{Cost-SOCM}}]$ analogous to \eqref{eq:work_SOCM_rewritten} concludes the proof:
\begin{talign}
\begin{split}
    \mathbb{E}[\mathcal{L}_{\mathrm{Cost-SOCM}}(u,M)] &= \mathbb{E} \big[\int_0^{T} \big\| u(X^v_t,t)
    + \sigma(t)^{\top} \mathbb{E}\big[ \xi(t,v,X^v,B,M_t) | X^v_{t} \big] \big\|^2 \, \mathrm{d}t \big] \rvert_{v = \mathrm{stopgrad}(u)} \\ &\quad + \mathbb{E} \big[\int_0^{T} \big\| \sigma(t)^{\top} \big(\mathbb{E}\big[ \xi(t,v,X^v,B,M_t) | X^v_{t} \big] - \xi(t,v,X^v,B,M_t)  \big)\big\|^2 \, \mathrm{d}t \big] \rvert_{v = \mathrm{stopgrad}(u)} \\ &= \mathbb{E}[\mathcal{L}_{\mathrm{Cont-Adj}}(u,M)] - \mathbb{E} \big[\int_0^{T} \big\| \sigma(t)^{\top} \big(\mathbb{E}\big[ a(t,X^v) | X^v_{t} \big] - a(t,X^v)  \big)\big\|^2 \, \mathrm{d}t \big] \rvert_{v = \mathrm{stopgrad}(u)} \\ &\quad + \mathbb{E} \big[\int_0^{T} \big\| \sigma(t)^{\top} \big(\mathbb{E}\big[ \xi(t,v,X^v,B,M_t) | X^v_{t} \big] - \xi(t,v,X^v,B,M_t)  \big)\big\|^2 \, \mathrm{d}t \big] \rvert_{v = \mathrm{stopgrad}(u)}.
\end{split}
\end{talign}

\subsection{Proof of \autoref{prop:UW_SOCM_optimality}: theoretical guarantees of the UW-SOCM loss} \label{subsec:UW_SOCM_optimality}
    Recall that for any $M$, the unique minimizer of $\mathcal{L}_{\mathrm{SOCM}}(\cdot,M)$ is the optimal control $u^*$, c.f. \cite[proof sketch of Theorem 3.1]{domingoenrich2023stochastic}. We compute the first variation of the loss $\mathcal{L}_{\mathrm{SOCM}}$  with respect to $u$:
    \begin{talign}
        \frac{\delta}{\delta u}\mathbb{E}[\mathcal{L}_{\mathrm{SOCM}}(u,M)](x,t) &= \mathbb{E}[ (u(X^v_t,t) + \sigma(t)^{\top} \omega(t,v,X^v,B,M_t)) \times \alpha(v, X^v, B) | X_t^v = x]~,
        % \mathbb{E} \big[\frac{1}{T} \int_0^{T} \big\| u(X^v_t,t) - w(t,v,X^v,B,M_t) \big\|^2 \, \mathrm{d}t \times \alpha(v, X^v, B) \big]~,
    \end{talign}
    where $v$ is arbitrary.
    This is because for any perturbation $\tilde{u}$, we have that
    \begin{talign}
    \begin{split}
        &\lim_{\epsilon \to 0} \mathbb{E} \big[\int_0^{T} \big\| u(X^v_t,t) + \epsilon  \tilde{u}(X^v_t,t)
        + \sigma(t)^{\top} \omega(t,v,X^v,B,M_t) \big\|^2 \, \mathrm{d}t 
        \times \alpha(v, X^v, B) \big] \\ &= \mathbb{E} \big[ \int_0^{T} \big\langle u(X^v_t,t) + \sigma(t)^{\top} \omega(t,v,X^v,B,M_t), \tilde{u}(X^v_t,t) \big\rangle \, \mathrm{d}t 
        \times \alpha(v, X^v, B) \big]
        \\ &= \int_0^{T} \mathbb{E} \big[ \big\langle \mathbb{E}[ (u(X^v_t,t) + \sigma(t)^{\top} \omega(t,v,X^v,B,M_t)) \times \alpha(v, X^v, B) | X_t^v], \tilde{u}(X^v_t,t) \big\rangle \big] \, \mathrm{d}t
    \end{split}
    \end{talign}
    Now, if we set $v = u^*$, we have that $\alpha(u^*, X^{u^*}, B) = \exp \big( - 
    % \lambda^{-1} 
    V(X^{u^*}_0,0) \big)$, and thus,
    \begin{talign}
    \begin{split}
        &\frac{\delta}{\delta u}\mathbb{E}[\mathcal{L}_{\mathrm{SOCM}}(u,M)](x,t) \\ &= \mathbb{E}[ (u(X^{u^*}_t,t) + \sigma(t)^{\top} \omega(t,u^*,X^{u^*},B,M_t)) \times \exp \big( - 
        % \lambda^{-1} 
        V(X^{u^*}_0,0) \big) | X_t^{u^*} = x] \\ &= \mathbb{E}[ (u(X^{u^*}_t,t) + \sigma(t)^{\top} \omega(t,u^*,X^{u^*},B,M_t)) | X_t^{u^*} = x] %\\  &\qquad 
        \times \mathbb{E}[ \exp \big( - 
        % \lambda^{-1} 
        V(X^{u^*}_0,0) \big) | X_t^{u^*} = x]
        \\ &= \big(u(x,t) + \sigma(t)^{\top} \mathbb{E}[\omega(t,u^*,X^{u^*},B,M_t)| X_t^{u^*} = x] \big) %\\  &\qquad 
        \times \mathbb{E}[ \exp \big( - 
        % \lambda^{-1} 
        V(X^{u^*}_0,0) \big) | X_t^{u^*} = x]~.  
    \end{split}
    \end{talign}
    Here, the second equality holds by the Markov property, which implies that $X^{u^*}_{[t,T]}$ and $X^{u^*}_0$ are conditionally independent given $X^{u^*}_t$.
    If we follow the same approach for $\mathbb{E}[\mathcal{L}_{\mathrm{UW-SOCM}}]$, we obtain that 
    \begin{talign}
    \begin{split}
        \frac{\delta}{\delta u}\mathbb{E}[\mathcal{L}_{\mathrm{UW-SOCM}}(u,M)](x,t) &= \mathbb{E}[ u(X^v_t,t) + \sigma(t)^{\top} \omega(t,v,X^v,B,M_t) | X_t^v = x] \rvert_{v = \mathrm{stopgrad}(u)} \\ &= u(x,t) + \sigma(t)^{\top} \mathbb{E}[ \omega(t,v,X^v,B,M_t) | X_t^v = x] \rvert_{v = \mathrm{stopgrad}(u)}~.
        % \mathbb{E} \big[\frac{1}{T} \int_0^{T} \big\| u(X^v_t,t) - w(t,v,X^v,B,M_t) \big\|^2 \, \mathrm{d}t \times \alpha(v, X^v, B) \big]~,
    \end{split}
    \end{talign}
    Note that $\frac{\delta}{\delta u}\mathbb{E}[\mathcal{L}_{\mathrm{SOCM}}(u^*,M)](x,t)$ is everywhere zero because $u^*$ is the unique optimizer of $\mathbb{E}[\mathcal{L}_{\mathrm{SOCM}}(\cdot,M)]$ for any $M$.
    Since the factor $\mathbb{E}[ \exp \big( - 
    % \lambda^{-1}
    V(X^{u^*}_0,0) \big) | X_t^{u^*} = x]$ is non-negative, the fact that the first variation $\frac{\delta}{\delta u}\mathbb{E}[\mathcal{L}_{\mathrm{SOCM}}(u^*,M)](x,t)$
    is zero everywhere implies that $\frac{\delta}{\delta u}\mathbb{E}[\mathcal{L}_{\mathrm{UW-SOCM}}(u^*,M)](x,t)$ is zero everywhere as well. Hence, $u^*$ is a critical point of $\mathbb{E}[\mathcal{L}_{\mathrm{UW-SOCM}}(\cdot,M)]$.

% (which are losses based on the path-wise reparameterization trick, where $M_t(T)$ can be set to zero to avoid computing $\nabla g$).  \brian{Is the path-wise reparameterization trick introduced somewhere?}

\section{Proofs of the taxonomy of SOC loss functions}
% \autoref{sec:taxonomy}} 
\label{sec:proofs_taxonomy}
We subdivide the proof of \autoref{thm:main} into showing that pairs of losses share the same gradient in expectation.

\subsection{REINFORCE $\iff$ Discrete Adjoint}

As shown in the proof of \autoref{prop:derivation_REINFORCE}, we have that
\begin{talign}
\begin{split} \label{eq:rf_adj}
    \nabla_{\theta} \mathbb{E}[\mathcal{L}_{\mathrm{RF}}(u_{\theta})] &= 
    % \nabla_{\theta} \mathbb{E}\big[ \frac{1}{2} \int_0^T \|u_{\theta}(X^{v}_t,t)\|^2 \, \mathrm{d}t \\ &\qquad + \lambda^{-1/2} \big( \int_0^T 
    % \big(\frac{1}{2} \|v(X^{v}_t,t)\|^2 \! + \! f(X^{v}_t,t) \big) \, \mathrm{d}t \! + \! 
    % g(X^{v}_T) \big) \times 
    % \int_0^T u_{\theta}(X^{v}_t,t) \, \mathrm{d}B_t \big] \rvert_{v = \mathrm{stopgrad}(u_{\theta})} 
    \mathbb{E} \big[ \int_0^T \nabla_{\theta} u_{\theta}(X^{u_{\theta}}_t,t) u_{\theta}(X^{u_{\theta}}_t,t) \, \mathrm{d}t \\ &\qquad + 
    % \lambda^{-1/2} 
    \big( \int_0^T 
    \big(\frac{1}{2} \|u_{\theta}(X^{u_{\theta}}_t,t)\|^2 \! + \! f(X^{u_{\theta}}_t,t) \big) \, \mathrm{d}t \! + \! 
    g(X^{u_{\theta}}_T) \big) 
    \times 
    \int_0^T \nabla_{\theta} u_{\theta}(X^{u_{\theta}}_t,t) \, \mathrm{d}B_t 
    \big]
    \\ &= \nabla_{\theta} \mathbb{E} \big[ \int_0^T 
    \big(\frac{1}{2} \|u_{\theta}(X^{u_{\theta}}_t,t)\|^2 \! + \! f(X^{u_{\theta}}_t,t) \big) \, \mathrm{d}t \! + \! 
    g(X^{u_{\theta}}_T) \big]
\end{split}
\end{talign}
and the right-hand side is by definition equal to $\nabla_{\theta} \mathbb{E}[\mathcal{L}_{\mathrm{Disc-Adj}}(u_{\theta})]$. 

\subsection{REINFORCE $\iff$ REINFORCE (future rewards)}
This statement is the analog of a well-known fact in classical RL.
By the definition of $\mathcal{L}_{\mathrm{RFFR}}$ in \eqref{eq:REINFORCE_future_rewards}, we have that 
\begin{talign}
\begin{split} \label{eq:rf_rffr}
    \nabla_{\theta} \mathbb{E}[\mathcal{L}_{\mathrm{RFFR}}(u_{\theta})] &=
    \mathbb{E} \big[ \int_0^T \nabla_{\theta} u_{\theta}(X^{u_{\theta}}_t,t) u_{\theta}(X^{u_{\theta}}_t,t) \, \mathrm{d}t \\ &\qquad
    + 
    % \lambda^{-1/2}
    \int_0^T \nabla_{\theta} u_{\theta}(X^{u_{\theta}}_t,t) \big( \int_t^T 
    \big(\frac{1}{2} \|u_{\theta}(X^{u_{\theta}}_s,s)\|^2 \! + \! f(X^{u_{\theta}}_s,s) \big) \, \mathrm{d}s \! + \! 
    g(X^{u_{\theta}}_T) \big) \, \mathrm{d}B_t \big].
\end{split}
\end{talign}
Comparing \eqref{eq:rf_adj} with \eqref{eq:rf_rffr}, it suffices to check that 
\begin{talign}
\begin{split}
    &\mathbb{E} \big[
    \int_0^T \nabla_{\theta} u_{\theta}(X^{u_{\theta}}_t,t) \big( \int_0^T 
    \big(\frac{1}{2} \|u_{\theta}(X^{u_{\theta}}_s,s)\|^2 \! + \! f(X^{u_{\theta}}_s,s) \big) \, \mathrm{d}s \! + \! 
    g(X^{u_{\theta}}_T) \big)  \, \mathrm{d}B_t \big] \\ &= \mathbb{E} \big[
    \int_0^T \nabla_{\theta} u_{\theta}(X^{u_{\theta}}_t,t) \big( \int_t^T 
    \big(\frac{1}{2} \|u_{\theta}(X^{u_{\theta}}_s,s)\|^2 \! + \! f(X^{u_{\theta}}_s,s) \big) \, \mathrm{d}s \! + \! 
    g(X^{u_{\theta}}_T) \big)  \, \mathrm{d}B_t \big]
\end{split}
\end{talign}
Or equivalently, it is enough to check that
\begin{talign}
\begin{split} \label{eq:fubini}
    0 &= \mathbb{E} \big[
    \int_0^T \nabla_{\theta} u_{\theta}(X^{u_{\theta}}_t,t) \int_0^t 
    \big(\frac{1}{2} \|u_{\theta}(X^{u_{\theta}}_s,s)\|^2 \! + \! f(X^{u_{\theta}}_s,s) \big) \, \mathrm{d}s  \, \mathrm{d}B_t \big] \\ &= \mathbb{E} \big[ \int_0^T \int_s^T \nabla_{\theta} u_{\theta}(X^{u_{\theta}}_t,t) \, \mathrm{d}B_t \big(\frac{1}{2} \|u_{\theta}(X^{u_{\theta}}_s,s)\|^2 \! + \! f(X^{u_{\theta}}_s,s) \big) \, \mathrm{d}s \big],
\end{split}
\end{talign}
where in the second equality we flipped the order of the deterministic and the stochastic integral using Fubini's theorem. Now, using the tower of expectations property, we can reexpress the right-hand side of \eqref{eq:fubini} as
\begin{talign}
    \mathbb{E} \big[ \int_0^T \mathbb{E}[\int_s^T \nabla_{\theta} u_{\theta}(X^{u_{\theta}}_t,t) \, \mathrm{d}B_t | X^{u_{\theta}}_s] \big(\frac{1}{2} \|u_{\theta}(X^{u_{\theta}}_s,s)\|^2 \! + \! f(X^{u_{\theta}}_s,s) \big) \, \mathrm{d}s \big] = 0.
\end{talign}
This is zero because Ito stochastic integrals are martingales.

\subsection{Discrete Adjoint $\iff$ Continuous Adjoint}
This is shown in \citealt[Sec.~E.1]{domingoenrich2024adjoint}.
% \autoref{prop:derivation_cont_adjoint}.

\subsection{Continuous Adjoint $\iff$ Cost-SOCM}
This is shown in \autoref{prop:SOCM_cost}.

\subsection{Bonus: REINFORCE (future rewards) $\iff$ Cost-SOCM}
By transitivity, we have already shown that REINFORCE (future rewards) and Cost-SOCM have the same gradients in expectation, because we have shown equality between \textit{(i)} REINFORCE (future rewards) and REINFORCE, \textit{(ii)} REINFORCE and Discrete Adjoint, \textit{(iii)} Discrete Adjoint and Continuous Adjoint, \textit{(iv)} Continuous Adjoint and Cost-SOCM. Still, there is a direct way to show the equivalence between these two losses which helps flesh out their relationship in a clearer way: REINFORCE (future rewards) is simply Cost-SOCM with a particular choice of matrix $M$. The proof follows.

We can write
\begin{talign}
\begin{split} \label{eq:rf_rffr_1}
    &\nabla_{\theta} \mathbb{E}[\mathcal{L}_{\mathrm{Cost-SOCM}}(u_{\theta})] \\ &=
    2 \mathbb{E} \big[ \int_0^T \nabla_{\theta} u_{\theta}(X^{u_{\theta}}_t,t) u_{\theta}(X^{u_{\theta}}_t,t) \, \mathrm{d}t \\ &\quad
    + \int_0^T \nabla_{\theta} u_{\theta}(X^{u_{\theta}}_t,t) \sigma(t)^{\top} \big( \int_t^T M_t(s) \nabla_x (f(X^{u_{\theta}}_s,s) + \frac{1}{2}\|u_{\theta}(X^{u_{\theta}}_s,s)\|^2) \, \mathrm{d}s + M_t(T) \nabla g(X^{u_{\theta}}_T) 
    \\ &\qquad\quad + \! 
    % \lambda^{1/2} 
    \big( \int_t^T (f(X^{u_{\theta}}_s,s) \! + \! \frac{1}{2}\|u_{\theta}(X^{u_{\theta}}_s,s)\|^2) \, \mathrm{d}s \! + \! g(X^{u_{\theta}}_T) \big) \\ &\qquad\qquad \times \! \big( \int_t^T (M_t(s) \nabla_x (b(X^{u_{\theta}}_s,s) + \sigma(s) u_{\theta}(X^{u_{\theta}}_s,s)) - \partial_s M_t(s)) (\sigma^{-1})^{\top}(s) \mathrm{d}B_s \big) \big) \, dt \big].
\end{split}
\end{talign}
Since $M$ is arbitrary under the conditions that it is continuous and differentiable almost everywhere, and $M_t(t) = \mathrm{Id}$, we are free to set $M$ of the form
\begin{talign}
M^{(h)}_t(s) = 
\begin{cases}
    \big( 1 - \frac{s-t}{h(T-t)} \big) \mathrm{Id} \qquad &\text{if } s \in [t,t+h(T-t)],\\
    0 \qquad &\text{otherwise},
\end{cases}
\end{talign}
we have that
\begin{talign}
\begin{split} \label{eq:rf_rffr_2}
    &\lim_{h \to 0} \int_t^T M^{(h)}_t(s) \nabla_x (f(X^{u_{\theta}}_s,s) + \frac{1}{2}\|u_{\theta}(X^{u_{\theta}}_s,s)\|^2) \, \mathrm{d}s + M^{(h)}_t(T) \nabla g(X^{u_{\theta}}_T) \\ &= \lim_{h \to 0} \int_t^{t+h(T-t)} M^{(h)}_t(s) \nabla_x (f(X^{u_{\theta}}_s,s) + \frac{1}{2}\|u_{\theta}(X^{u_{\theta}}_s,s)\|^2) \, \mathrm{d}s = 0,
\end{split}
\end{talign}
and similarly,
\begin{talign}
\begin{split} \label{eq:rf_rffr_3}
    &\lim_{h \to 0} \int_t^T M_t(s) \nabla_x (b(X^{u_{\theta}}_s,s) + \sigma(s) u_{\theta}(X^{u_{\theta}}_s,s)) (\sigma^{-1})^{\top}(s) \mathrm{d}B_s \\ &= \lim_{h \to 0} \int_t^{t+h(T-t)} M_t(s) \nabla_x (b(X^{u_{\theta}}_s,s) + \sigma(s) u_{\theta}(X^{u_{\theta}}_s,s)) (\sigma^{-1})^{\top}(s) \mathrm{d}B_s = 0.
\end{split}
\end{talign}
Finally,
\begin{talign}
\begin{split} \label{eq:rf_rffr_4}
    &\int_0^T \nabla_{\theta} u_{\theta}(X^{u_{\theta}}_t,t) \sigma(t)^{\top} \big( \int_t^T (f(X^{u_{\theta}}_s,s) \! + \! \frac{1}{2}\|u_{\theta}(X^{u_{\theta}}_s,s)\|^2) \, \mathrm{d}s \! + \! g(X^{u_{\theta}}_T) \big) \\ &\qquad\qquad \times \! \big( \int_t^T  \partial_s M^{(h)}_t(s) (\sigma^{-1})^{\top}(s) \mathrm{d}B_s \big) \, dt
    \\ &= - \int_0^T \nabla_{\theta} u_{\theta}(X^{u_{\theta}}_t,t) \sigma(t)^{\top} \big( \int_t^T (f(X^{u_{\theta}}_{s'},s') \! + \! \frac{1}{2}\|u_{\theta}(X^{u_{\theta}}_{s'},s')\|^2) \, \mathrm{d}s' \! + \! g(X^{u_{\theta}}_T) \big) \\ &\qquad\qquad \times \! \frac{1}{h(T-t)} \big( \int_t^{t+h(T-t)} (\sigma^{-1})^{\top}(s) \mathrm{d}B_s \big) \, dt
    \\ &= - \int_0^T \int_t^{t+h(T-t)} \phi(t) \frac{1}{h(T-t)}  (\sigma^{-1})^{\top}(s) \mathrm{d}B_s \, dt \\ &= - \int_0^{h T} \int_{0}^s \phi(t) \frac{1}{h(T-t)}  (\sigma^{-1})^{\top}(s) \, dB_s - \int_{h T}^{T} \int_{\frac{s-h T}{1-h}}^s \phi(t) \frac{1}{h(T-t)}  (\sigma^{-1})^{\top}(s) \, dB_s
\end{split}
\end{talign}
where we defined $\phi(t) = \nabla_{\theta} u_{\theta}(X^{u_{\theta}}_t,t) \sigma(t)^{\top} \big( \int_t^T (f(X^{u_{\theta}}_{s'},s') \! + \! \frac{1}{2}\|u_{\theta}(X^{u_{\theta}}_{s'},s')\|^2) \, \mathrm{d}s' \! + \! g(X^{u_{\theta}}_T) \big)$ in the second equality, and we used Fubini's theorem in the third equality. Note that
\begin{talign} \label{eq:rf_rffr_5}
    \lim_{h \to 0} \int_0^{h T} \int_{0}^s \phi(t) \frac{1}{h(T-t)}  (\sigma^{-1})^{\top}(s) \, dB_s = 0,
\end{talign}
and
\begin{talign}
\begin{split} \label{eq:rf_rffr_6}
    &\lim_{h \to 0} \int_{h T}^{T} \int_{\frac{s-h T}{1-h}}^s \phi(t) \frac{1}{h(T-t)}  (\sigma^{-1})^{\top}(s) \, dB_s \\ &= \lim_{h \to 0} \int_{0}^{T} %\int_{\frac{s-h T}{1-h}}^s 
    \big(s - \frac{s-h T}{1-h} \big) \phi(s) \frac{1}{h(T-s)}  (\sigma^{-1})^{\top}(s) \, dB_s
    \\ &= \lim_{h \to 0} \int_{0}^{T} %\int_{\frac{s-h T}{1-h}}^s 
    \big(s - \frac{s-h T}{1-h} \big) \phi(s) \frac{1}{h(T-s)}  (\sigma^{-1})^{\top}(s) \, dB_s
    \\ &= %\lim_{h \to 0} 
    \int_{0}^{T} %\int_{\frac{s-h T}{1-h}}^s 
    \phi(s) 
    %\frac{1}{h(T-s)}  
    (\sigma^{-1})^{\top}(s) \, dB_s
    \\ &= \int_{0}^{T} \nabla_{\theta} u_{\theta}(X^{u_{\theta}}_s,s) \sigma(s)^{\top} \big( \int_s^T (f(X^{u_{\theta}}_{s'},s') \! + \! \frac{1}{2}\|u_{\theta}(X^{u_{\theta}}_{s'},s')\|^2) \, \mathrm{d}s' \! + \! g(X^{u_{\theta}}_T) \big) (\sigma^{-1})^{\top}(s) \, dB_s
    \\ &= \int_{0}^{T} \nabla_{\theta} u_{\theta}(X^{u_{\theta}}_s,s) \big( \int_s^T (f(X^{u_{\theta}}_{s'},s') \! + \! \frac{1}{2}\|u_{\theta}(X^{u_{\theta}}_{s'},s')\|^2) \, \mathrm{d}s' \! + \! g(X^{u_{\theta}}_T) \big) \, dB_s
\end{split}
\end{talign}
If we plug \eqref{eq:rf_rffr_5} and \eqref{eq:rf_rffr_6} into \eqref{eq:rf_rffr_4}, and then \eqref{eq:rf_rffr_2}, \eqref{eq:rf_rffr_3} and \eqref{eq:rf_rffr_4} into \eqref{eq:rf_rffr}, we obtain that
\begin{talign}
\begin{split} \label{eq:rf_rffr_7}
    &\nabla_{\theta} \mathbb{E}[\mathcal{L}_{\mathrm{Cost-SOCM}}(u_{\theta})] \\ &=
    2 \mathbb{E} \big[ \int_0^T \nabla_{\theta} u_{\theta}(X^{u_{\theta}}_t,t) u_{\theta}(X^{u_{\theta}}_t,t) \, \mathrm{d}t \\ &\quad
    + 
    % \lambda^{1/2} 
    \int_{0}^{T} \nabla_{\theta} u_{\theta}(X^{u_{\theta}}_t,t) \big( \int_t^T (f(X^{u_{\theta}}_{s},s) \! + \! \frac{1}{2}\|u_{\theta}(X^{u_{\theta}}_{s},s)\|^2) \, \mathrm{d}s \! + \! g(X^{u_{\theta}}_T) \big) \, dB_t \big] \\ &= 2 \nabla_{\theta} \mathbb{E}[\mathcal{L}_{\mathrm{RFFR}}(u_{\theta})],
\end{split}
\end{talign}
where the last equality concludes the proof and holds by the definition of $\mathcal{L}_{\mathrm{RFFR}}$ (see \eqref{eq:rf_rffr}).

\subsection{Adjoint Matching $\iff$ Work-SOCM}
This is shown in \autoref{prop:work_adjoint_app}.

\subsection{SOCM $\iff$ Cross Entropy}
This result was proven in \cite{domingoenrich2023stochastic}. Namely, \cite[Prop. B.6(ii)]{domingoenrich2023stochastic} shows that
\begin{talign} 
\begin{split} \label{eq:L_CE_characterization}
    \mathbb{E}[\mathcal{L}_{\mathrm{CE}}(u)] &= \frac{
    % \lambda^{-1}
    1}{2} \mathbb{E} \big[ \int_0^T \|u^*(X^{u^*}_t,t) - u(X^{u^*}_t,t)\|^2 \, \mathrm{d}t %\\ &\qquad\qquad 
    \exp \big( \! - \! 
    % \lambda^{-1} 
    V(X^{u^*}_0, 0) \big) \big]~,
\end{split}
\end{talign}
and \cite[Prop. 3.3]{domingoenrich2023stochastic} shows that
\begin{talign} 
    \begin{split} \label{eq:L_u_decomposition}
        &\mathbb{E}[\mathcal{L}_{\mathrm{SOCM}}(u,M)] =
        \mathbb{E} \big[ \! \int_0^{T} \! \big\| u(X^{u^*}_t,t) \! - \! u^*(X^{u^*}_t,t) \big\|^2 \, \mathrm{d}t 
        % \, e^{-\lambda^{-1} V(X^{u^*}_0,0)} 
        \exp \big( - 
        % \lambda^{-1} 
        V(X^{u^*}_0, 0) \big) \big] \! + \!
        %\underbrace{
        \mathrm{CondVar}(-\sigma(t)^{\top} \omega;M)~,
        %\mathbb{E}[\mathrm{Var}[w|X_t^v, t]]
        % }_{\text{Unnorm. expected conditional variance of $w$}}
        %\\ &
        % \\ &\qquad \scalemath{0.94}{}
    \end{split}
    \end{talign}
    where the second term does not depend on $u$:
    \begin{talign} 
    \begin{split} \label{eq:var_w_M}
        &\mathrm{CondVar}(-\sigma(t)^{\top} \omega;M)
        \! := \! \mathbb{E} \big[ \int_0^{T} \big\| \sigma(t)^{\top} \big( \omega(t,v,X^v,B,M_t) \! - \! 
        % \underbrace{
        \frac{\mathbb{E} [\omega(t,v,X^v,B,M_t) \alpha(v,X^v,B) | X_t^v, t ]}{\mathbb{E} [\alpha(v,X^v,B) | X_t^v, t ]}
        % }_{u^*(X^{v}_t,t)} 
        \big) \big\|^2 \, \mathrm{d}t \, \alpha(v,X^v,B) \big].
    \end{split}
    \end{talign}

\subsection{SOCM $\iff$ SOCM-Adjoint}
This is also shown by \cite{domingoenrich2023stochastic}, putting together different results in their work. We reproduce their argument. The proof sketch of their Theorem 3.1 considers the following loss (up to constant factors and constant terms):
\begin{talign} 
\begin{split} \label{eq:loss_tilde_L_sketch}
    \tilde{\mathcal{L}}(u) \!
    &= \! \mathbb{E} \big[ %\frac{1}{T}
    \! \int_0^{T} \! \big( \big\| u(X_t,t) \big\|^2 %+ 2\langle u(X_t,t), \sigma(t)^{\top} \nabla V(X_t,t) \rangle 
    \! - \! 2\langle u(X_t,t), u^*(X_t,t) \rangle %\\ &\qquad\qquad 
    \big) \, \mathrm{d}t %\\ &\qquad 
    \! \times \! \exp \big( \! - \! 
    % \lambda^{-1} 
    \int_0^T f(X_t,t) \, \mathrm{d}t \! - \! 
    % \lambda^{-1} 
    g(X_T) \big) \big] \\
    &= \! \mathbb{E} \big[ %\frac{1}{T}
    \! \int_0^{T} \! \big( \big\| u(X_t,t) \big\|^2 %+ 2\langle u(X_t,t), \sigma(t)^{\top} \nabla V(X_t,t) \rangle 
    \! - \! 2\langle u(X_t,t), u^*(X_t,t) \rangle %\\ &\qquad\qquad 
    \big) \, \mathrm{d}t \\ &\qquad 
    \! \times \! \exp \big( \! - \! 
    % \lambda^{-1} 
    \int_0^T f(X_t,t) \, \mathrm{d}t \! - \! 
    % \lambda^{-1} 
    g(X_T) \big) \big].
    \\ &= \mathbb{E} \big[ \! \int_0^{T} \! \big\| u(X_t,t) \big\|^2  \! \times \! \exp \big( \! - \! 
    % \lambda^{-1} 
    \int_0^T f(X_t,t) \, \mathrm{d}t \! - \! 
    % \lambda^{-1} 
    g(X_T) \big) \big] \\ &\qquad - 2 
    % \lambda 
    \mathbb{E} \big[%\frac{1}{T} 
    \int_0^{T} \big\langle u(X_t,t), \sigma(t)^{\top} %\\ &
    \nabla_{x} \mathbb{E}\big[ \exp \big( \! - 
    % \lambda^{-1} 
    \! \int_t^T \! f(X_s,s) \, \mathrm{d}s \! - \! 
    % \lambda^{-1} 
    g(X_T) \big) \big| X_t = x \big] \big\rangle \, \mathrm{d}t \\ &\qquad\qquad\qquad \times \exp \big( \! - \! 
    % \lambda^{-1} 
    \int_0^t f(X_s,s) \, \mathrm{d}s \big) \big].
\end{split}
\end{talign}
On the one-hand, the path-wise reparameterization (\autoref{lem:cond_exp_rewritten}, \cite[Prop.~C.3]{domingoenrich2023stochastic}) trick yields
\begin{talign} 
\begin{split} \label{eq:cond_exp_rewritten_applied}
    &\nabla_{x} \mathbb{E}\big[ \exp \big( \! - \! 
    % \lambda^{-1} 
    \int_t^T f(X_s,s) \, \mathrm{d}s - 
    % \lambda^{-1} 
    g(X_T) \big) \big| X_t = x \big] \\ &= \! \mathbb{E}\big[ \big( \! - \! 
    % \lambda^{-1} 
    \int_t^T M_t(s) \nabla_x f(X_s,s) \, \mathrm{d}s \! - \! 
    % \lambda^{-1} 
    M_t(T) \nabla g(X_T) \\ &\qquad  
    + \! 
    % \lambda^{-\frac{1}{2}} 
    \int_t^T (M_t(s) \nabla_x b(X_s,s) \! - \! \partial_s M_t(s)) (\sigma^{-1})^{\top} (s) \mathrm{d}B_s \big) \\ &\qquad \times \exp \big( \! - \! 
    % \lambda^{-1} 
    \int_t^T f(X_s,s) \, \mathrm{d}s \! - \! 
    % \lambda^{-1} 
    g(X_T) \big) 
    \big| X_t = x \big],
\end{split}
\end{talign}
and plugging this into the right-hand side of \eqref{eq:loss_tilde_L_sketch} yields the SOCM loss $\mathcal{L}_{\mathrm{SOCM}}$ after a change of process from $X$ to $X^v$.
On the other-hand, the same gradient can be estimated using the adjoint method (\autoref{lem:adjoint_method_sdes}, \cite[Lemma~C.6]{domingoenrich2023stochastic}):
\begin{talign}
\begin{split}
    % &\nabla_x \mathbb{E}[\int_0^T f(X_t,t) \, dt + g(X_T) | X_0 = x] = \mathbb{E}[a(0) | X_0 = x], \\
    &\nabla_{x} \mathbb{E} \big[ \exp \big( - 
    % \lambda^{-1} 
    \int_t^T f(X_t,t) \, dt - 
    % \lambda^{-1} 
    g(X_T) \big) | X_t = x \big] \\ &= - 
    % \lambda^{-1} 
    \mathbb{E} \big[ a(t,X) \exp \big( - 
    % \lambda^{-1} 
    \int_0^T f(X_t,t) \, dt - 
    % \lambda^{-1} 
    g(X_T) \big) | X_t = x\big].
\end{split}
\end{talign}
where $a(t,X)$ is the solution of the lean adjoint ODE \eqref{eq:lean_adjoint_1}-\eqref{eq:lean_adjoint_2} (without the SLT term). Plugging this into the right-hand side of \eqref{eq:loss_tilde_L_sketch} yields the SOCM-Adjoint loss $\mathcal{L}_{\mathrm{SOCM-Adj}}$ after a change of process from $X$ to $X^v$.

\subsection{Log-variance $\iff$ Moment}
This is a simple observation that was first made by \cite{nüsken2023solving}. When each batch contains $m$ trajectories, the empirical log-variance loss reads
\begin{talign}
    \mathcal{L}^{\mathrm{log},m}_{\mathrm{Var}_v}(u) &:= \frac{1}{m} \sum_{i=1}^{m} 
    % \mathbb{E}\big[
    \big( \tilde{Y}^{u,v,(i)}_T - 
    % \lambda^{-1} 
    g(X^{v,(i)}_T) \big)^2 
    % \big] 
    - \big( \frac{1}{m} \sum_{i=1}^{m} 
    % \mathbb{E}\big[ 
    \big( \tilde{Y}^{u,v,(i)}_T - 
    % \lambda^{-1} 
    g(X^{v,(i)}_T) \big) \big)^2.
\end{talign}
Thus,
\begin{talign}
\begin{split}
    \mathbb{E}\big[ \mathcal{L}^{\mathrm{log},m}_{\mathrm{Var}_v}(u) \big] &= 
    \mathbb{E}\big[
    \frac{1}{m} \sum_{i=1}^{m} 
    \big( \tilde{Y}^{u,v,(i)}_T - 
    % \lambda^{-1} 
    g(X^{v,(i)}_T) \big)^2 
    \big] 
    \\ &\qquad 
    - \mathbb{E}\big[ 
    \big( \frac{1}{m} \sum_{i=1}^{m} 
    \big( \tilde{Y}^{u,v,(i)}_T -
    % \lambda^{-1} 
    g(X^{v,(i)}_T) \big) \big)^2
    \big] \\
     &= 
    \mathbb{E}\big[
    \big( \tilde{Y}^{u,v}_T - 
    % \lambda^{-1} 
    g(X^{v}_T) \big)^2 
    \big] 
    % \\ &\qquad 
    - \mathbb{E}\big[ 
    \big( \frac{1}{m} \sum_{i=1}^{m} 
    \big( \tilde{Y}^{u,v,(i)}_T - 
    % \lambda^{-1} 
    g(X^{v,(i)}_T) \big) \big)^2
    \big].
\end{split}
\end{talign}
Note that as $m \to +\infty$, $\mathbb{E}\big[ \big( \frac{1}{m} \sum_{i=1}^{m} \big( \tilde{Y}^{u,v,(i)}_T - 
% \lambda^{-1} 
g(X^{v,(i)}_T) \big) \big)^2 \big] \to \mathbb{E}\big[ \tilde{Y}^{u,v}_T - 
% \lambda^{-1} 
g(X^{v}_T) \big]^2$, which means that 
\begin{talign}
    \lim_{m \to \infty} \mathcal{L}^{\mathrm{log},m}_{\mathrm{Var}_v}(u) = \mathbb{E}\big[
    \big( \tilde{Y}^{u,v}_T - 
    % \lambda^{-1} 
    g(X^{v}_T) \big)^2 
    \big] - \mathbb{E}\big[ \tilde{Y}^{u,v}_T - 
    % \lambda^{-1} 
    g(X^{v}_T) \big]^2.
\end{talign}
And for any $u,v$, we have that 
\begin{talign}
\begin{split}
    &\min_{y_0 \in \R} \mathcal{L}_{\mathrm{Mom}_v}(u,y_0) = \min_{y_0 \in \R} \mathbb{E}[(\tilde{Y}^{u,v}_T + y_0 - 
    % \lambda^{-1} 
    g(X^v_T))^2] \\ &= \mathbb{E}[(\tilde{Y}^{u,v}_T - 
    % \lambda^{-1} 
    g(X^v_T) - \mathbb{E}[\tilde{Y}^{u,v}_T - 
    % \lambda^{-1} 
    g(X^v_T)])^2] \\ &= \mathbb{E}\big[
    \big( \tilde{Y}^{u,v}_T - 
    % \lambda^{-1} 
    g(X^{v}_T) \big)^2 
    \big] - \mathbb{E}\big[ \tilde{Y}^{u,v}_T - 
    % \lambda^{-1} 
    g(X^{v}_T) \big]^2,
\end{split}
\end{talign}
which shows that when $y_0$ is optimized instantaneously, the two losses coincide.

\section{Experimental details}
\label{subsec:detail_exp}
We use the experimental setup of \cite{domingoenrich2023stochastic}, which was partially based on that of \cite{nüsken2023solving}. We use the same hyperparameters and same architectures as \cite{domingoenrich2023stochastic}. We include an additional setting: \textsc{Double well, hard}, which is given by
\begin{align}
    b(x,t) = - \nabla \Psi(x), \quad
    \Psi(x) = \sum_{i=1}^d \kappa_i (x_i^2 - 1)^2, \quad g(x) = \sum_{i=1}^{d} \nu_i (x_i^2 - 1)^2, \quad f(x) = 0, \quad \sigma_0 = \mathrm{I}, 
\end{align}
where $d=10$, and $\kappa_i = 5$, $\nu_i = 6$ for $i \in \{1, 2, 3\}$ and $\kappa_i = 1$, $\nu_i = 2$ for $i \in \{4, \dots, 10\}$. We set $T=1$, $\lambda=1$ and $x_{\mathrm{init}} = 0$. Our \textsc{Double well, easy} setting corresponds to the \textsc{Double well} setting from \cite{domingoenrich2023stochastic}, and it corresponds to setting $\kappa_i = 5$, $\nu_i = 3$ for $i \in \{1, 2, 3\}$ and $\kappa_i = 1$, $\nu_i = 1$ for $i \in \{4, \dots, 10\}$.

\section{Additional experiments on simple SOC settings}
\label{subsec:additional_exp}

\autoref{fig:control_l2_errors_adjoint_methods} shows the control $L^2$ error curves for Adjoint Matching, Continuous Adjoint and Discrete Adjoint, each with and without the Sticking the Landing (STL) trick. We make the following observations:
\begin{itemize}
    \item The Discrete Adjoint loss with STL performs substantially worse than the the Discrete Adjoint loss without STL, while the Continuous Adjoint and Adjoint Matching losses with STL do better. The reason for this is that the Discrete Adjoint loss with STL directly optimizes the loss
    \begin{talign}
        \mathcal{L}(u ; X^u) := \int_0^T \big(\frac{1}{2} |u(X^u_t,t)|^2 \! + \! f(X^u_t,t) \big) \, \mathrm{d}t \! + \! g(X^u_T) + \int_0^T \langle u(X^u_t,t), \mathrm{d}B_t \rangle,
    \end{talign}
    while the gradients of the Continuous Adjoint loss with STL can be regarded as the gradient:
    \begin{talign}
        \mathcal{L}(u ; X^u) := \int_0^T \big(\frac{1}{2} |u(X^u_t,t)|^2 \! + \! f(X^u_t,t) \big) \, \mathrm{d}t \! + \! g(X^u_T) + \int_0^T \langle \text{stopgrad}(u)(X^u_t,t), \mathrm{d}B_t \rangle,
    \end{talign}
    That is, while we backpropagate through the stochastic integral, we do not take gradients with respect to the explicit evaluations of $u$.
    \item The Continuous Adjoint and Adjoint Matching losses without STL perform similarly, and they also perform similarly with STL (and slightly better than without STL). This contrasts with the behavior observed by \citealt[Tab.~2]{domingoenrich2024adjoint}, where Adjoint Matching clearly outperforms the Continuous Adjoint method, arguably thanks to a lower gradient variance.
\end{itemize}

\autoref{fig:control_l2_errors_SOCM_methods} and \autoref{fig:control_l2_errors_SOCM_cost_work} show control $L^2$ error curves for several kinds of SOCM-based algorithms. We use different terms when labeling the algorithms:
\begin{itemize}
    \item The term $M_t = \mathrm{I}$ means that the reparameterization matrices $M$ have not been trained and simply been set to the identity. 
    \item The term \textit{Diag.} means that the reparameterization matrices $M$ have been parameterized as diagonal matrices. This allows to save memory and time at the expense of a less expressive model.
    \item The term \textit{Scalar} means that the reparameterization matrices $M$ have been parameterized as scalar multiples of the identity matrix. This allows to save memory and time at the expense of a less expressive model.
    \item The term $M_t(T) = 0$ means that the architecture of the reparameterization matrices enforces that they are equal to zero at the terminal time $T$. This choice makes it possible to handle non-differentiable terminal costs $g$ (although all the terminal costs in our examples are differentiable). See the code for more details on the architecture.
\end{itemize}
We make the following observations about \autoref{fig:control_l2_errors_SOCM_methods}:
\begin{itemize}
    \item When it converges, Unweighted SOCM converges at the fastest rate in all setting in which it converges, but it fails to converge in the Double Well hard setting. This is consistent with the fact that Unweighted SOCM may not have the optimal control as the unique critical point (\autoref{prop:UW_SOCM_optimality}). 
    \item Unweighted SOCM Diag. $M_t(T) = 0$ performs very poorly, which is expected because it does not use any information about the terminal cost $g$, which means that it cannot converge to the optimal control.
    \item SOCM performs similarly to Unweighted SOCM in the Quadratic OU easy, Double Well easy and Linear settings, but quite differently in the other two. In the Quadratic OU hard setting, SOCM takes significantly longer to converge, because early on the importance weight $\alpha$ has a lot of variance and the gradient is very noisy. In the Double Well hard setting, SOCM is the best method, but Unweighted SOCM fails to converge. Hence, we conclude that SOCM struggles with high cost values, while Unweighted SOCM struggles with multimodality.
    \item The Scalar, Diag. and $M_t=\mathrm{I}$ versions perform worse than the standard one, in general.
    \item SOCM-Adjoint performs worse than SOCM in general.
\end{itemize}
We make the following observations about \autoref{fig:control_l2_errors_SOCM_cost_work}: 
\begin{itemize}
    \item SOCM-Work tends to perform a bit better than SOCM-Cost, except for the Quadratic OU easy setting, where both achieve low error and SOCM-Cost is slightly better. This observation is consistent with Adjoint Matching (which has a the same gradient in expectation as SOCM-Work) performing better than Continuous Adjoint (which has the same gradient in expectation as SOCM-Cost). 
    \item While taking diagonal reparameterization matrices does not cause a significant performance drop in most cases (except for SOCM-Work in the Linear OU setting), enforcing $M_t(T) = 0$ does yield higher control $L^2$ errors.
\end{itemize}
% \begin{figure} %[h]
%     \centering
%     \includegraphics[width=0.495\textwidth]{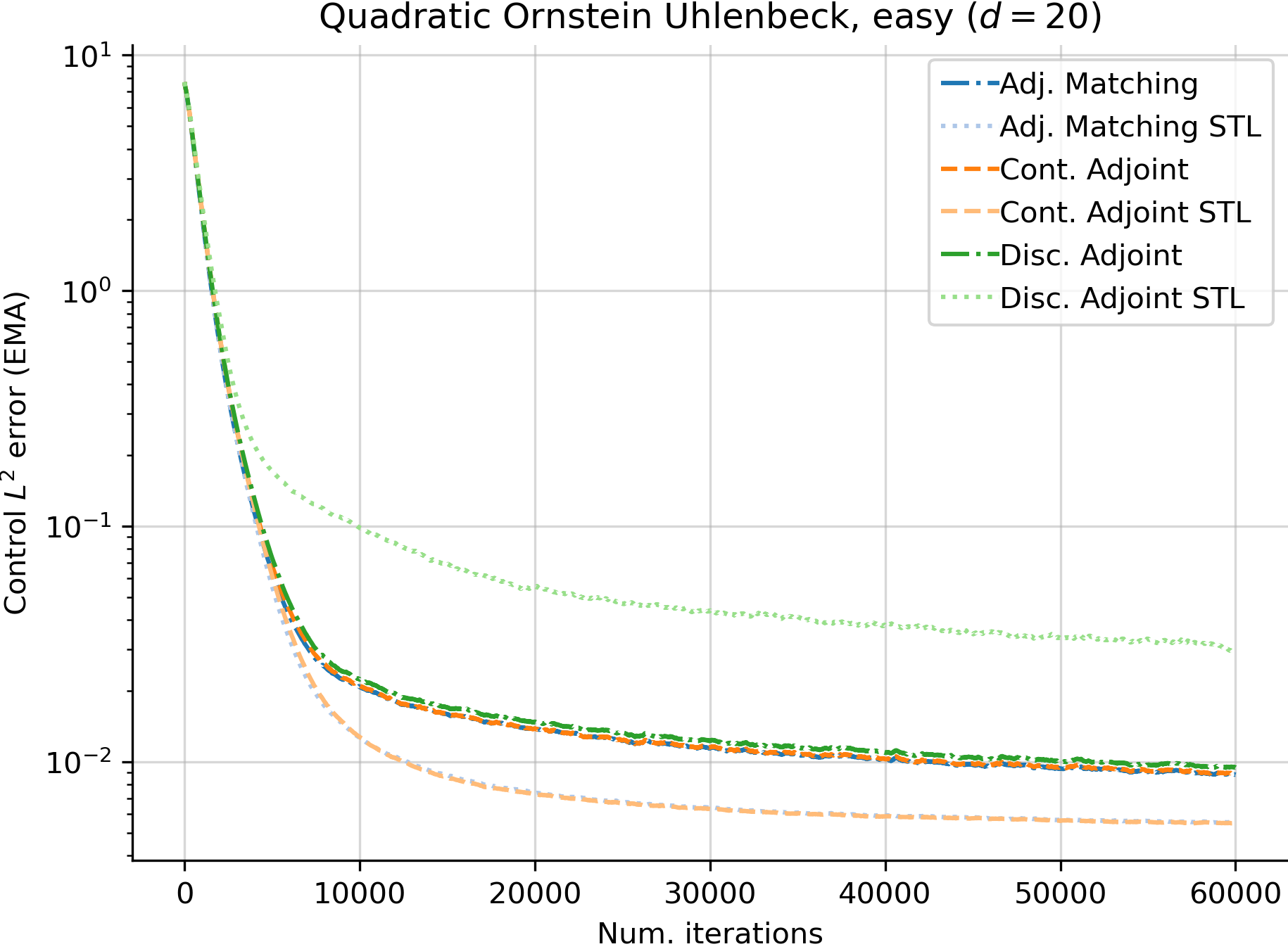}
%     \includegraphics[width=0.495\textwidth]{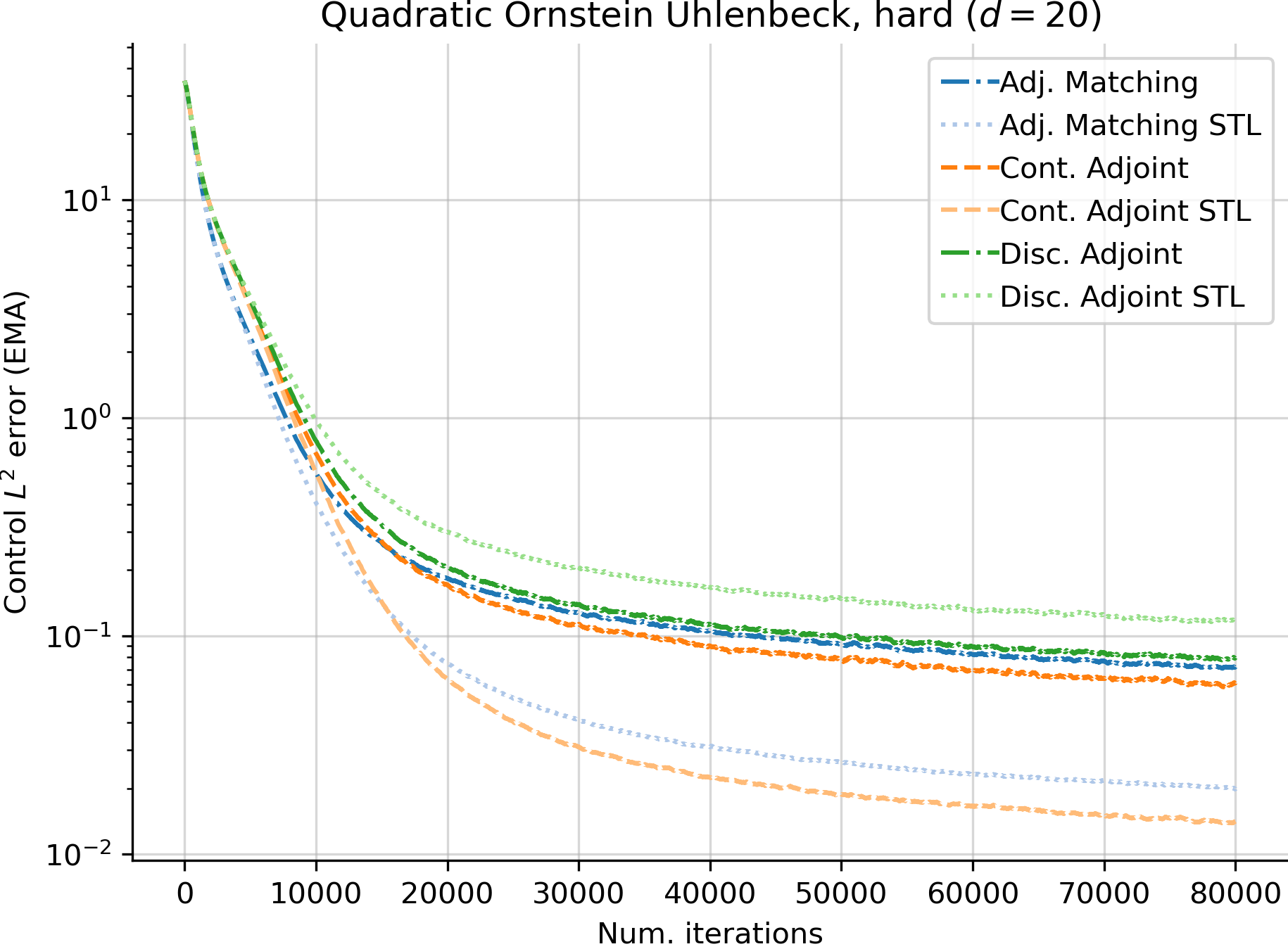}
%     \includegraphics[width=0.495\textwidth]{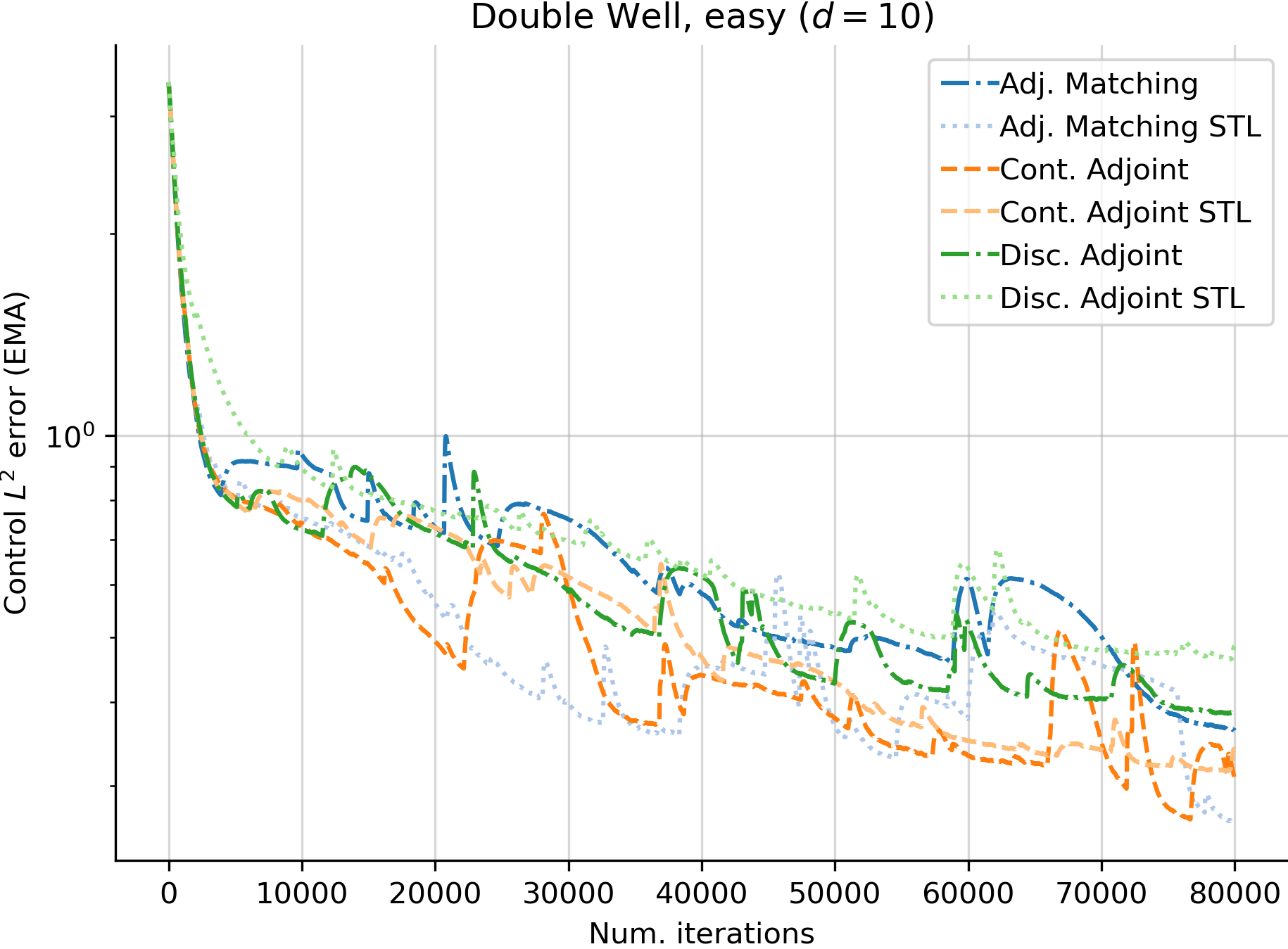}
%     \includegraphics[width=0.495\textwidth]{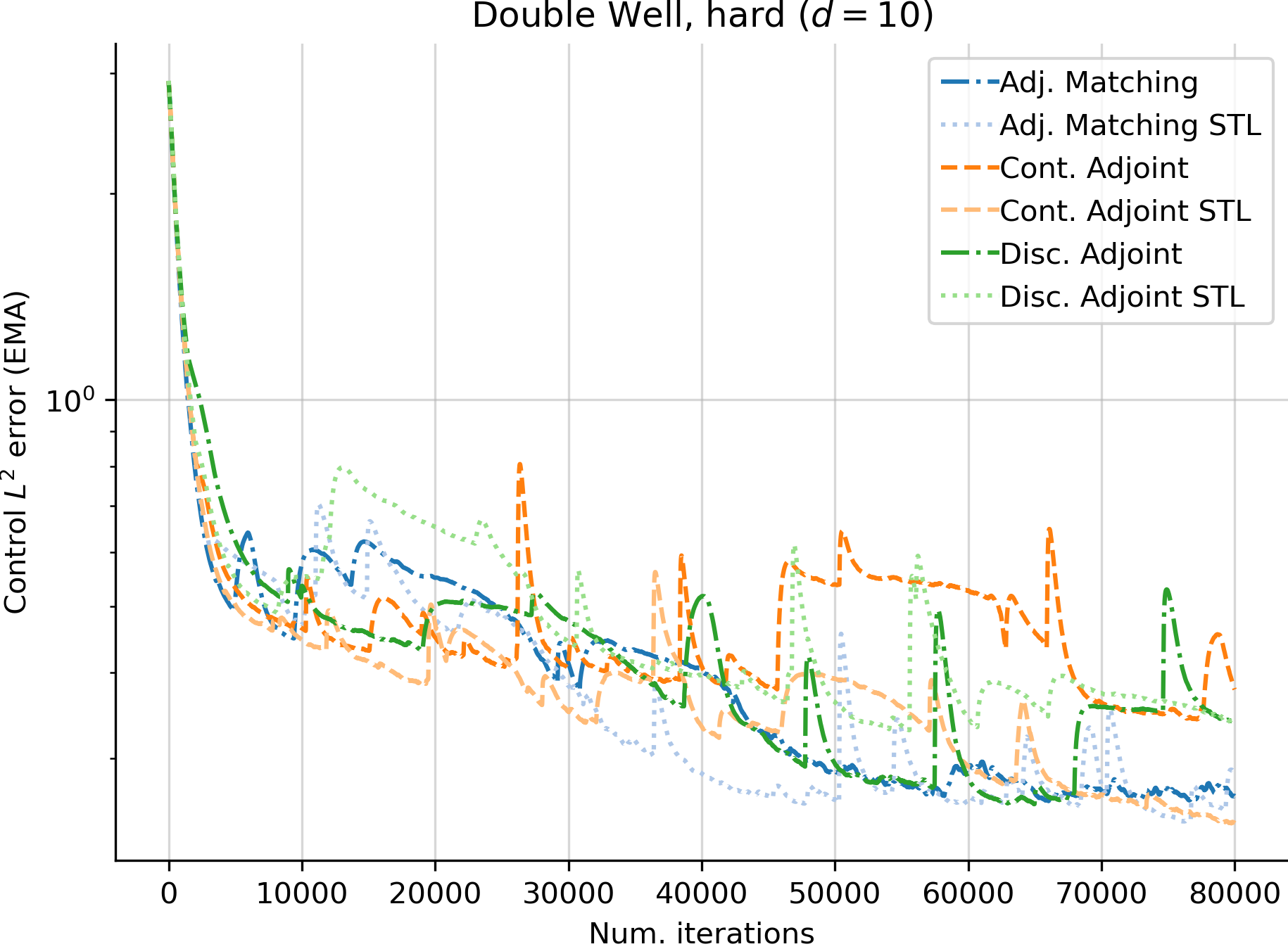}
%     \includegraphics[width=0.495\textwidth]{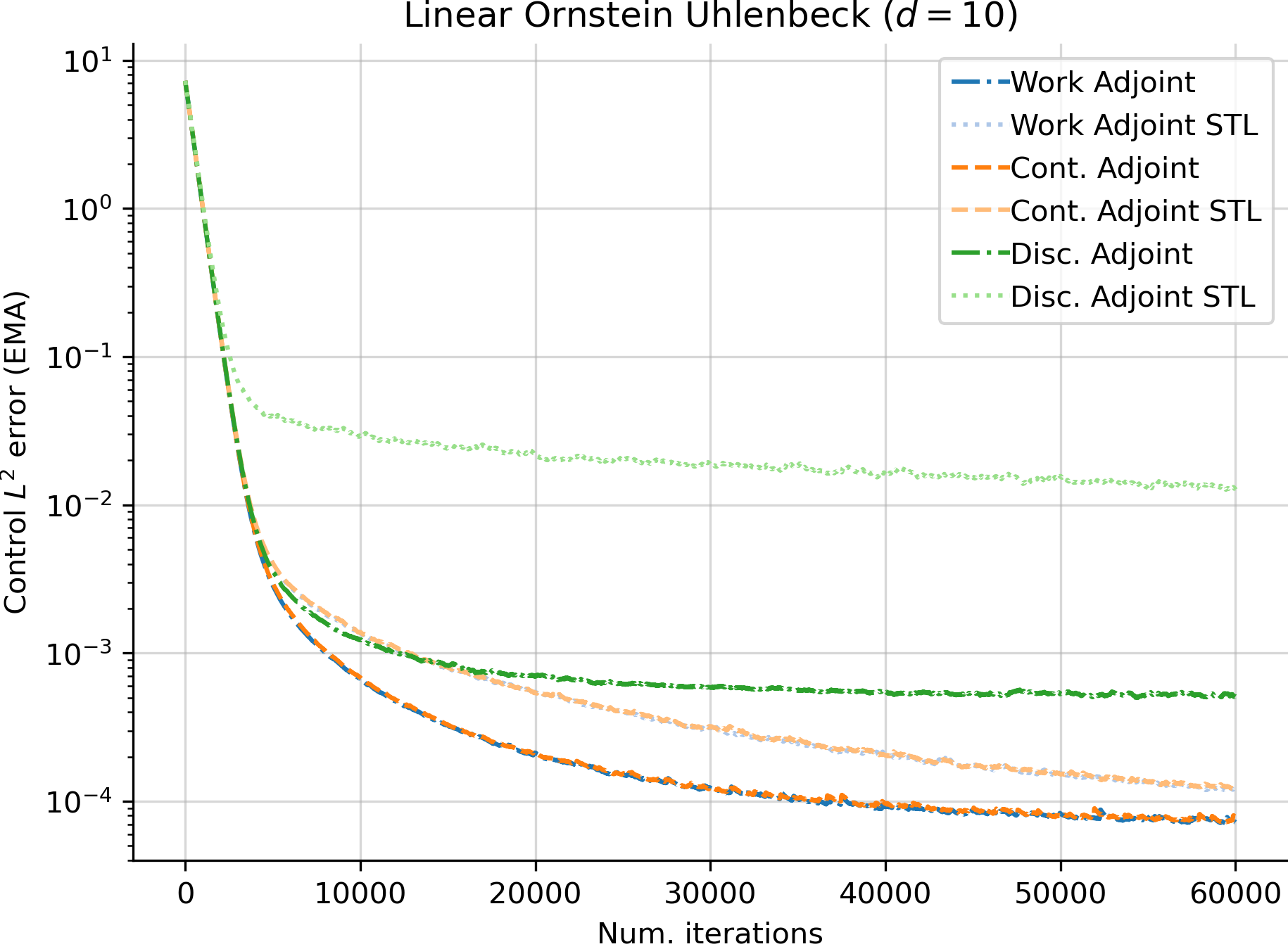}
%     \caption{Control $L^2$ error incurred by the Adjoint Matching, Continuous Adjoint and Discrete Adjoint, on five different settings.}
%     \label{fig:control_l2_errors_adjoint_methods}
% \end{figure}

\begin{figure}

 %[h]
    \centering
    \makebox[\textwidth][c]{%
        \includegraphics[width=0.5\textwidth]{figures/easy_EMA_norm_sqd_diff_optimal_13.png}%
        \includegraphics[width=0.5\textwidth]{figures/hard_EMA_norm_sqd_diff_optimal_13.png}%
    }\\
    \makebox[\textwidth][c]{%
        \includegraphics[width=0.5\textwidth]{figures/double_well_EMA_norm_sqd_diff_optimal_13.png}%
        \includegraphics[width=0.5\textwidth]{figures/double_well_0.5_EMA_norm_sqd_diff_optimal_13.png}%
    }\\
    \makebox[\textwidth][c]{%
        \includegraphics[width=0.5\textwidth]{figures/linear_EMA_norm_sqd_diff_optimal_13.png}%
    }
    \caption{Control $L^2$ error incurred by the Adjoint Matching, Continuous Adjoint and Discrete Adjoint losses (with and without the Sticking The Landing trick), on five different settings.}
    \label{fig:control_l2_errors_adjoint_methods}
\end{figure}

% \begin{figure} %[h]
%     \centering
%     \includegraphics[width=0.495\textwidth]{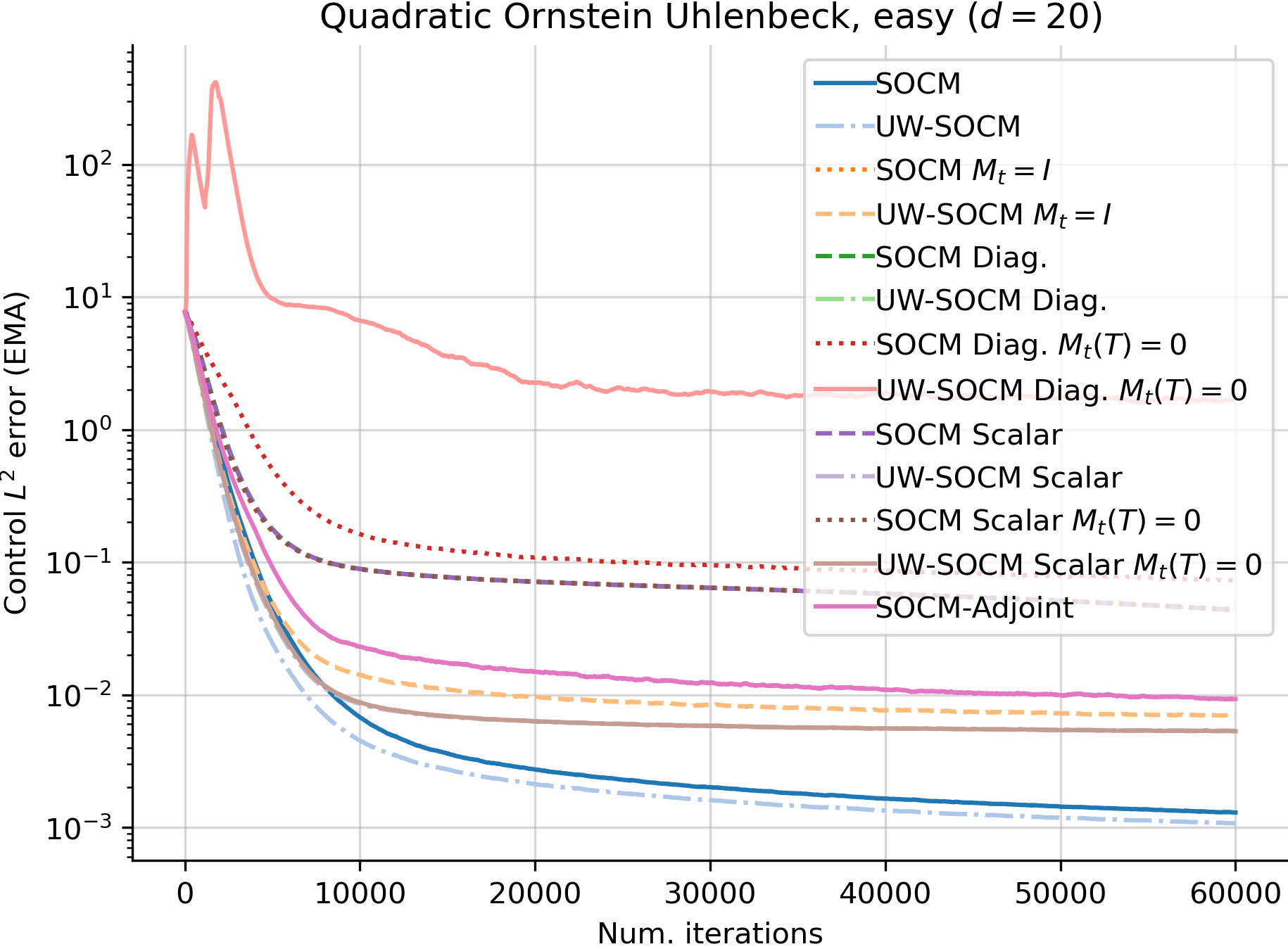}
%     \includegraphics[width=0.495\textwidth]{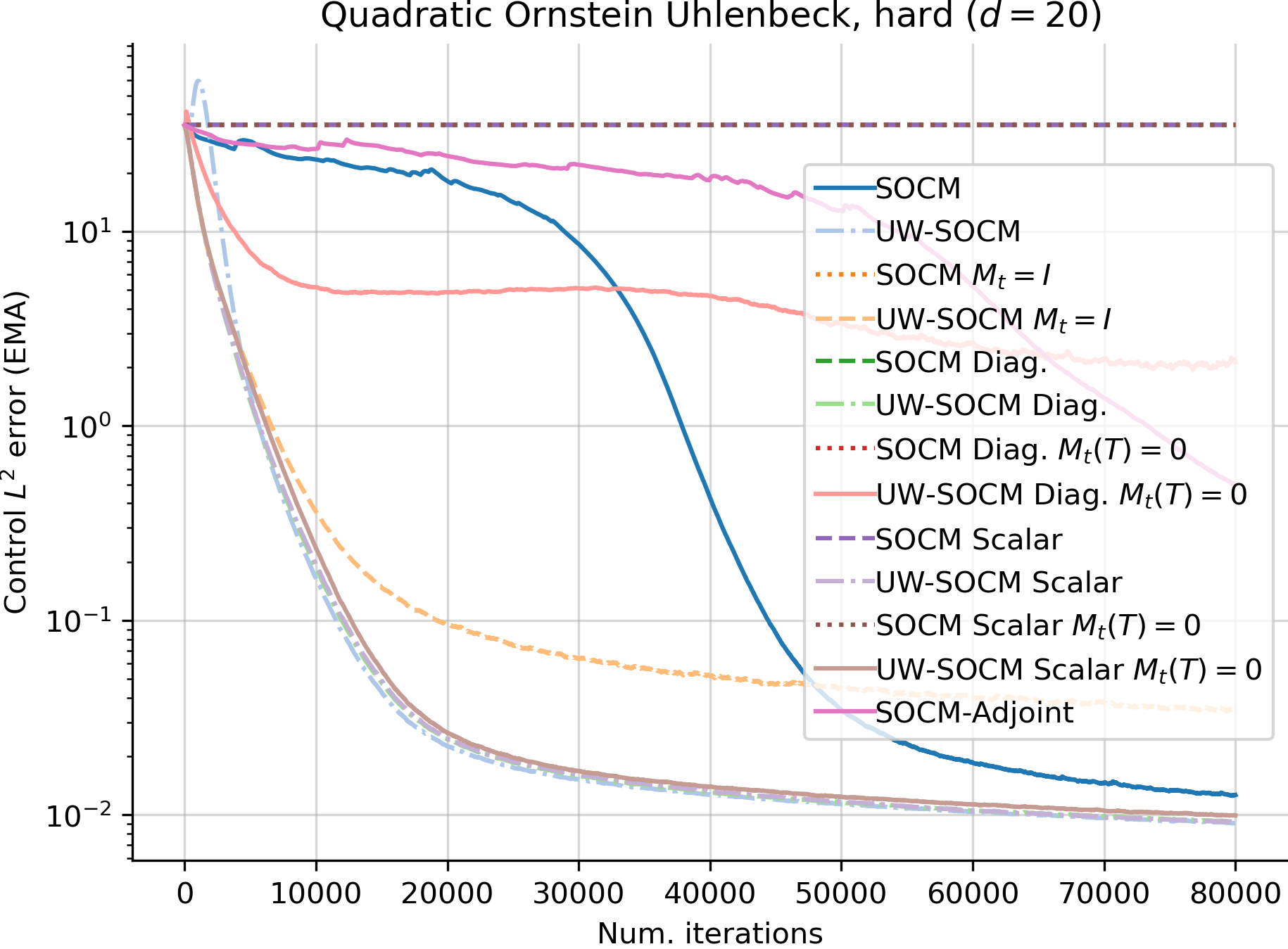}
%     \includegraphics[width=0.495\textwidth]{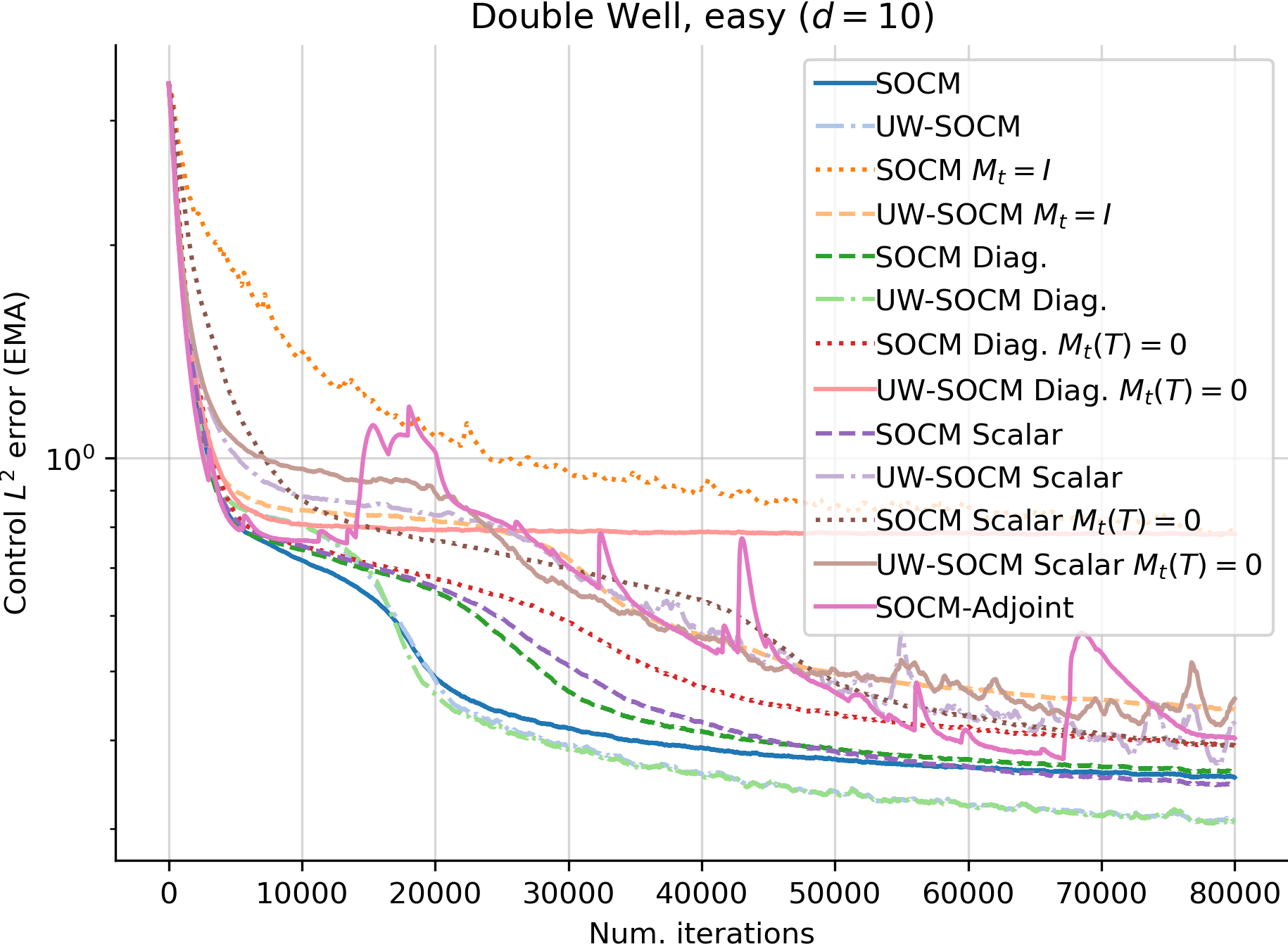}
%     \includegraphics[width=0.495\textwidth]{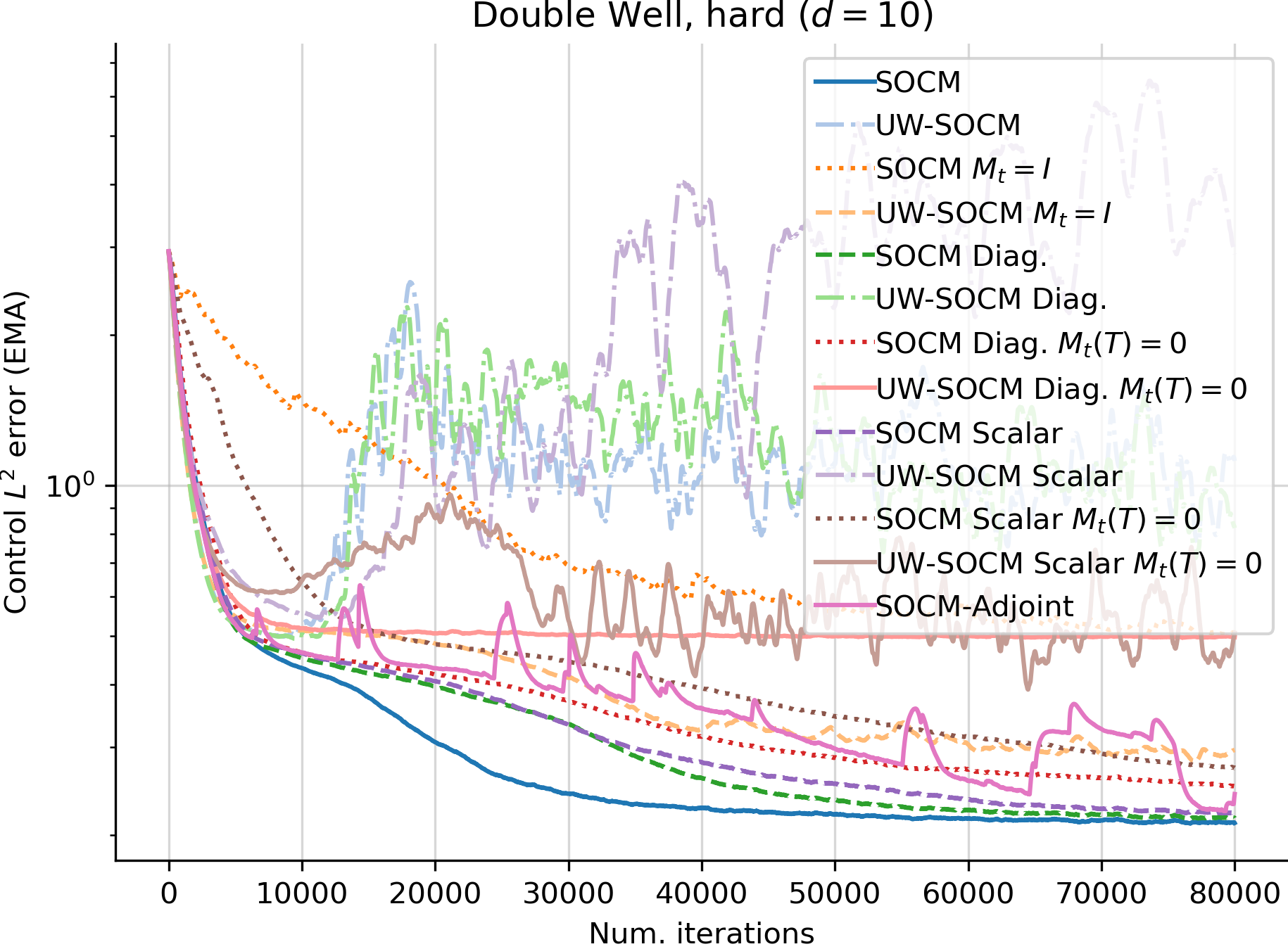}
%     \includegraphics[width=0.495\textwidth]{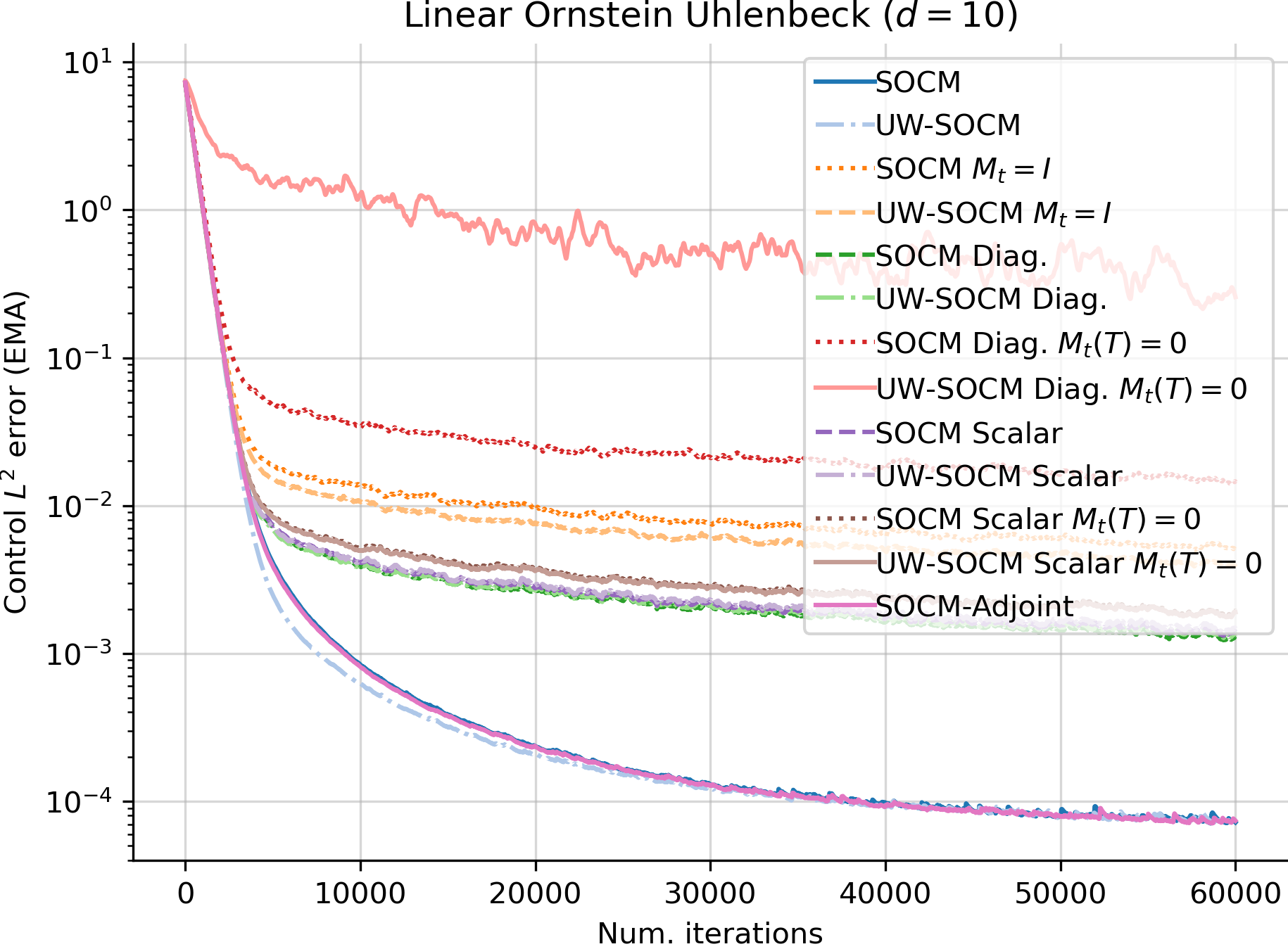}
%     \caption{Caption}
%     \label{fig:control_l2_errors_SOCM_methods}
% \end{figure}

\begin{figure} %[h]
    \centering
    \makebox[\textwidth][c]{%
        \includegraphics[width=0.5\textwidth]{figures/easy_EMA_norm_sqd_diff_optimal_10.png}%
        \includegraphics[width=0.5\textwidth]{figures/hard_EMA_norm_sqd_diff_optimal_10.png}%
    }\\
    \makebox[\textwidth][c]{%
        \includegraphics[width=0.5\textwidth]{figures/double_well_EMA_norm_sqd_diff_optimal_10.png}%
        \includegraphics[width=0.5\textwidth]{figures/double_well_0.5_EMA_norm_sqd_diff_optimal_10.png}%
    }\\
    \makebox[\textwidth][c]{%
        \includegraphics[width=0.5\textwidth]{figures/linear_EMA_norm_sqd_diff_optimal_10.png}%
    }
    \caption{Control $L^2$ error incurred by each loss function throughout training, on five different settings.}
    \label{fig:control_l2_errors_SOCM_methods}
\end{figure}

% \begin{figure} %[h]
%     \centering
%     \includegraphics[width=0.495\textwidth]{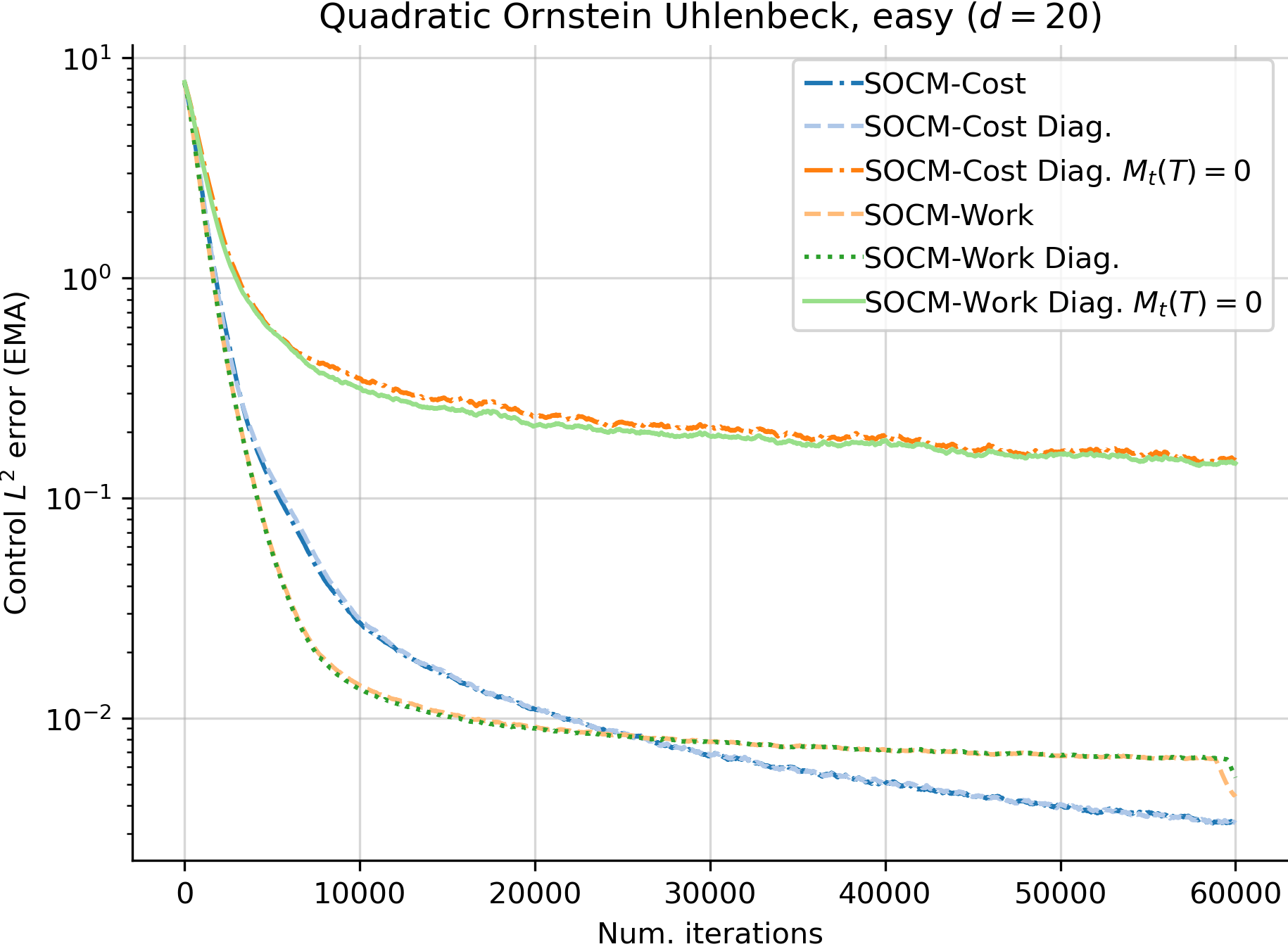}
%     \includegraphics[width=0.495\textwidth]{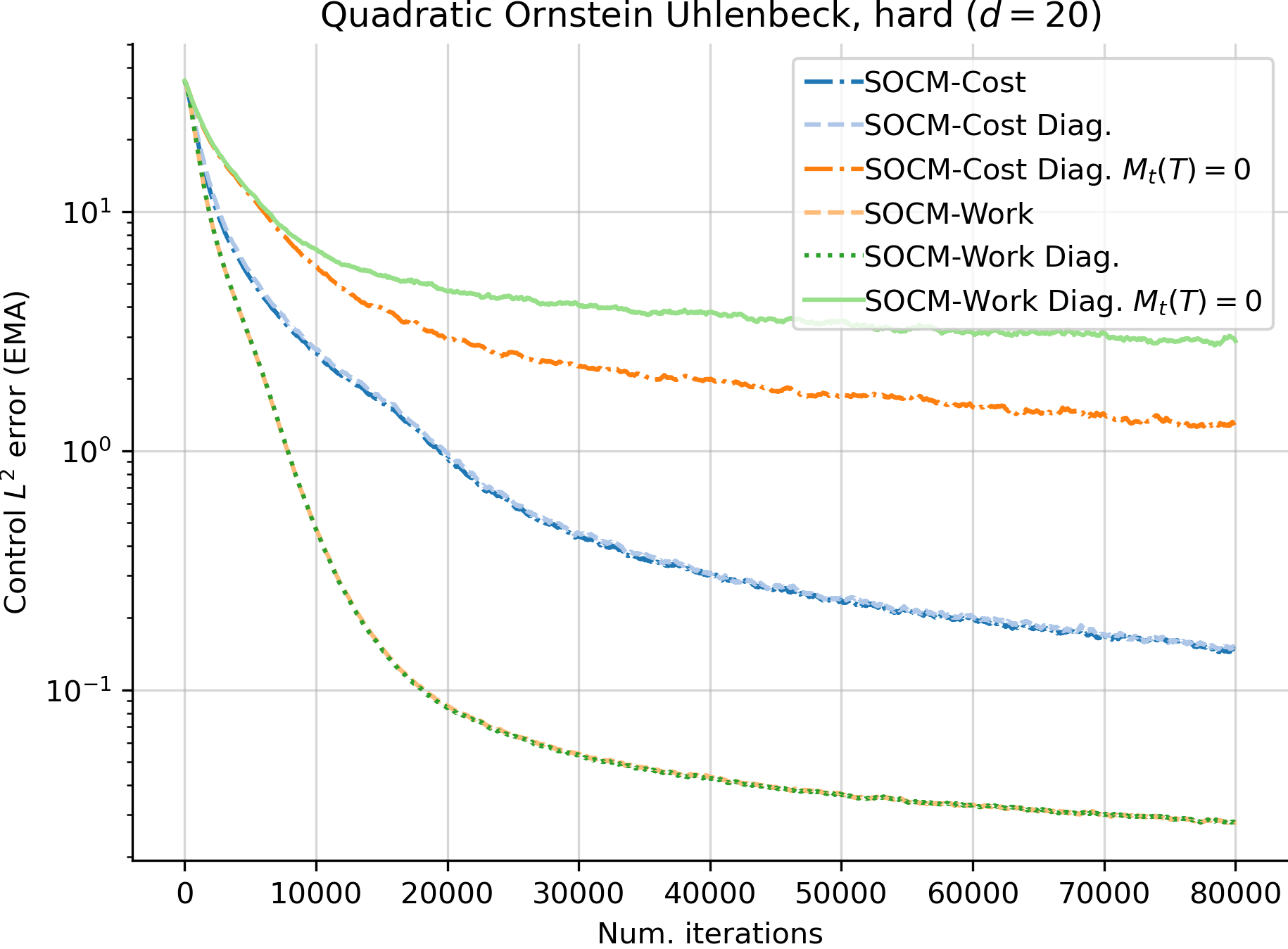}
%     \includegraphics[width=0.495\textwidth]{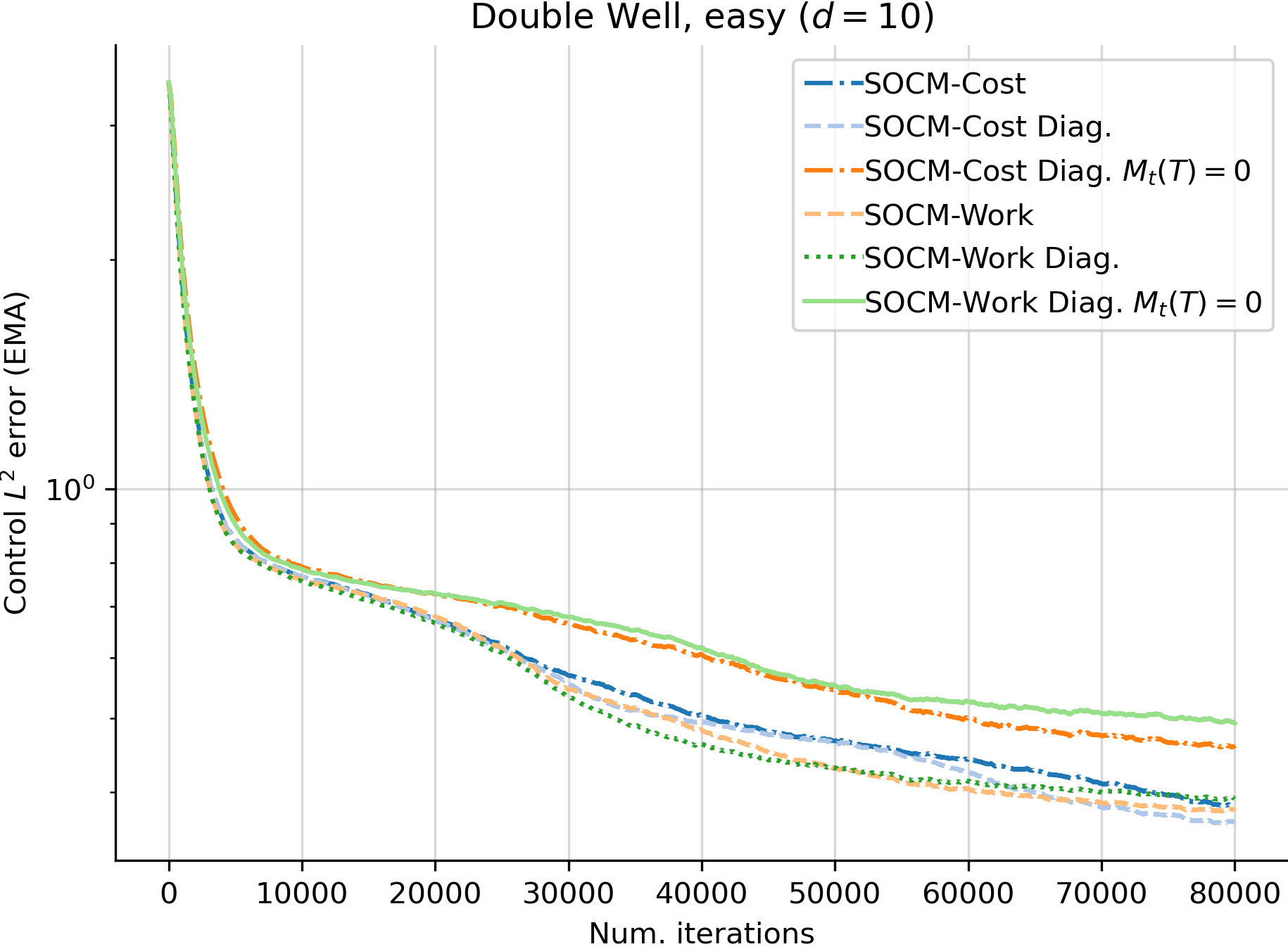}
%     \includegraphics[width=0.495\textwidth]{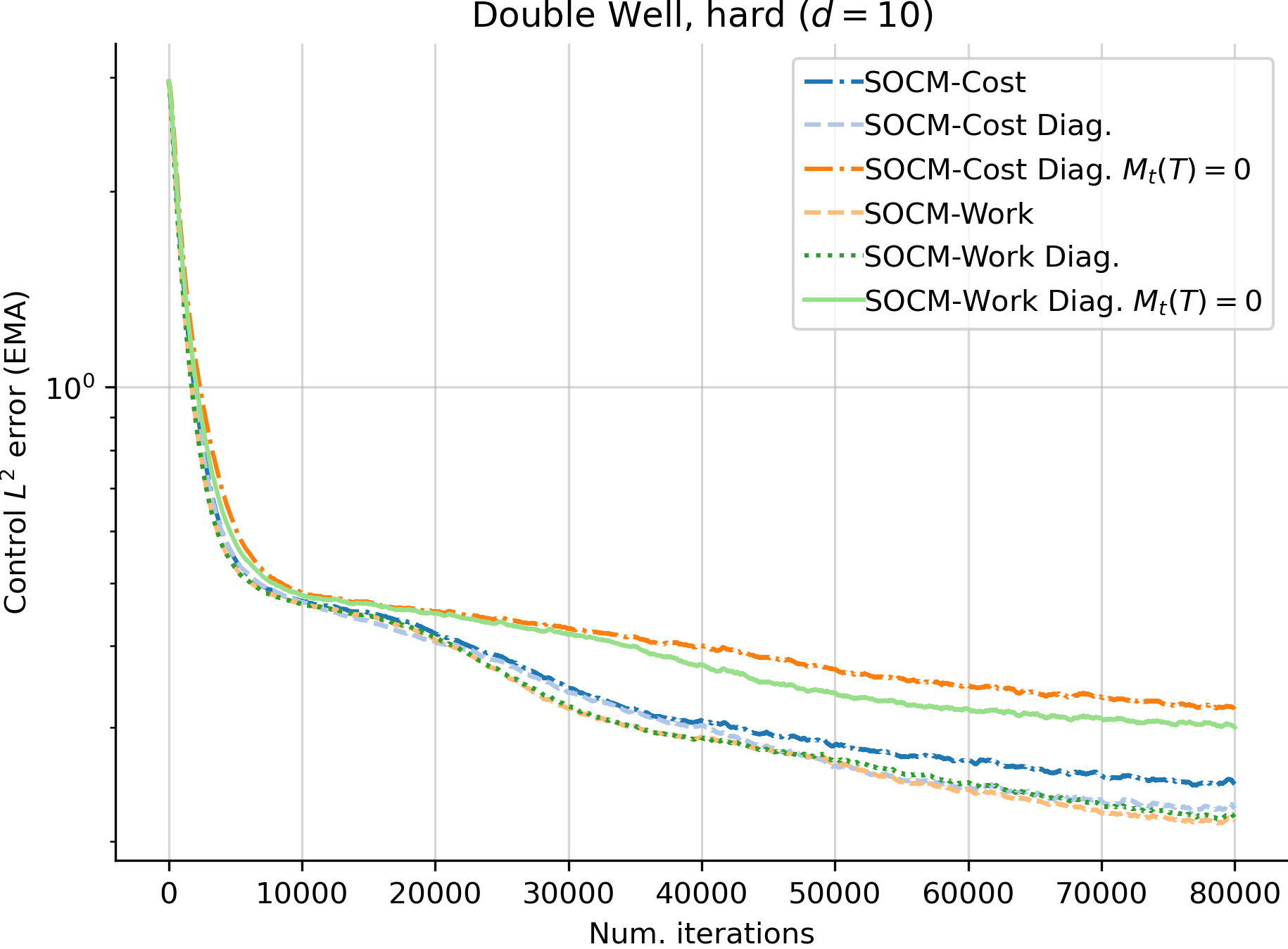}
%     \includegraphics[width=0.495\textwidth]{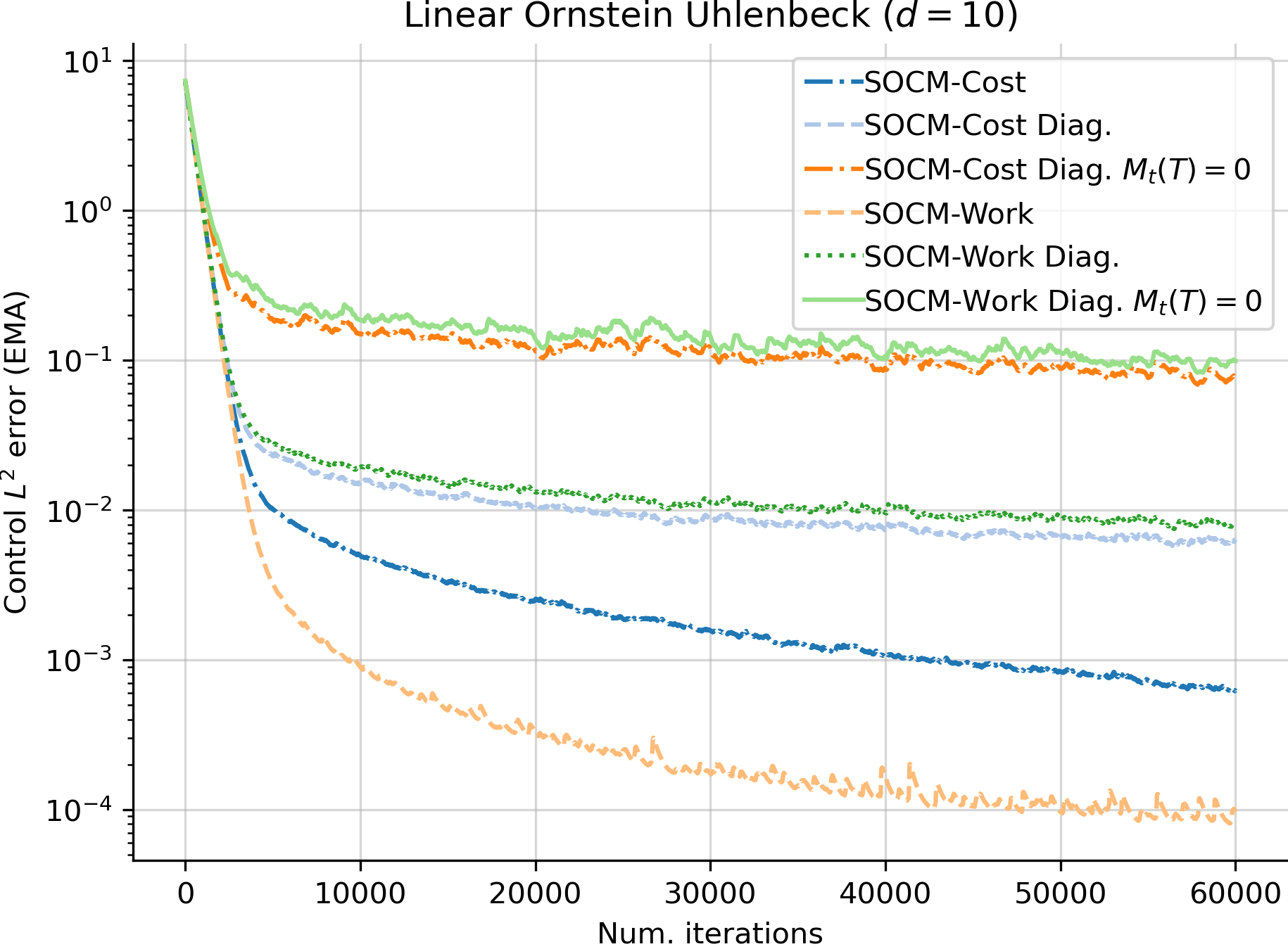}
%     \caption{Caption}
%     \label{fig:control_l2_errors_SOCM_cost_work}
% \end{figure}

\begin{figure} %[h]
    \centering
    \makebox[\textwidth][c]{%
        \includegraphics[width=0.5\textwidth]{figures/easy_EMA_norm_sqd_diff_optimal_14.png}%
        \includegraphics[width=0.5\textwidth]{figures/hard_EMA_norm_sqd_diff_optimal_14.png}%
    }\\
    \makebox[\textwidth][c]{%
        \includegraphics[width=0.5\textwidth]{figures/double_well_EMA_norm_sqd_diff_optimal_14.png}%
        \includegraphics[width=0.5\textwidth]{figures/double_well_0.5_EMA_norm_sqd_diff_optimal_14.png}%
    }\\
    \makebox[\textwidth][c]{%
        \includegraphics[width=0.5\textwidth]{figures/linear_EMA_norm_sqd_diff_optimal_14.png}%
    }
    \caption{Control $L^2$ error incurred by each loss function throughout training, on five different settings.}
    \label{fig:control_l2_errors_SOCM_cost_work}
\end{figure}

% \begin{figure} %[h]
%     \centering
%     \includegraphics[width=0.495\textwidth]{}
%     \includegraphics[width=0.495\textwidth]{}
%     \caption{Caption}
%     \label{fig:control_l2_errors_sampling}
% \end{figure}

% \begin{figure} %[h]
%     \centering
%     \makebox[\textwidth][c]{%
%         \includegraphics[width=0.56\textwidth]{figures/sampling_funnel_control_objective_mean_7.png}%
%         \includegraphics[width=0.56\textwidth]{figures/sampling_MG_control_objective_mean_7.png}%
%     }
%     \caption{Caption}
%     \label{fig:control_l2_errors_sampling}
% \end{figure}

\end{document}